\newcommand{\akash}[1]{{\textcolor{purple}{[{\bf A:} #1]}}}
\newcommand{\parthe}[1]{{\textcolor{red}{[{\bf P:} #1]}}}
\newcommand{\annotate}[1]{{\textcolor{blue}{[{\bf Note:} #1]}}}
\newcommand{\akash}[1]{}
\newcommand{\parthe}[1]{}
\newcommand{\annotate}[1]{}
\newif\if@restonecol
\newtheorem{assumption}{Assumption}
\newcommand{\Rmnum}[1]{\expandafter\@slowromancap\romannumeral #1@}
\newcommand{\defref}[1]{Definition~\ref{#1}}
\newcommand{\figref}[1]{Fig.~\ref{#1}}
\newcommand{\eqnref}[1]{\text{Eq.}~(\ref{#1})}
\newcommand{\secref}[1]{\textnormal{Section}~\ref{#1}}
\newcommand{\appref}[1]{Appendix \ref{#1}}
\newcommand{\thmref}[1]{Theorem~\ref{#1}}
\newcommand{\corref}[1]{Corollary~\ref{#1}}
\newcommand{\propref}[1]{Proposition~\ref{#1}}
\newcommand{\lemref}[1]{Lemma~\ref{#1}}
\newcommand{\assref}[1]{Assumption~\ref{#1}}
\newcommand{\algoref}[1]{Algorithm~\ref{#1}}
\newcommand{\lineref}[1]{Line~\ref{#1}}
\newcommand{\paren} [1] {\ensuremath{ \left( {#1} \right) }}
\newcommand{\bigparen} [1] {\ensuremath{ \Big( {#1} \Big) }}
\newcommand{\bracket}[1]{\left[#1\right]}
\newcommand{\curlybracket}[1]{\ensuremath{\left\{#1\right\}}}
\newcommand{\dist}[2]{\ensuremath{\left\langle#1,\:#2\right\rangle_{\cB}}}
\newcommand{\condcurlybracket}[2]{\ensuremath{\left\{#1\left\lvert\:#2\right.\right\}}}
\def \argmin {\mathop{\rm arg\,min}}
\newcommand{\nats}{\ensuremath{\mathbb{N}}}
\newcommand{\reals}{\ensuremath{\mathbb{R}}}
\newcommand{\sgn}[1]{\operatorname{sgn}\paren{#1}}
\newcommand{\cS}{{\mathcal{S}}}
\newcommand{\cD}{{\mathcal{D}}}
\newcommand{\cF}{{\mathcal{F}}}
\newcommand{\cB}{{\mathcal{B}}}
\newcommand{\cX}{{\mathcal{X}}}
\newcommand{\cC}{{\mathcal{C}}}
\newcommand{\cP}{{\mathcal{P}}}
\newcommand{\cY}{{\mathcal{Y}}}
\newcommand{\bc}{{\mathbf{c}}}
\newcommand{\bx}{{\mathbf{x}}}
\newcommand{\by}{{\mathbf{y}}}
\renewcommand{\tt}[1]{\textit{#1}}
\renewcommand{\sf}[1]{\textsf{#1}}
\newcommand{\bg}[1]{g_{#1}}
\def\BState{\State\hskip-\ALG@thistlm}
\newcommand{\MDA}{\textsf{MDA}}
\definecolor{shadecolor}{gray}{0.9}
\newcommand{\abs}[1]{\left|#1\right|}
\newcommand{\round}[1]{\left(#1\right)}
\newcommand{\inner}[1]{\left<#1\right>}
\newcommand{\norm}[1]{\left\|#1\right\|}
\def\x{\bm{x}}
\def\mc{\mathcal}
\def\mf{\mathfrak}
\def\msf{\mathsf}
\begin{document}

\title{Mirror Descent on Reproducing Kernel Banach Spaces}

\author{\name Akash Kumar \email akk002@ucsd.edu \\
       \addr Department of Computer Science and Engineering\\
       University of California San Diego\vspace{-2mm}
       \AND
       \name Mikhail Belkin \email mbelkin@ucsd.edu \\
       \addr Department of Computer Science and Engineering\\
       Halıcıoğlu Data Science Insititute\\
       University of California San Diego\vspace{-2mm}
       \AND
       \name Parthe Pandit \email pandit@iitb.ac.in \\
       \addr Center for Machine Intelligence and Data Science (C-MInDS)\\
Indian Institute of Technology Bombay, India\vspace{-5mm}}


\maketitle

\begin{abstract}
Recent advances in machine learning have led to increased interest in reproducing kernel Banach spaces (RKBS) as a more general framework that extends beyond reproducing kernel Hilbert spaces (RKHS). These works have resulted in the formulation of representer theorems under several regularized learning schemes. However, little is known about an optimization method that encompasses these results in this setting. This paper addresses a learning problem on Banach spaces endowed with a reproducing kernel, focusing on efficient optimization within RKBS. To tackle this challenge, we propose an algorithm based on mirror descent (MDA). Our approach involves an iterative method that employs gradient steps in the dual space of the Banach space using the reproducing kernel.

We analyze the convergence properties of our algorithm under various assumptions and establish two types of results: first, we identify conditions under which a linear convergence rate is achievable, akin to optimization in the Euclidean setting, and provide a proof of the linear rate; second, we demonstrate a standard convergence rate in a constrained setting. Moreover, to instantiate this algorithm in practice, we introduce a novel family of RKBSs with \(p\)-norm (\(p \neq 2\)), characterized by both an explicit dual map and a kernel.

\end{abstract}

\begin{keywords}
  reproducing kernel Banach spaces, kernel methods, linear rate, mirror descent, optimization error
\end{keywords}

\section{Introduction}

In supervised machine learning, we are given a set of observations $\cD_n = \curlybracket{(\x_i,y_i)}_{i=1}^n$, where the inputs $\x_i$, $i = 1, 2,\ldots,n$ are sampled from a data space $\cX$ with corresponding outputs $y_i$ from a label set $\cY$. The task is to find a function $\hat{f}: \cX \to \cY$, chosen from a \textit{a priori} fixed model class $\cF$, that best predicts $\hat{f}(\x_i) \approx y_i$, $i = 1,2,\ldots$. The choice of the model class $\cF$ is usually based on a prior belief in the expressivity of predictive functions that could lead to the optimal classifier $f^*$ (which may not be in $\cF$), as dictated by the nature/environment.

Typically, to find $\hat{f}$, we consider a minimization problem over the observations $D_n$ with respect to a loss function $\ell: \cY \times \cY \to \reals$ as follows:
\begin{align}
    \argmin_{f \in \cF} \frac{1}{n} \sum_{i=1}^n \ell(f(\x_i),y_i) \label{eq: optERM}
\end{align}

Various optimization techniques have been proposed to minimize the optimization error in approximating a solution to \eqnref{eq: optERM}, with first-order gradient methods~\citep{nemirovski1983problem,yuri,Lee2016GradientDO} being the most well-known in the parametric setting, e.g., neural networks.

The choice of $\cF$ plays a pivotal role in minimizing the optimization error while approximating the optimal classifier $f^*$ ~\citep{tradeoffML}. A fundamental question arises: \textit{Can a function space $\cF$ be constructed to minimize approximation error without trading off with optimization error?}

Traditionally, kernel methods, particularly in reproducing kernel Hilbert spaces (RKHSs), have been proposed to bound and lower the approximation error in a regularized empirical setting 
\begin{gather}
    \min_{f \in \cF} \sum_{i=1}^n \ell(y_i, f(x_i)) + \lambda\cdot \Psi(f),
    \textnormal{where } \Psi: \cF \to \reals_{\ge 0}, \lambda > 0, \label{eq: regular}
\end{gather}
However, while RKHSs exhibit function space optimality~\citep{Scholkopf2001LearningWK}, where the optimal solution to \eqnref{eq: regular} has a representer that facilitates efficient optimization methods, they may lack expressive power in terms of approximation error. 

To overcome this limitation and enrich the diversity of geometric structures and norms, recent research has explored alternative model classes, such as non-Hilbertian \textit{Banach spaces} (characterized by norms that don't adhere to the parallelogram law), aiming for improved approximation capabilities. One noteworthy framework in this context is the \textit{reproducing kernel Banach spaces} (RKBS), as introduced in \cite{zhang09b,Lin2022}. Within the RKBS framework, several significant problems—such as minimal norm interpolation, regularization networks, support vector machines, sparse learning, and multi-task learning—have been formulated and investigated~\citep{song201396,zhang09b,Ye13,Zhang2012RegularizedLI,xu2019generalized,Xu2023SparseML,wang2023sparse}.

Additionally, significant efforts have been made to characterize the function spaces learned by neural networks through the perspective of Banach spaces~\citep{Bach2014BreakingTC,Parhi2019TheRO,Ongie2020A,Parhi2020BanachSR,Wright2021TransformersAD}. More recently, there has been a shift towards treating this characterization as an optimization problem over RKBS~\citep{Spek2022DualityFN,BARTOLUCCI2023194,shilton23a, Parhi2023FunctionSpaceOO}.

Although the aforementioned findings have significantly contributed to our understanding of the approximation capabilities of Banach spaces, the extent to which these formulations address the corresponding \textit{optimization error}, specifically in terms of statistically efficient and/or provable methods for identifying solutions to \eqnref{eq: regular}, remains unexplored in the setting of reproducing kernel Banach spaces.

In this work, we address this gap by proposing an algorithm based on \textit{mirror descent}~\citep{nemirovski1983problem}. We focus on a general minimization problem defined over the domain of reproducing kernel Banach spaces and demonstrate that the reproducing property allows us to prescribe a gradient update in the dual of the Banach space. Our contributions aim to bridge the gap between approximation and optimization errors in the context of function space design, providing a novel perspective and practical algorithmic results for RKBS.

\begin{figure}[t!]
    \centering
    \begin{minipage}{0.45\textwidth}
        \centering
        \begin{tikzpicture}[scale=1.8, every node/.style={scale=1.2}]

    \tikzstyle{every node}=[font=\small]
    \tikzstyle{point}=[circle, fill=black, inner sep=0pt, minimum size=4pt]

    \draw [thick] (-1,0) ellipse (0.6 and 1);
    \draw [thick] (1,0) ellipse (0.6 and 1);

    \node [point,label=above:$f_t$] (x) at (-1,0.3) {};
    \node [point,label=above:$f_{t-1}$] (y) at (-1,-0.5) {};
    \node [point,label=right:$g_{t}$] (xprime) at (1,0.5) {};
    \node [point,label=right:$g_{t-1}$] (yprime) at (1,-0.5) {};

    \node [below] at (-1,-1) {$\mathcal{B}$};
    \node [below] at (1,-1) {$\mathcal{B}^*$};
    \node [below] at (-.03,-0.68) {$\partial \Phi$};
    \node [below] at (-.03,.92) {$\partial \Phi^{-1}$};
    \node [below] at (1.45,0.1) {\textnormal{gradient step}};
    \draw [<-, thick,color=red, dashed] (x) .. controls (-0.5,0.7) and (0.5,0.7) .. (xprime);
    \draw [->, thick,color=red, dashed] (y) .. controls (-0.5,-0.7) and (0.5,-0.7) .. (yprime);
    \draw [->, thick,color=gray, dash dot] (yprime) .. controls (0.8,0) and (0.8,0.2) .. (xprime);

\end{tikzpicture}

        \caption{Functional MDA}
    \end{minipage}
    \hfill
    \begin{minipage}{0.45\textwidth}
        \centering
        \begin{tikzpicture}[scale=1.8, every node/.style={scale=1.2}]

    \tikzstyle{every node}=[font=\small]
    \tikzstyle{point}=[circle, fill=black, inner sep=0pt, minimum size=4pt]

    \draw [thick] (-1,0) ellipse (0.6 and 1);
    \draw [thick] (1,0) ellipse (0.6 and 1);

    \node [point,label=above:$f_t$] (x) at (-1,0.3) {};
    \node [point,label=above:$f_{t-1}$] (y) at (-1,-0.5) {};
    \node [point,label=right:$\bc^{(t)}$] (xprime) at (1,0.5) {};
    \node [point,label=right:$\bc^{(t-1)}$] (yprime) at (1,-0.5) {};

    \node [below] at (-1,-1) {$\mathcal{B}$};
    \node [below] at (1,-1) {$\reals^n$};
    \node [below] at (1,-1.2) {(Induced by $\curlybracket{K(\bx_i,\cdot)}$)};
    \node [below] at (-.03,-0.68) {$\partial \Phi$};
    \node [below] at (-.03,.92) {$\partial \Phi^{-1}$};
    \node [below] at (1.45,0.1) {\textnormal{gradient step}};
    \draw [<-, thick,color=red, dashed] (x) .. controls (-0.5,0.7) and (0.5,0.7) .. (xprime);
    \draw [->, thick,color=red, dashed] (y) .. controls (-0.5,-0.7) and (0.5,-0.7) .. (yprime);
    \draw [->, thick,color=gray, dash dot] (yprime) .. controls (0.8,0) and (0.8,0.2) .. (xprime);

\end{tikzpicture}

        \caption{Our MDA for RBKSs}
    \end{minipage}
    \caption{
The schematic diagram for the mirror descent algorithm is presented. The first image represents the general functional form of the mirror descent algorithm. In the second image, we illustrate the update rule for Reproducing Kernel Banach Spaces (RKBS), which requires updates in \(\mathbb{R}^n\) corresponding to the kernel evaluations on the training data points.\vspace{-3mm}}
    \label{fig: md}
\end{figure}
We summarize the contributions of the work below:
\begin{enumerate}
    \item \textbf{Mirror Descent:} We address a general minimization problem given by \eqnref{eq: optERM}, encompassing both regularized and unregularized settings. In this formulation, the function space $\cF$ is a non-Hilbertian Banach space, signifying that the norm, denoted as $||\cdot||_\cF$, does not correspond to an inner product. 
    Unlike Hilbert spaces, Banach spaces exhibit a geometric difference in that their dual spaces need not be naturally identifiable with the underlying space. Thus, the gradient of the loss functional $\ell$ in the first argument does not exist within the function space.
    Consequently, we turn to the dual of the Banach space, where the gradient steps are executed. These updates in the dual space are then reflected back to the primal Banach space, rendering the algorithm as mirror descent over the Banach space.
    
    Given that derivatives and updates are computed over functions, this approach represents a functional form of mirror descent, distinguishing it from many other works. Furthermore, our focus extends to Banach spaces endowed with a reproducing property, defining them as reproducing kernel Banach spaces. This unique characteristic enables the representation of the mirror descent algorithm as kernel evaluations for updates in the dual space. This, in turn, facilitates simpler and more tractable updates in the dual space expressed in terms of a kernel function. 
    
    For example, an instant of the algorithm for optimizing square-error loss functionals with 
    regularization parameter $\lambda > 0$ and using the regularization functional as a mirror map has the form (see \figref{fig: md}):
    \textcolor{darkgray}{
    \begin{align*}
        g_t &\gets (1 - 2\eta \lambda)\cdot g_{t-1} - 2\eta \times (\tt{loss differential}) \\ 
    f_{t} &\gets \sf{InverseMirrormap}(g_t)
    \end{align*}}
    where $f_t$ and $g_t$ are in the primal and dual Banach spaces respectively, and $\eta$ is the learning rate. Now, if the reproducing kernel $K$ is known, then this could be simplified as \textcolor{darkgray}{
    \begin{align*}
    \bc^{(t)} &\gets (1 - 2\eta \lambda)\bc^{(t-1)} - 2\eta(f_{t-1}(\x) - Y)\\ 
    f_{t} &\gets  \sf{InverseMirrormap} (g_t),\,\,g_t = \sum_{i = 1}^n \bc^{(t)}_i \cdot K(\x_i,\cdot)
\end{align*}}
    where $\bc^{(t)} \in \reals^n$ ($n$ depends on the training set size), and $f_t(\x) = (f_t(\x_1), f_t(\x_2),\ldots, f_t(\x_n))$, $Y = (y_1,y_2,\ldots,y_n)$.
    
    We provide the relevant definitions on Banach spaces and the corresponding functional analysis in \secref{sec: prelim}, with the MDA discussed in \secref{subsec: probsetup} in details.

\begin{figure}[t]
    \centering
    \includegraphics[width=.9\linewidth]{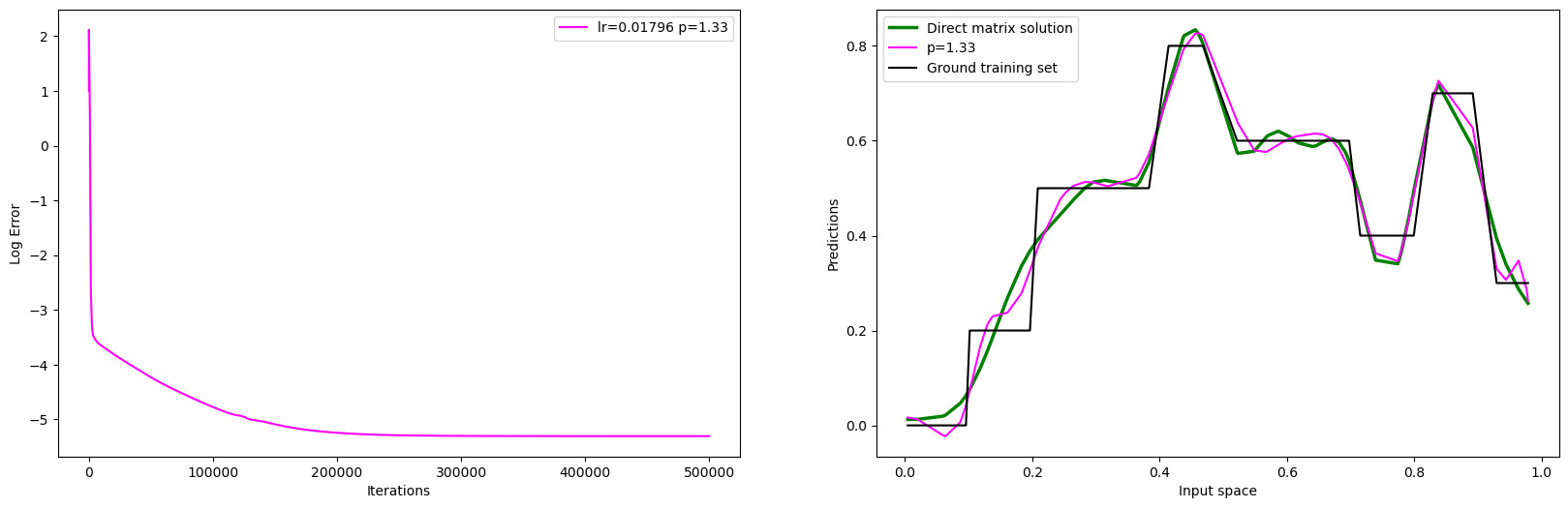}
    \caption{
We employ mirror descent on \( p \)-norm Reproducing Kernel Banach Spaces (RKBS) for a step function defined on the interval \([0,1]\). Our training set consists of 80 points, with 15 centers randomly selected from this set. The first image illustrates the logarithm of the training error over iterations. The second image compares the approximation for \( p = 1.33 \) (depicted by the purple curve) against a solution obtained using NumPy (shown by the green curve). In this comparison, the NumPy solution solves \( K_g \cdot \alpha = Y \), where \( K_g \) is the similarity matrix computed over the (training, centers) pairs using a Gaussian kernel.
}
    \label{fig:example_image}
    \vspace{-2mm}
\end{figure}

    \textbf{Instantiation on an example RKBS:} To implement the Mirror Descent Algorithm (MDA), we construct a finite-center based Reproducing Kernel Banach Space (RKBS) (see \secref{sec: example}). Informally, the Banach space is defined as the set of functions of the following form:
 \begin{align*}
        \cB := \curlybracket{f : f = \sum_{i=1}^n \alpha_i\cdot H(\cdot,\bc_i), \paren{\sum_{i=1}^n\paren{|\alpha_i|^p}}^{\frac{1}{p}} \lneq \infty}
    \end{align*}

    where $\alpha_i \in \mathbb{R}$, $i = 1,2,\ldots, n$, $\mathbf{c}_i$ are samples/centers in a fixed sample space, and the bivariate function $H$ is defined over pairs of points in that sample space. With minimal assumptions on the linear independence of $H(\cdot,\mathbf{c}_i)$, $i =1,2,\ldots$, we can demonstrate that it is a reproducing kernel Banach space, and a unique kernel can be explicitly specified.

Additionally, we introduce \(\ell_p\)-type mirror maps, also referred to as dual maps, which, when combined with the kernel, precisely recover the gradient iterations of the Mirror Descent Algorithm (MDA) within this Banach space. In the context of nonregularized learning, the iterative update steps can be rigorously expressed as follows:
\begin{align*}
    \beta^{(t)} &\gets \beta^{(t-1)} - \eta \cdot \left(\widehat{H}^{\top} \left(\widehat{H}\alpha^{(t-1)} - Y\right)\right), \\
    \alpha^{(t)} &\gets \frac{\sgn{\beta^{(t)}} |\beta^{(t)}|^{q-1}}{\norm{\beta^{(t)}}_q^{q-1}},
\end{align*}
where \(\beta\) and \(\alpha\) represent the dual and primal parameters, respectively, and \(\widehat{H}\) denotes the similarity kernel matrix derived from the training data and centers. This construction of the RKBS is detailed further in \secref{sec: example}, where we also demonstrate its application to a learning task. An illustration is provided in \figref{fig:example_image} in which we show the convergence of the $p$-norm RKBS for learning a step function where $H$ is an asymmetric Lab-RBF kernel~\citep{he2024learninganalysiskernelridgeless}.
    

     \item \textbf{Convergence of MDA:} More generally, the dual space to a Banach space, comprising real-valued linear transformations defined over the Banach space, may exhibit an unfavorable topology, resulting in the lack of a proper mirroring operation in the mirror descent algorithm. To mitigate such situations, we assume the RKBS is \textit{reflexive}, meaning the dual of the dual space is isometrically isomorphic to the Banach space. 
    
    Our main theoretical result establishes the convergence of MDA for reflexive Banach spaces that are Hilbertizable (defined as spaces isomorphic to a Hilbert space\footnote{\tt{not} isometrically isomorphic}). This is stated informally as (formally as \thmref{thm: uncons})
    
    \textbf{Informal Theorem 1}: \tt{If the underlying reflexive Banach space is Hilbertizable, and the loss functional and mirror maps possess properties of (functional notions) $\mu$-strong convexity (see \defref{def: strongconv}) and $\gamma$-smoothness (see \defref{def: smooth}), then in the unconstrained optimization setting (regularized and non-regularized) with a unique minimizer, MDA achieves linear rate depending on the smoothness and strong convexity parameters and a choice of learning rate.}

    We show a negative result on the existence of functionals possessing both $\mu$-strong convexity and $\gamma$-smoothness for Banach spaces \tt{not} isomorphic to a Hilbert space.
    Under the same notion, this rate also extends under the (functional form of) \tt{Polyak-\L{}ojasiewicz inequality}, generally studied as a weaker notion than convexity for the loss functional. 
    
    Furthermore, under mild assumptions, we extend the analysis to demonstrate the convergence of the algorithm in the constrained setting, achieving a rate of $\mathcal{O}\left(\frac{1}{\sqrt{t}}\right)$, where $t$ is the number of iterations. Formal definitions of (functional) smoothness and convexity of a functional (defined on Banach spaces) are detailed in \secref{subsec: funcopt}.
\end{enumerate}


\subsection{Related Work}
\paragraph{Mirror Descent:} The mirror descent method, introduced in \cite{nemirovski1983problem} for convex optimization problems and later analyzed in \cite{beck2003mirror}, is a well-established first-order method known for its effectiveness in designing algorithms for non-Euclidean geometries. It is particularly recognized for its almost dimension-free rate of convergence, as documented in works like \cite{beck2003mirror, BenTal2001TheOS, Censor1992ProximalMA, eckstein, BianchiMD}. More recently,  implicit regularization in deep neural networks via mirror descent has been studied in \cite{mdnavid, Sun2023AUA}.

The key principle of the method involves a strongly convex potential function that induces a Bregman divergence (metric) over the problem space, as detailed in \cite{Bauschke2017ADL,bubeck2015convex,azizan2021stochastic}. Subsequently, a gradient descent step is taken in the dual space, which is considered a space of transformations. Previous research has primarily focused on scenarios where the underlying problem space is a Euclidean vector space. In our work, we extend this framework to non-Hilbertian Banach spaces of functions. In this setting, we propose a mirror descent algorithm tailored for a Banach space with a reproducing property, demonstrating favorable convergence properties under standard assumptions.



\paragraph{RKBS:} Formally, the concept of reproducing kernel Banach spaces (RKBS) was introduced by \cite{zhang09b} and later expanded upon in \cite{Lin2022}. This framework emerged as a systematic approach to studying learning in Banach spaces, serving as an extension of the well-established reproducing kernel Hilbert spaces (RKHS). In contrast to Hilbert spaces, RKBSs exhibit more intricate geometrical forms and diverse norms. Additionally, RKBS accommodates various kernel functions, including asymmetric kernels~\citep{zhang2018categorization} and non-positive definite kernels~\citep{bharath2011}. Recent works have proposed different constructions of RBKSs, such as reflexive RKBS in \citet{Lin2022}, semi-inner product (s.i.p) RKBS (refer to \cite{zhang09b, ZHANG20111}), $\ell^1$ norm RKBS in \citet{song201396}, and a class of $p$-norm RKBSs utilizing generalized Mercer kernels~\citep{xu2019generalized}.

While these works have established Representer theorems for various minimization problems in machine learning (detailed in \cite{Lin2022} and \cite{Wang2023SparseRT}), a notable gap persists: the absence of a known computational algorithm to find an optimal solution. Our work addresses this gap by presenting an algorithm specifically designed for reflexive RKBSs. We focus on scenarios where an explicit functional form of the point evaluation functional in the dual space of the RKBS can be identified, represented by a unique reproducing kernel.


\paragraph{Learning in Banach spaces:} There is a growing interest in understanding and establishing connections between learning problems formulated over Banach spaces~\citep{bennett2000duality, argyriou10a, micchelli2004, micchelli2007, unser16, Parhi2019TheRO, srinivasan2022contracting, Parhi2023FunctionSpaceOO,shilton23a}.

Numerous significant problems, including $p$-norm coefficient-based regularization \citep{SHI2011286, song201396, tong10}, large-margin classification~\citep{der07a, fasshauer2015115, zhang09b}, lasso in statistics~\citep{tibshirani}, function spaces of trained neural networks~\citep{Parhi2021WhatKO, shilton23a}, random Fourier features (RFF) for asymmetric kernels~\citep{He2022RandomFF}, and sparsity in machine learning~\citep{song201396,Xu2023SparseML}, have been investigated within the framework of Banach spaces.


\paragraph{Neural Networks and Banach spaces:} 
Recently, there has been a growing interest in understanding deep neural networks through the framework of learning schemes in Banach spaces~\citep{BARTOLUCCI2023194, venkatesh2023, E2018APE, Spek2022DualityFN, Parhi2023FunctionSpaceOO, chung2023barron}. A pivotal question in this context is to characterize the function spaces that deep networks learn or represent~\citep{Bach2014BreakingTC,Gribonval2019ApproximationSO,Ongie2020A,Parhi2020BanachSR,Savarese2019HowDI}. In a series of works, \citet{Parhi2019TheRO,Parhi2020BanachSR, Parhi2021WhatKO} established a representer theorem connecting deep ReLU networks with data fitting problems over functions from specific Banach spaces. 
This line of study was further extended for neural networks with univariate nonlinearity to reproducing kernel Banach spaces~\citep{BARTOLUCCI2023194,Spek2022DualityFN}, and later to multivariate nonlinearity in \citet{Parhi2023FunctionSpaceOO}. Consequently, any computational algorithm developed for optimization in RKBS would have immediate implications for these results. Additional references on this topic can be found in \citep{Chen2023PrimalAttentionST,chung2023barron,Wright2021TransformersAD}.

\vspace{-5mm}





\subsection{Roadmap}

In Section \ref{sec: prelim}, we introduce the necessary notation, mathematical formulations, and key definitions related to Reproducing Kernel Banach Spaces (RKBS). Section \ref{subsec: probsetup} outlines the problem setup and describes the functional mirror descent algorithm. In Section \ref{sec: mainthm}, we present our main theoretical results, along with the corresponding proofs of the algorithm's convergence properties. Finally, Section \ref{sec: example} provides the construction of the $p$-norm finite-center RKBS and a detailed implementation of the mirror descent algorithm. 

\section{Preliminaries}\label{sec: prelim}

\noindent\tt{Notations}: Let a set of letters $f, g, h$ denote elements of a vector space or a dual space. An element of the input space, generally denoted by $\cX$, is represented as $\x$. Vectors in a Euclidean space are denoted either as $\alpha$ or $\beta$. The learning rate is denoted as $\eta$. Function spaces, such as Hilbert space, Banach space, among others, are denoted as $\mc H,\mc B, \mc C$. Bivariate functions (over fixed input spaces) or functionals over fixed function spaces are denoted as $G, H, K$. Bregman divergence over function spaces is denoted by the notation $\mf D$. The dual of an element/space is denoted by $\_^*$. There are two sets of notations for non-linear functionals: $\msf A, \msf B,\msf C,\msf L$ (e.g., loss functional) and $\Phi, \Psi$ (e.g., mirror maps). All the constants in the theoretical results are denoted by Greek symbols--$\rho, \kappa, \lambda, \gamma$.

\noindent\tt{Theory of Reproducing kernel Banach spaces}: Assume $\cX \subseteq \reals^d$ is a \tt{locally} compact Hausdorff space (unless stated otherwise). We consider a family of real-valued functions $\cB:= \condcurlybracket{f}{f: \cX \to \reals}$ over $\cX$. We assume that $\cB$ is a complete \tt{vector space} endowed with the norm $||\cdot||_{\cB}$, \tt{i.e.} $(\cB,||\cdot||_{\cB})$ forms a \tt{Banach space}. Furthermore, we impose the condition that the norms are non-Hilbertian\footnote{(\tt{i.e.} they don't satisfy the parallelogram law)}. 

For a Banach space $\cB$, we denote its \tt{dual} space as $\cB^*$, \tt{i.e.} a set of of real-valued linear transformations on $\cB$, such that for any $g \in \cB^*$
\begin{align*}
    ||g||_{\cB^*} := \sup_{f \in \cB,\, ||f||_{\cB} \le 1} g(f)
\end{align*}
Thus, by definition, the dual space $(\cB^*, ||\cdot||_{\cB^*})$ is a Banach space. A transformation $g \in \cB^*$ induces a natural action on any $f \in \cB$ which is denoted by the bilinear operator, aka duality bracket, $\inner{\cdot,\cdot}_{\cB}: \cB \times \cB^* \to \reals$ such that
\begin{align*}
    \inner{f, g}_{\cB} = g(f)
\end{align*}
Since $g$ is a linear transformation the operation is bilinear in two arguments of the bracket. \noindent The \textbf{bidual} space of a Banach space $\cB$ is the dual of the dual space and is denoted by 
$$\cB^{**} := (\cB^*)^*$$
There is a natural map $\iota = \iota_{\cB} : \cB \to \cB^{**}$ which assigns to every element $f \in \cB$ the linear functional $\iota(f) : \cB^* \to \reals$ whose value on any $\bg{} \in \cB^*$ is obtained by evaluating the linear bounded functional $\bg{}: \cB \to \reals$ at $f$, \tt{i.e.} $\bg{}(f)$.
\begin{align}
  \iota(f)(\bg{}) = \bg{}(f), \quad \forall f \in \cB,\,\, \forall \bg{} \in \cB^* \label{eq: isometric}  
\end{align}
As a consequence of Hahn-Banach theorem, the linear map $\iota$ is an isometric embedding. Conversely, an interesting class of Banach spaces are ones where $\cB^{**}$ can be identified as $\cB$, more formally, they are called \tt{reflexive} spaces as studied in \cite{soloman}, also defined below
\begin{definition}[Reflexive Banach spaces] A real normed vector space
$\cB$ is called reflexive if the isometric embedding $\iota : \cB \to \cB^{**}$ in \eqnref{eq: isometric} is bijective. 
\end{definition}
In other words, a reflexive Banach space $\cB$ has the property that $\cB \simeq \cB^{**}$. In this section, we would use symbol $\cB$ for the bidual space as if they \tt{identify} the same.

In order to design a computational algorithm, we are interested in understanding certain properties of a Banach space, in particular, its \tt{reproducing} property, aka a kernel. We adopt the definitions and treatment for a reproducing kernel Banach space (RKBS) as detailed in \cite{Lin2022}. We provide formal definitions in the following before stating some useful connections.
\begin{definition}[Reproducing kernel Banach spaces (RKBS)]\label{def: RKBS} A reproducing kernel Banach space $\cB$ on a prescribed nonempty set $\cX$ is a Banach space of functions on $\cX$ such that for every ${\x} \in \cX$, the point evaluation functional~\footnote{$\delta_{\x}(f) = f(\x)$ for any $f \in \cB$ and $\x \in \cX$} $\delta_{{\x}}\in\cB^*$ on $\cB$ is continuous, i.e., there exists a positive constant $C_{{\x}} \geq 0$ such that
\begin{align*}
    |\delta_{{\x}}(f)| = |f({\x})| \le C_{{\x}}\cdot||f||_{\cB},\quad \forall f \in \cB
\end{align*}
    
\end{definition}

\begin{definition}[Reproducing kernel]\label{def: RK}
Assume \( \cB_1 \) is an RKBS on a set \( \Omega_1 \). If there exists a Banach space \( \cB_2 \) of functions on another set \( \Omega_2 \), a continuous bilinear form \( \langle \cdot, \cdot \rangle_{\cB_1 \times \cB_2} \), and a function \( K \) on \( \Omega_1 \times \Omega_2 \) such that \( K({\x}, \cdot) \in \cB_2 \) for all \( {\x} \in \Omega_1 \) and \( K(\cdot, \by) \in \cB_1 \) for all \( \by \in \Omega_2 \) and
\begin{equation}
f({\x}) = \langle f, K({\x}, \cdot) \rangle_{\cB_1 \times \cB_2} \text{ for all } {\x} \in \Omega_1 \text{ and for all } f \in \cB_1, \label{eq: RK1}
\end{equation}
then we call \( K \) a reproducing kernel for \( \cB_1 \). If, in addition, \( \cB_2 \) is also an RKBS on \( \Omega_2 \) and it holds \( K(\cdot, \by) \) for all \( \by \in \Omega_2 \) and
\begin{equation}
g(\by) = \langle K(\cdot, \by), g \rangle_{\cB_1 \times \cB_2} \text{ for all } \by \in \Omega_2 \text{ and all } g \in \cB_2, \label{eq: RK2}
\end{equation}
then we call \( \cB_2 \) an adjoint RKBS of \( \cB_1 \) and call \( \cB_1 \) and \( \cB_2 \) a pair of RKBSs. In this case, \( \widetilde{K}({\x}, \by) \coloneqq K(\by, {\x}) \) for ${\x} \in \Omega_2$, $\by \in \Omega_1$, is a reproducing kernel for \( \cB_2 \).    
\end{definition}
\eqnref{eq: RK1} and \eqnref{eq: RK2} are called the \tt{reproducing properties} for the kernel \( K \) in RKBSs \( \cB_1 \) and \( \cB_2 \), respectively. Note that for different choices of the bilinear form \(\langle \cdot, \cdot \rangle_{\cB_1 \times \cB_2} \) lead to a potentially different reproducing kernel. In this work, we are interested in the case where $\Omega_1 = \cX$ $, \Omega_2 = \cB$ and $ \cB_2 = \cB^*$. In this regard, if we use the \tt{canonical} bilinear form, \tt{i.e.} the duality bracket, then we obtain a \tt{unique} reproducing kernel for the RBKS $\cB$. In \secref{sec: example}, we study this unique kernel corresponding to an RKBS.

\subsection{Basic Definitions for Functional Optimization}\label{subsec: funcopt}

Here, we consider standard notions and definitions for calculus over functionals. It requires a careful treatment to extend the inherent definitions for the Euclidean vectors spaces. We assume that our function space is a Banach space $\paren{\cB(\cX), \norm{\cdot}_\cB}$ of real valued functions on $\cX$. Let $\cD \subseteq \cB$ be an open set and $\msf F$ a real-valued functional on this domain. First, we provide some necessary definitions for optimization over the function space by defining the notions of differentiability. Then, we talk about the notions of convexity, smoothness, Lipsitzness for $\msf F$; extending to the Polyak-\L{}ojasiewicz inequality. 

\begin{definition}[G\^{a}teaux differential] Let $\msf F: \cD \to \reals$ be a nonlinear transformation. Let $f \in \cD \subseteq \cB$ and $h$ be arbitrary in $\cB$. If the limit 
\begin{align}
    \partial_f \msf F (h) := \lim_{\gamma \to 0} \frac{1}{\gamma}\bracket{\msf F(f + \gamma\cdot h) - \msf F(f)} \label{eq: gatone}
\end{align}
exists, it is called the G\^{a}teaux differential of $\msf F$ at $f$ with the increment $h$. If \eqnref{eq: gatone} exists for each $h \in \cB$, then $\msf F$ is called G\^{a}teaux differentiable at $f$.
\end{definition}
For the scope of this work, we have stated the definition of G\^{a}teaux differential in the setting of a Banach space. But it could be easily extended to any vector space that need not have a norm.

\begin{definition}[Fr\'{e}chet derivative] Consider the transformation $\msf F$ as defined above. If for fixed $f \in \cD$ and any $h \in \cB$ there exists $\partial_f \msf F(h) \in \reals$ which is linear and continuous with respect to h such that
\begin{align}
    \lim_{\norm{h}_{\cB} \to 0} \frac{\msf F(f + h) - \msf F(h) - \partial_f \msf F(h)}{\norm{h}_\cB} = 0,
\end{align}
then $\msf F$ is said to be Fr\'{e}chet differentiable at $f$ and $\partial_f \msf F (h)$ is said to be the Fr\'{e}chet differential of $\msf F$ at $f$ with increment $h$. 
\end{definition}

As is common in the literature of real analysis, we state analogous definitions on the convexity and smoothness of a loss functional. 

\begin{definition}[Convexity] We say a functional $\msf F: \cD \to \reals$ is convex if for any $f, f' \in \cB$ and $\lambda \in \bracket{0,1}$ we have
\begin{align*}
    \msf F (\lambda f + (1 + \lambda) f') \le \lambda \msf F(f) + (1 + \lambda) \msf F(f')
\end{align*}
If the inequality is strict we call $\msf F$ a strictly convex functional. 
    
\end{definition}
In the functional analysis literature (see \cite{Zlinescu1983OnUC}), a slightly stronger notion of convexity is uniform convexity as defined below. \begin{definition}[Uniform convexity]
We say a functional $\msf F: \cD \to \reals$ is \textit{uniformly convex} if there exists $\rho : \mathbb{R}_+ \rightarrow \overline{\mathbb{R}}_+$ (with $\rho(t) = 0 \Leftrightarrow t = 0$)  such that
\[
\msf F(\lambda f + (1 - \lambda)f') \leq \lambda \msf F(f) + (1 - \lambda) \msf F(f') - \lambda(1 - \lambda)\rho(\|f - f'\|)
\]
for all $f, f' \in \text{dom}\,\, \msf F$ and all $\lambda \in \bracket{0, 1}$.
\end{definition}
Existence of such functionals strictly implies that the Banach space $\cB$ is reflexive (see Theorem 3.5.13 in~\cite{Zlinescu1983OnUC}). We discuss the requirement of reflexivity of the Banach space in \secref{sec: mainthm}.

With this, we define the notion of subgradients which is used to define the strong notion of convexity and smoothness of a functional.


\begin{definition}[Subgradients]
    For a functional $\msf F: \cB \to \reals$, we define its set of subgradients at any $f_0 \in \cB$ as
    \begin{align*}
        \partial_{f_0} \msf F := \curlybracket{ g \in \cB^*; \forall f \in \cB, \msf F(f) \ge \msf F(f_0) + \inner{f - f_0, g}_{\cB}}
    \end{align*}
\end{definition}
Here, we haven't stated any condition on the Fr\'{e}chet Derivative of $\msf F$, which if exists renders a singleton set for subgradients.

\begin{definition}[$\mu$-strongly convex]\label{def: strongconv} We say a functional $\msf F: \cD \to \reals$ is $\mu$-strongly convex if for all $f_0 \in \cB$, there exists $g \in \partial_{f_0}\msf F$ such that 
\begin{align*}
    \forall f \in \cB,\hspace{5mm} \msf F(f) - \msf F(f_0)  \geq \dist{f-f_0}{g} + \frac{\mu}{2} ||f - f_0||^2_{\cB} 
\end{align*}
\end{definition}
Note that $\mu$-strong convexity implies that the functional $\msf F$ is uniformly convex for $\rho(t) = \frac{\mu}{2} t^2$.

\begin{definition}[$\gamma$-smoothness]\label{def: smooth} We say a functional $\msf F: \cD \to \reals$ is $\gamma$-smooth if for all $f_0 \in \cB$, there exists $g \in \partial_{f_0}\msf F$ such that
\begin{equation}
    \forall f \in \cB,\hspace{5mm} \msf F(f) - \msf F(f_0) \le  (f - f_0, g)_{\cB} + \frac{\gamma}{2}||f - f_0||^2_{\cB}
\end{equation}
    
\end{definition}
\begin{remark} Several notions of smoothness exist in the Hilbert space setting~\citep{nesterov}, and these can be appropriately adapted to the Banach space setting. In this work, we focus on \(\gamma\)-smoothness, which is shown to be equivalent to similar smoothness notions as studied in Theorem 3.1 of \cite{wachsmuth2022simpleproofbaillonhaddadtheorem}. In \secref{sec: mainthm}, we examine the theoretical properties of the convergence of the specified mirror descent algorithm (see \secref{subsec: probsetup}) under various assumptions on the underlying loss functional (see \eqnref{eq: optERM}). It is important to note that any theoretical results, whether assuming \(\gamma\)-smoothness or not, apply to the smoothness concepts discussed in \cite{wachsmuth2022simpleproofbaillonhaddadtheorem}.
    
\end{remark}

In the Euclidean setting, the two constants $\mu$ and $\gamma$ can be directly related as studied in \cite{nesterov}. It is straight-forward to note that if a loss functional $\msf F$ is both smooth and strongly convex, then $\mu \le \gamma$.
\begin{lemma}
    If a real-valued functional $\msf F :\cD \subseteq \cB \to \reals$ is $\mu$-strongly convex and $\gamma$-smooth, then $\mu \le \gamma$.
\end{lemma}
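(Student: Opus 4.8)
The plan is to combine the two one-sided estimates at a common base point and exploit that, after cancellation, the right-hand side is \emph{linear} in the displacement while the left-hand side is \emph{quadratic}. First I would fix an arbitrary $f_0 \in \cB$. By $\mu$-strong convexity (\defref{def: strongconv}) there is a subgradient $g_1 \in \partial_{f_0}\msf F$ with $\msf F(f) - \msf F(f_0) \ge \dist{f-f_0}{g_1} + \frac{\mu}{2}\norm{f-f_0}_\cB^2$ for all $f \in \cB$, and by $\gamma$-smoothness (\defref{def: smooth}) there is a subgradient $g_2 \in \partial_{f_0}\msf F$ with $\msf F(f) - \msf F(f_0) \le \dist{f-f_0}{g_2} + \frac{\gamma}{2}\norm{f-f_0}_\cB^2$ for all $f \in \cB$. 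The key observation is that, for this single fixed $f_0$, both inequalities hold simultaneously over the \emph{entire} space, so I may chain them.

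Chaining the lower and upper bounds on the common quantity $\msf F(f) - \msf F(f_0)$ and cancelling the shared difference of the function values, I obtain, for every $f \in \cB$, the inequality $\frac{\mu - \gamma}{2}\norm{f-f_0}_\cB^2 \le \dist{f-f_0}{g_2 - g_1}$, where I have merged the two linear terms using bilinearity of the duality bracket $\dist{\cdot}{\cdot}$. Here $g_2 - g_1 \in \cB^*$ is a fixed bounded functional.

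Next I would specialize the displacement to a ray. Choosing any $h \in \cB$ with $\norm{h}_\cB = 1$ (possible because $\cB$ is a nontrivial normed space) and setting $f = f_0 + t\,h$ with $t > 0$ turns the estimate into $\frac{\mu-\gamma}{2}\, t^2 \le t\,\dist{h}{g_2-g_1}$; dividing by $t$ yields $\frac{\mu-\gamma}{2}\, t \le \dist{h}{g_2-g_1}$. The right-hand side is a constant independent of $t$, so letting $t \to \infty$ forces $\frac{\mu-\gamma}{2} \le 0$, i.e. $\mu \le \gamma$, as claimed.

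The one point that needs care, and which I expect to be the main (if mild) obstacle, is that $\mu$-strong convexity and $\gamma$-smoothness may be witnessed by \emph{different} subgradients $g_1 \neq g_2$ at $f_0$ — they are forced to coincide only when $\msf F$ is Fr\'echet differentiable at $f_0$, in which case $\partial_{f_0}\msf F$ is a singleton. The argument above deliberately sidesteps this: after subtraction the unknown difference $g_2 - g_1$ enters only through a term that is linear in $t$, which is dominated by the quadratic term as $t \to \infty$. This is exactly why I chain the two estimates \emph{globally} and push $t$ to infinity, rather than attempting a local comparison of Fr\'echet derivatives, which would require the two subgradients to agree.
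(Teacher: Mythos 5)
Your proof is correct. The paper itself offers no proof of this lemma---it is asserted as ``straight-forward to note''---so there is no argument to compare against; yours is the natural one and it is sound. Chaining the strong-convexity lower bound and the smoothness upper bound at a common base point $f_0$, cancelling $\msf F(f)-\msf F(f_0)$, and observing that along a ray $f = f_0 + t h$ the residual $\dist{th}{g_2-g_1}$ grows only linearly in $t$ while $\frac{\mu-\gamma}{2}t^2$ grows quadratically does force $\mu \le \gamma$. You are also right to flag, and correctly dispose of, the only subtlety: the paper's Definitions~\ref{def: strongconv} and~\ref{def: smooth} are each existential in the subgradient, so the two witnesses $g_1, g_2 \in \partial_{f_0}\msf F$ need not coincide, and your global argument never requires them to.
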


Now, we state a useful definition on the boundedness of the Fr\'{e}chet derivative of a functional on $\cB$.
\begin{definition}[$L$-lipshitz]\label{def: lips} We say a convex functional $\msf F: \cD \to \reals$ is $L$-lipshitz w.r.t $||\cdot||_{\cB}$ if for all $f \in \cB$, and subdifferential $g \in \partial_f\msf F$,
\begin{align*}
     ||g||_{\cB^*} \le L
\end{align*}
\end{definition}
\textbf{Polyak-\L{}ojasiewicz inequality}~\citep{POLYAK1963864} has been extensively studied notion in the optimization landscape as a relaxation to convexity. We provide an extension of this notion in the functional setting:
\begin{definition}[Polyak-\L{}ojasiewicz inequality]\label{def: PL}
    A real-valued functional $\msf F : \cD \to \reals$ is called $\mu$-P\L{} if for some $\mu$ > 0:
    \begin{align*}
        \frac{1}{2}||\partial_f \msf F||^2_{\cB^*} \ge \mu(\msf F(f) - \msf F(f^*)), \forall f \in \cB
    \end{align*}
where $f^*$ is a global minimizer of $\msf F$.
\end{definition}


\subsection{Problem Setup and Algorithmic Insights}\label{subsec: probsetup}

Let $\cX$ be the data space and $\cY$ a finite label set. We denote a set of $n$ 
training data points as $D_n := \curlybracket{(x_1,y_1),(x_2,y_2),\ldots,(x_n,y_n)} \subset \cX \times \cY$. 
We consider a modal class $\cB$, a reflexive reproducing kernel Banach space with the goal to find a function $f \in \cB$ such that $f(x_i) \approx y_i$ for every example in $D_n$. 
For a given function $f$ and data point $(\x,y)$, we consider a non-negative penalty function $\ell: \reals \times \reals \to \reals$, where $\ell(f(\x),y)$ is Fr\'{e}chet differentiable wrt to $f$ over entire $\cB$ unless stated otherwise. Using this penalty function, the total loss on the training set $D_n$ is defined via a loss functional over $\cB$, denoted as $\msf L: \cB \to \reals$ as $\msf L(f) = \sum_{i=1}^n \ell(f(\x_i),y_i)$.
We study the following optimization problem with the loss functional $\msf L$ over $\cB$ 
\begin{align}
    \min_{f \in \cB} \msf L(f) \label{eq: optPS}
\end{align}
A well-studied choice of loss functional is square loss for which  \eqnref{eq: optPS} has the form
\begin{align}
    \min_{f \in \cB} \sum_{i=1}^n (f(x_i) - y _i)^2 \label{eq: sqr}
\end{align}
Equivalently, if the loss functional is regularized with a functional $\Psi_{\ge 0}: \cB \to \reals$ with regularization parameter $\lambda > 0$ then we get
\begin{align}
    \min_{f \in \cB} \sum_{i=1}^n (f(x_i) - y _i)^2 + \lambda\cdot \Psi(f) \label{eq: sqrreg}
\end{align}
We solve the optimization problem in \eqnref{eq: optPS} using a \tt{functional form} of mirror descent. 

Consider a strongly convex Fr\'{e}chet differentiable functional $\Phi: \cB \to \reals$, also called a \tt{potential} functional in the literature. For the sake of context, we call them mirror maps in this work. The gradient or the Fr\'{e}chet derivative of $\Phi$ can be thought of as an operator from $\cB$ to $\cB^*$
\begin{align}
    \partial_{(\cdot)} \Phi: \cB \to \cB^* \textnormal{ s.t. } f \mapsto \partial_{f} \Phi \label{eq: mmap}
\end{align}
We discuss mirror maps in great details in \secref{subsec: mirrormap}.
\paragraph{Mirror descent for RKBS $\cB$} Assume $\cB$ is a reflexive reproducing kernel Banach space of real-valued functions over $\cX$. Assume we a mirror map $\Phi: \cB \to \reals$.

We optimize the general minimization problem in \eqnref{eq: optPS} for a given loss functional $\msf L$ with the following mirror descent algorithm (MDA)
\begin{subequations}\label{eq: prelmd}
\begin{align}
    g_t &:= g_{t-1} - \eta\cdot\partial_{f_{t-1}} \msf L \\
    f_t &:= (\partial \Phi)^{-1} (g_t)
\end{align}
\end{subequations}
where $g_i \in \cB^*$ and $\partial_{f_{t-1}} \msf L$ is the Fr\'{e}chet derivative of a loss functional $\msf L: \cB \to \reals$ with respect to $f_{t-1}$. 

We denote the corresponding reproducing kernel to the RKBS $\cB$ as $K: \cX \times \cB \to \reals$. Consider an input ${\x}$ and the evaluation functional $\delta_{\x} : \cB \to \reals$.
Now, the G\^{a}teaux differential of $\delta_{\x}(f)=f({\x})$ w.r.t $f$ at any function $h \in \cB$ is given by
\begin{equation}
    \partial_f (\delta_{\x})[h] = \left\langle h, K({\x},\cdot)\right\rangle_{\cB} \label{eq: difeval}
\end{equation}

Assuming that the loss functional $\msf L$ is square loss~(see \eqnref{eq: sqr}), we can compute the Fr\'{e}chet derivative of $\msf L$ wrt $f$ as follows:
\begin{align}
    \partial_f \msf L(\cdot) = \partial_f \paren{\sum_{i = 1}^n (f(\x_i) - y_i)^2} &= 2\sum_{i=1}^n \paren{f(\x_i) - y_i}\cdot \partial_f f(\x_i)\nonumber\\
    &= 2\sum_{i=1}^n \paren{f(\x_i) - y_i}\cdot \partial_f \dist{f}{K(\x_i,\cdot)}\nonumber\\
    &= 2\sum_{i=1}^n \paren{f(\x_i) - y_i}\cdot K(\x_i,\cdot) \label{eq: sqdiff}
\end{align}
where the last equation follows using \eqnref{eq: difeval}. Note that the terms $K(\x_i,\cdot)$ for any $i$ is in the dual space $\cB^*$, and thus the summation is in the dual space.

With this, we provide an \tt{iterative algorithm} using the kernel $K$:
\begin{subequations}\label{eq: MD}
\begin{align}
    g_t &\gets g_{t-1} - 2\eta \sum_{i=1}^n (f_{t-1}(\x_i) - y_i)\cdot K(\x_i,\cdot)  \\
    f_{t} &\gets \paren{\partial \Phi}^{-1} (g_t)
\end{align}
\end{subequations}
Notice that if $g_0$ is initialized as $\sum_{i=1}^n \bc^{(0)}_i\cdot K(\bx_i,\cdot)$ for some $\bc^{(0)} \in \reals^n$ then inductively every $g_t$ has the form 
\begin{subequations}\label{eq: instantMD}
\begin{align*}
    \bc^{(t)} &\gets \bc^{(t-1)} - 2\eta(f_{t-1}(\x) - Y)\\ 
    g_t &= \sum_{i = 1}^n \bc^{(t)}_i \cdot K(\x_i,\cdot)\\
    f_{t} &\gets \paren{\partial \Phi}^{-1} (g_t)
\end{align*}
\end{subequations}
where we use the vectorial notation $f_i(\x) = (f_i(\x_1), f_i(\x_2),\ldots, f_i(\x_n))$, $Y = (y_1,y_2,\ldots,y_n)$. Thus, we can make updates to a vector $\bc^{(t)} \in \reals^n$ to keep track of gradients in the dual space.
Similary, the MDA can be stipulated in the regularized learning in \eqnref{eq: sqrreg} where we assume that
the mirror map is same as the regularization term $\Psi$ for simplification, i.e. $\Phi = \Psi$
\begin{subequations}\label{eq: MDreg}
\begin{align}
    g_t &\gets (1 - 2\eta \lambda)\cdot g_{t-1} - 2\eta \sum_{i=1}^n (f_{t-1}(\x_i) - y_i)\cdot K(\x_i,\cdot)  \\
    f_{t} &\gets \paren{\partial \Phi}^{-1} (g_t)
\end{align}
\end{subequations}
where we have used the fact that $\partial_f \Phi = \partial_f \Psi$.
Similarly to the nonregularized setting, this could be further simplified to
\begin{subequations}
\begin{align}
    \bc^{(t)} &\gets (1 - 2\eta \lambda)\bc^{(t-1)} - 2\eta(f_{t-1}(\x) - Y)\\ 
    g_t &= \sum_{i = 1}^n \bc^{(t)}_i \cdot K(\x_i,\cdot)\\
    f_{t} &\gets \paren{\partial \Phi}^{-1} (g_t)
\end{align}
\end{subequations}
Recent works have established representer theorems for different settings in \eqnref{eq: optPS}. For example, \citet{zhang09b} and \citet{Lin2022} studied regularized learning in \eqnref{eq: sqrreg} for semi-inner product (s.i.p) RKBSs with continuous and convex loss functionals \(\msf L\). They showed that the optimal classifier \(f\) in the primal Banach space has a dual representation which can be uniquely written as \(g^* = \sum_{i = 1}^n c_i^* K(\x_i, \cdot)\) for scalars \(c_i^* \in \mathbb{R}\). This emphasizes the novelty of the MDA iterations, as they could potentially achieve the optimal solution to \eqnref{eq: optPS}. 

\begin{remark}
\eqnref{eq: MDreg} can be easily extended to cases where the Fr\'{e}chet derivatives of \(\Phi\) and \(\Psi\) map to scaled linear transformations in \(\mathcal{B}^*\). Thus, MDA can be applied to a natural choice studied in the literature where \(\Psi = \psi(\|\cdot\|_{\mathcal{B}})\) and \(\Phi = \frac{1}{2}\|\cdot\|_{\mathcal{B}}^2\) (or \(\phi(\|\cdot\|_{\mathcal{B}}^2)\)). However, we can eliminate these assumptions for certain families of RKBSs where the functions are induced by the training points. In such cases, every \(g_t \in \mathcal{B}^*\) is induced by \(\{K(\bx_i, \cdot)\}\), e.g., \(p\)-norm RKBSs as constructed in \secref{sec: example}.
\end{remark}

 In \secref{sec: example}, we discuss the construction of an RKBS for which one can stipulate explicit primal and dual space updates for a suitable choice of a mirror map.

Note that MDA for square loss in \eqnref{eq: MD} and \eqnref{eq: MDreg} could be generalized for \tt{any} differentiable loss functional $\msf L$. The iteration step includes the appropriate Fr\'{e}chet derivative of the loss functional 
\begin{align*}
    \partial_f \msf L(f, \curlybracket{(\x_i,y_i)}_{i=1}^n) = \sum_{i=1}^n \frac{\partial \msf \ell(z,y_i)}{\partial z} \Big\lvert_{z = f(\x_i)}\cdot \partial_f f(\x_i) = \sum_{i=1}^n \frac{\partial \msf \ell(z,y_i)}{\partial z} \Big\lvert_{z = f(\x_i)}\cdot K(\x_i,\cdot)
\end{align*}

A natural question is \tt{if the mirror descent algorithm of \eqnref{eq: prelmd} is statistically efficient}? We study this in \secref{sec: mainthm}. This requires appropriate choices of mirror maps to achieve certain convergence guarantees. Below, we provide formal treatment of these maps.

\subsection{Mirror Maps}\label{subsec: mirrormap}
Consider a convex open set of functionals $\cC$ such that $\cB \subset \Bar{\cC}$. We are interested in certain real-valued functionals on $\cC$ for the mirror descent algorithm of \eqnref{eq: MD} that could be used to map functions in the primal space $\cB$ to transformations in $\cB^*$. 

In \secref{subsec: probsetup} we introduced mirror maps to discuss the mirror descent algorithms in \eqnref{eq: prelmd}. Formally, we define mirror maps as follows:

\begin{definition}[Mirror Map]\label{def: mp} We say that a functional $\Phi: \cC \to \reals$ is a mirror map if it satisfies the following properties:
\begin{enumerate}[leftmargin=3em]
    \item $\Phi$ is strictly convex. 
    \item Subgradient sets of $\partial_{(\cdot)}\Phi$ don't intersect at non-empty set and $\partial_{(\cdot)} \Phi (\cB) = \cB^*$.
\end{enumerate}
\end{definition}

The Condition \tt{2.} above is the key ingredient to the MDA in \eqnref{eq: MD}. The injectivity of subgradient sets makes sure that the algorithm can converge without getting stuck in a loop and the surjectivity makes sure that the algorithm gets back to the primal space reliably.
The algorithm would make sense only if the chosen mirror map satisfies this property. We can guarantee this for a wide variety of strictly convex functionals on a reflexive Banach space, including squared $p$-norms.
Note that strict convexity of $\Phi$ (in Condiiton $1.$) implies that the subgradient sets at any $f,f' \in \cB$ don't intersect, i.e.
\begin{align*}
    \partial_f \Phi \cap  \partial_{f'} \Phi = \emptyset
\end{align*}
On the other hand, Condition \tt{2.} implies that there exists some $f$ such that for any $g \in \cB^*$, $g \in \partial_f \Phi$. In \lemref{lemma: inversemirror}, we establish a strong result on the injectivity and subjectivity of a mirror map.

We assume that the underlying functional for proper, \tt{i.e.}
$\msf F: \cB \to \reals$ such that $(\textnormal{dom } \msf F) \neq \emptyset$ and $\msf F(f) \ge - \infty$ for all $f \in \cB$. We defer the proof of the lemma to \appref{app: technical}.

\begin{lemma}\label{lemma: inversemirror}
    Consider a convex Banach space $(\cB, \norm{\cdot}_{\cB})$. Let $\msf F: \cB \to \reals$ be a proper, strictly convex and G\^ateaux differentiable functional. 
    Then, for all $f, f' \in \cB$,
    \begin{align}
        \partial_{f} \msf F = \partial_{f'} \msf F \implies f = f'
    \end{align}
    Furthermore, for any linear functional $\bg{}^* \in \cB^*$, there exists $\hat{f} \in \cB$ such that
    \begin{align*}
        \bg{}^* = \partial_{\hat{f}} \msf F
    \end{align*}
    In order words, the operator $\partial_{(\cdot)} \msf F: \cB \to \cB^*$ is both injective and subjective, where 
    \begin{align*}
        \forall f \in \cB,\,\, f \longmapsto \partial_{(\cdot)} \msf F(f) := \partial_f \msf F \in \cB^*
    \end{align*}
\end{lemma}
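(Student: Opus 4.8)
My plan is to treat the two assertions separately, since the injectivity is a direct consequence of strict convexity while the surjectivity requires a variational (minimization) argument. Throughout I use that for a Gâteaux differentiable convex functional the subgradient set at each point is the singleton $\{\partial_{(\cdot)}\msf F\}$, so ``$\partial_f \msf F = \partial_{f'}\msf F$'' is an identity of elements of $\cB^*$ and $\langle h, \partial_f\msf F\rangle_{\cB}$ equals the directional derivative $\partial_f\msf F(h)$.

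For injectivity, suppose $g = \partial_f \msf F = \partial_{f'}\msf F$ with $f \neq f'$. For $\gamma \in (0,1)$ strict convexity gives $\msf F(f + \gamma(f'-f)) < \msf F(f) + \gamma\,(\msf F(f') - \msf F(f))$, i.e. the difference quotient at $\gamma$ is strictly below $\msf F(f')-\msf F(f)$. Since for a convex functional this quotient is monotone in $\gamma$, evaluating at a fixed $\gamma_0 \in (0,1)$ yields $\langle f'-f, g\rangle_{\cB} \le \tfrac{1}{\gamma_0}\!\left(\msf F(f+\gamma_0(f'-f)) - \msf F(f)\right) < \msf F(f') - \msf F(f)$. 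By symmetry $\langle f-f', g\rangle_{\cB} < \msf F(f) - \msf F(f')$, and adding the two strict inequalities produces $0 < 0$, a contradiction; hence $f = f'$.

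For surjectivity, fix $g^* \in \cB^*$ and introduce the perturbed functional $\msf G(f) := \msf F(f) - \langle f, g^*\rangle_{\cB}$. As the sum of a strictly convex functional and a continuous linear one, $\msf G$ is proper, strictly convex and Gâteaux differentiable with $\partial_f \msf G = \partial_f\msf F - g^*$, and by convexity $\hat f$ minimizes $\msf G$ over $\cB$ if and only if $\partial_{\hat f}\msf G = 0$, i.e. $\partial_{\hat f}\msf F = g^*$. Thus surjectivity reduces to the existence of a minimizer of $\msf G$, which I would obtain by the direct method: a minimizing sequence $(f_n)$ is bounded by coercivity of $\msf G$; reflexivity of $\cB$ then extracts a weakly convergent subsequence $f_{n_k} \rightharpoonup \hat f$; and since $\msf G$ is convex and (strongly) lower semicontinuous it is weakly lower semicontinuous, so $\msf G(\hat f) \le \liminf_k \msf G(f_{n_k}) = \inf_{\cB}\msf G$, identifying $\hat f$ as the (unique, by strict convexity) minimizer.

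The main obstacle is the coercivity of $\msf G$, equivalently the superlinear growth of $\msf F$. Strict convexity alone does \emph{not} suffice for surjectivity: on $\reals$ the map $\msf F(x) = e^x$ is strictly convex and smooth, yet its derivative misses every nonpositive value. Hence the argument genuinely needs either an explicit (super)coercivity hypothesis $\msf F(f)/\norm{f}_{\cB} \to \infty$ or the specific structure of the mirror maps under consideration; for the squared $p$-norm maps emphasized in the paper this growth is immediate, and the assumed reflexivity of the RKBS supplies the required weak compactness. I therefore expect the cleanest route is to make coercivity the operative assumption and run the direct method, with the delicate points being the verification of weak lower semicontinuity in the Banach (rather than Hilbert) setting and the confirmation that the Gâteaux derivative of $\msf G$ vanishes at the weak-limit minimizer.
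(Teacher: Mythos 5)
Your injectivity argument is correct and more elementary than the paper's: you work directly with the monotonicity of difference quotients and strict convexity to force the contradiction $0<0$, whereas the paper routes through the tilted functional $\hat{\msf F}_{g}(f) := \msf F(f) - \langle f, g\rangle_{\cB}$ and the uniqueness of its minimizer. Both are valid; yours avoids invoking the first-order optimality lemma and the conjugate machinery. For surjectivity, the two approaches converge on the same pivot: everything reduces to the existence of a minimizer of the tilted functional $\msf G(f) = \msf F(f) - \langle f, g^*\rangle_{\cB}$ (the paper's $\hat{\msf F}_{g^*}$), after which the first-order condition (your $\partial_{\hat f}\msf G = 0$, the paper's Young--Fenchel subtraction) identifies $g^*$ as the derivative at that minimizer. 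The paper simply writes ``Assume that $\hat{\msf F}_{g^*}$ is minimized at $f_0 \in \cB$'' and proceeds, while you propose to actually produce the minimizer by the direct method (coercivity of $\msf G$, weak compactness from reflexivity, weak lower semicontinuity from convexity plus strong lsc).

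This is where your proposal earns its keep: you correctly observe that the existence of that minimizer is not a formality but the entire content of surjectivity, and that it fails under the stated hypotheses. Your counterexample $\msf F(x) = e^x$ on $\reals$ is decisive: it is proper, strictly convex, and Gâteaux (indeed Fréchet) differentiable, yet $\partial_{(\cdot)}\msf F$ misses every $g^* \le 0$. So the lemma as stated is false without an additional growth condition, and the paper's proof contains exactly the gap you name --- the minimizer of $\hat{\msf F}_{g^*}$ is assumed rather than shown to exist. Your proposed repair (add supercoercivity $\msf F(f)/\norm{f}_{\cB} \to \infty$, use reflexivity for weak compactness of a minimizing sequence, and weak lsc of the convex functional) is the standard and correct way to close it, and the hypotheses are satisfied by the squared $p$-norm mirror maps the paper actually uses. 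One small caution on your side: the direct method also needs $\msf F$ to be lower semicontinuous (the paper's appeal to the biconjugate theorem $\msf F^{**} = \msf F$ quietly needs this too), so that hypothesis should be made explicit alongside coercivity.
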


Although, we have stated the lemma for a G\^ateaux  differentiable mirror app, subjectivity can be achieved in its absence (see the proof in \appref{app: technical}).

\section{Theoretical results: Convergence of Mirror Descent Algorithm}\label{sec: mainthm}

In this subsection, we consider the unconstrained optimization problem 
\begin{equation}
    \min_{f \in \cB} \msf L(f) \label{eq: global}
\end{equation}
over a Banach space $\cB$ and a real loss functional $\msf L: \cB \to \reals$. We would like to understand how well the mirror descent algorithm of \eqnref{eq: MD} performs if there exists a realizable global minimizer of \eqnref{eq: global}, \tt{i.e.} there exists $f^* \in \cB$ such that $\argmin_{f \in \cB} \msf L(f) = \curlybracket{f^*}$. However, the existence of a global minimizer in the Banach space for \eqnref{eq: global} is not guaranteed, a situation that can be resolved by assuming the reflexivity of the space (see \citet[Theorem 2.3.1]{Zlinescu2002ConvexAI}).

In this section, we assume that the Banach space \(\cB\) is reflexive. This assumption serves two purposes: first, it ensures the existence of a global minimizer for \eqnref{eq: global} within the space; second, it allows us to study the convergence properties of the mirror descent algorithm under the assumption that the underlying functionals—either the loss functional \(\msf L\) or the mirror map \(\Phi\)—are strongly convex (see \defref{def: strongconv}). The second condition also implies the reflexivity of the space, as stated in the following result.

\begin{theorem}[Theorem 3.5.13~\cite{Zlinescu2002ConvexAI}]
Let \(\cB\) be a Banach space. If there exists a proper, lower semi-continuous, uniformly convex functional \(\msf F :\cB \to \reals\) whose domain has a nonempty interior, then \(\cB\) is reflexive.
\end{theorem}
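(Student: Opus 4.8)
The plan is to derive reflexivity from the interplay between the uniform convexity of $\msf F$ and a dual characterization of reflexivity, following the template of the Milman--Pettis theorem but with the uniformly convex \emph{functional} $\msf F$ playing the role usually taken by a uniformly convex \emph{norm}. The nonempty-interior hypothesis on $\operatorname{dom}\msf F$ enters only to guarantee that $\msf F$, being convex, is continuous (hence locally Lipschitz and subdifferentiable) at interior points; this ensures that the perturbation and Fenchel-conjugation arguments below are well behaved and that $\partial_f\msf F\neq\emptyset$ on the interior.

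First I would extract the quantitative consequence of uniform convexity at $\lambda=\tfrac12$: for all $f,f'$,
\begin{equation*}
\rho(\|f-f'\|)\;\le\;2\msf F(f)+2\msf F(f')-4\,\msf F\!\left(\tfrac{f+f'}{2}\right).
\end{equation*}
The key consequence is that minimizing sequences are Cauchy. Indeed, if $\msf F(f_n)\to\inf_{\cB}\msf F=:m$ is finite, then the right-hand side with $f=f_n,\ f'=f_k$ tends to $0$ (since $\msf F(\tfrac{f_n+f_k}{2})\ge m$), so $\rho(\|f_n-f_k\|)\to0$ and, by the defining property of $\rho$ (after passing, if necessary, to a nondecreasing minorant preserving the inequality), $\|f_n-f_k\|\to0$. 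Completeness of $\cB$ and lower semicontinuity of $\msf F$ then deliver a unique minimizer. The same argument applies verbatim to every perturbed functional $\msf F-g$ with $g\in\cB^*$, since adding a continuous linear term preserves both uniform convexity and lower semicontinuity; hence $\msf F-g$ attains its infimum on $\cB$ for all $g\in\cB^*$.

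Next I would convert this universal attainment into reflexivity. The James route: using the coercivity forced by the modulus, the sublevel sets $\{f:\msf F(f)\le c\}$ are closed, convex, and bounded, and the attainment above shows every $g\in\cB^*$ attains its supremum on them; by James's theorem such sets are weakly compact, and a weakly compact set with nonempty interior contains a ball, whence the closed unit ball of $\cB$ is weakly compact and $\cB$ is reflexive. The Goldstine route, which I find cleaner, passes to the bidual: any $\xi\in\cB^{**}$ is, by Goldstine's theorem, a weak-$*$ limit of a net $\iota(f_\alpha)$ with $f_\alpha\in\cB$; since the biconjugate $\msf F^{**}$ extends $\msf F$, is weak-$*$ lower semicontinuous, and inherits uniform convexity on $\cB^{**}$, the same Cauchy estimate forces $(f_\alpha)$ to be norm-Cauchy, hence norm-convergent to some $f\in\cB$ with $\iota(f)=\xi$, so $\iota$ is surjective.

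The main obstacle is precisely this transfer step. The analytic fact that minimizing sequences are Cauchy is elementary; the difficulty is packaging it into a genuine duality statement. Invoking James's theorem is conceptually clean but uses a deep black box and requires first establishing boundedness (coercivity) of the sublevel sets, which is \emph{not} automatic from the bare condition $\rho(t)=0\Leftrightarrow t=0$ and must be arranged by regularizing the modulus. The Goldstine argument is more self-contained but demands care in (i) verifying that $\msf F^{**}$ is again uniformly convex with a comparable modulus on $\cB^{**}$ — this is where the continuity and subdifferentiability granted by the nonempty-interior hypothesis are needed — and (ii) upgrading weak-$*$ convergence of the approximating net to norm-Cauchyness when $\rho$ is merely a general (possibly extended-real-valued) modulus rather than the special case $\rho(t)=\tfrac{\mu}{2}t^2$ corresponding to $\mu$-strong convexity. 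Handling this arbitrary modulus, rather than a fixed power of $\|\cdot\|$, is the technically delicate heart of the theorem.
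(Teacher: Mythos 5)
First, a point of comparison: the paper does not prove this statement at all --- it is quoted verbatim as Theorem 3.5.13 of the cited monograph and used as a black box --- so there is no in-paper proof to measure yours against. Judged on its own terms, your proposal correctly isolates the standard engine behind the result: the midpoint inequality shows that any minimizing sequence of $\msf F - g$ is norm-Cauchy, hence by completeness and lower semicontinuity $\msf F - g$ attains its infimum at a unique point for every $g \in \cB^*$; and you rightly flag that one must first show the gage can be taken nondecreasing and supercoercive, since otherwise $\rho(t_n)\to 0$ does not force $t_n \to 0$ and $\inf_{\cB}(\msf F - g)$ could be $-\infty$, which would break the Cauchy estimate. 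This much matches the architecture of the actual proof.

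The genuine gap is in the final transfer step, and neither of your two routes closes it as written. In the James route, the claim that universal attainment of $\inf_{\cB}(\msf F - g)$ implies that every $g$ attains its supremum on a sublevel set $\{\msf F \le c\}$ is unjustified: a maximizing sequence $(x_n)$ for $g$ on that set satisfies $\msf F(x_n)\le c$, but nothing forces $\msf F$ at the midpoints $\tfrac{x_n+x_m}{2}$ to approach $c$ from below, so the midpoint inequality yields no Cauchy estimate (picture $g$ attaining its supremum along a flat face of the sublevel set). In the Goldstine route, you apply the Cauchy estimate to an arbitrary Goldstine net $\iota(f_\alpha)$ weak-$*$ converging to $\xi$; but such a net is not a minimizing net for any perturbed functional, so the estimate simply does not apply to it, and if $\xi \notin \operatorname{dom}\msf F^{**}$ there is no functional for which it could be minimizing. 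The missing idea is roughly this: uniqueness of minimizers of $\msf F^{**}-g$ over $\cB^{**}$ (which have the same infimum as over $\cB$ and include $\iota$ of the minimizer in $\cB$) gives $\partial \msf F^{*}(g) \subseteq \iota(\cB)$ for every $g$, hence $\operatorname{dom}\,\partial\msf F^{**} \subseteq \iota(\cB)$; Br{\o}ndsted--Rockafellar density then puts $\operatorname{dom}\msf F^{**}$ inside the closed subspace $\iota(\cB)$, and since $\msf F$ is bounded above on a ball (this is where the nonempty-interior hypothesis actually enters), $\operatorname{dom}\msf F^{**}$ has nonempty interior in $\cB^{**}$, which is impossible for a subset of a proper closed subspace --- whence $\iota(\cB)=\cB^{**}$. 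Without an argument of this kind your proof does not reach reflexivity.
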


\subsection{Non-existence of a smooth and strongly convex functional}\label{sec: non-existence}
In the Euclidean setting, the convergence rate for various gradient-based methods applied to the optimization problem 
\begin{align}
   \min_{x \in \mathbb{R}^d} f(x), \label{eq: euclid}
\end{align}
where \( f: \mathbb{R}^d \to \mathbb{R} \), has been extensively studied. To achieve a linear convergence rate with gradient descent \citep{bubeck2015convex}, a common requirement is that the loss function \( f \) is both strongly convex and smooth. A natural question arises: \tt{can we achieve a linear rate for the optimization problem in \eqnref{eq: global} within reflexive Banach spaces?}

The answer is likely negative for Banach spaces that are not isomorphic to a Hilbert space. We can show that there does not exist a functional \( F: \mathcal{B} \to \mathbb{R} \) that is both strongly convex and \(\gamma\)-smooth on a Banach space that is not isomorphic to a Hilbert space.

We state this negative result regarding the existence of a strongly convex and \(\gamma\)-smooth functional for general Banach spaces as follows. The proof is deferred to \appref{app: smooth-convex}.
\begin{lemma}[Existence of strongly convex and \(\gamma\)-smooth functional]\label{lemma: existence}
    Let \(\mathcal{B}\) be a Banach space. If there exists a functional \( F: \mathcal{B} \to \mathbb{R} \) that is both \(\mu\)-strongly convex and \(\gamma\)-smooth for some \(\mu > 0\) and \(\gamma < \infty\), then \(\mathcal{B}\) is isomorphic to a Hilbert space.
\end{lemma}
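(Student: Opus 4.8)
The plan is to show that the mere existence of such an $F$ forces the geometry of $\cB$ to be ``$2$-uniform'' on both the convexity and the smoothness side, and then to invoke the classical characterization of Hilbertizable spaces through type and cotype. First I would extract a differentiable gradient map. Since $F$ is $\mu$-strongly convex it is strictly convex, and the $\gamma$-smoothness inequality, combined with the subgradient inequality from convexity, pins down the one-sided directional limits: for the subgradient $g_0\in\partial_{f_0}F$ furnished by \defref{def: smooth} one has $\dist{h}{g_0}\le \tfrac1t\bracket{F(f_0+th)-F(f_0)}\le \dist{h}{g_0}+\tfrac{\gamma t}{2}\norm{h}_{\cB}^2$ for $t>0$, with the reversed squeeze for $t<0$. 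Hence $F$ is G\^ateaux differentiable, the subdifferential $\partial_{f_0}F=\{g_0\}$ is a singleton, so the \emph{same} $g_0$ realizes both \defref{def: strongconv} and \defref{def: smooth}, and by \lemref{lemma: inversemirror} the operator $\partial_{(\cdot)}F:\cB\to\cB^*$ is a bijection. Combining the two defining inequalities gives the two-sided Bregman sandwich
\[
\tfrac{\mu}{2}\norm{f-f_0}_{\cB}^2 \;\le\; F(f)-F(f_0)-\dist{f-f_0}{g_0} \;\le\; \tfrac{\gamma}{2}\norm{f-f_0}_{\cB}^2,
\]
valid for all $f,f_0\in\cB$, which is the quantitative engine for everything below.

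Next I would transfer the two halves of this sandwich into genuinely geometric statements about the norm of $\cB$. The lower (strong-convexity) half is exactly the input needed to manufacture an equivalent norm on $\cB$ whose modulus of convexity is of power type $2$; such a renorming makes $\cB$ $2$-uniformly convex, hence of cotype $2$. Symmetrically, the upper ($\gamma$-smoothness) half produces an equivalent norm whose modulus of smoothness is of power type $2$, making $\cB$ $2$-uniformly smooth and hence of type $2$. (Equivalently, one may pass to the Fenchel conjugate $F^*$ on $\cB^*$, which is $\tfrac1\gamma$-strongly convex by the standard conjugacy between smoothness and strong convexity; this step is legitimate because reflexivity is in force throughout \secref{sec: mainthm}.) Either way, $\cB$ carries both type $2$ and cotype $2$.

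Finally, I would close with Kwapie\'n's theorem: a Banach space possessing both type $2$ and cotype $2$ is isomorphic---though not necessarily isometric---to a Hilbert space, which is precisely the assertion of the lemma. As a consistency check, for $\ell^p$ with $p\ne2$ exactly one of the two moduli fails to be of power type $2$ (power-type-$2$ convexity but only power-type-$p$ smoothness when $1<p<2$, and the reverse when $p>2$), so no simultaneously $\mu$-strongly convex and $\gamma$-smooth $F$ can exist there, matching the conclusion since $\ell^p$ is not Hilbertizable.

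I expect the \textbf{main obstacle} to be the transfer step of the second paragraph: deducing a power-type-$2$ renorming (equivalently, type and cotype) of the \emph{space} from the existence of a single good \emph{function}. The Bregman sandwich only compares one fixed functional against the norm, whereas type and cotype are uniform estimates over arbitrarily many vectors; bridging this gap requires an averaging/renorming argument in the spirit of the uniform-convexity renorming theory used around \thmref{thm: uncons} and in \cite{Zlinescu2002ConvexAI}, rather than a direct manipulation of the displayed inequality. A fully self-contained alternative---symmetrizing the Bregman divergence into a candidate inner product and verifying a parallelogram law up to the constants $\mu,\gamma$---is tempting but delicate, since an \emph{approximate} parallelogram law does not by itself force Hilbertizability; this is why routing through the type/cotype characterization is the cleaner path.
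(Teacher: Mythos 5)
Your overall strategy is sound and genuinely different from the paper's. The paper does not go through type/cotype or renorming at all: it invokes Borwein's theorem on generic second-order differentiability of continuous convex functions on separable spaces (\lemref{lemma: taylor}) to produce a single point $f$ at which $\msf F$ admits a second-order Taylor expansion with a bounded symmetric operator $\msf M:\cB\to\cB^*$; sandwiching that expansion between the $\mu$-strong-convexity and $\gamma$-smoothness inequalities yields $C_1\norm{f'}_\cB^2\le\langle f',\msf M(f')\rangle_\cB\le C_2\norm{f'}_\cB^2$, so that $\langle h_1,h_2\rangle_H:=\langle h_1,\msf M(h_2)\rangle_\cB$ is an inner product whose norm is equivalent to $\norm{\cdot}_\cB$. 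Since the $H$-norm satisfies the parallelogram law exactly, the averaged-signs criterion of Kwapie\'n (\propref{prop: kwapien}) transfers to $\norm{\cdot}_\cB$ with constant $C_2/C_1$, and a separable-subspace exhaustion handles the non-separable case. Both proofs therefore terminate in Kwapie\'n's theorem, but in different formulations (averaged parallelogram law versus type $2$ plus cotype $2$), and the paper's intermediate object is a concrete inner product extracted from a second derivative rather than a pair of power-type-$2$ renormings.

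That said, as written your argument has a gap exactly where you flagged it. The claim that a single $\mu$-strongly convex continuous functional forces an equivalent norm with modulus of convexity of power type $2$ (and dually for the smoothness half) is a genuine theorem --- the renorming result of Borwein, Guirao, H\'ajek and Vanderwerff on uniformly convex functions, whose hypothesis of boundedness on bounded sets is met here because $\gamma$-smoothness bounds $\msf F$ above by a quadratic --- but it is the entire content of the transfer step and cannot be dispatched ``in the spirit of'' the uniform-convexity literature; without citing or proving it, the proof is incomplete at its central point. Your instinct that the symmetrized-Bregman, approximate-parallelogram shortcut is unsafe is correct, and the paper's appeal to second-order differentiability is precisely the device that converts approximate quadratic behaviour into an exact inner product, which is why it avoids the renorming machinery altogether. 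Two smaller remarks: the G\^{a}teaux-differentiability preamble, while correct, is not needed (the paper works with a chosen subgradient throughout), and your $\ell^p$ consistency check is accurate.
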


The proof is based on the characterization of second-order differentiable points of a functional as discussed in \cite{Borwein1994SecondOD}. Using these points, we can demonstrate the existence of a Hilbert norm on any separable Banach space. By applying a generalization of the parallelogram law to show the isomorphism of a Banach space onto a Hilbert space (see \cite{Kwapień1972}), the proof extends to general Banach spaces by considering their separable subspaces.


\subsection{Conditional linear rate for unconstrained optimization}
In the previous section (\ref{sec: non-existence}), we demonstrated that there exist Banach spaces that do not admit functionals which are both strongly convex (see Definition \ref{def: strongconv}) and smooth (see Definition \ref{def: smooth}). Here, we consider a specific class of RKBSs that satisfy a certain property, which we term `Hilbertizable', and show a linear rate of convergence for unconstrained optimization using the mirror descent algorithm. This rate is achieved for loss functionals that are both strongly convex and \(\gamma\)-smooth.

In Definition \ref{def: mp}, we require the mirror map to be strictly convex. To establish convergence guarantees, we further need slightly stronger conditions, namely \(\gamma\)-smoothness and \(\mu\)-strong convexity.

First, we state an assumption regarding the specific class of Banach spaces that could potentially admit strongly convex and smooth functionals.\vspace{-1mm}
\begin{assumption}\label{ass: 1} 
A Banach space \(\mathcal{B}\) is termed Hilbertizable if \((\mathcal{B}, \|\cdot\|_{\mathcal{B}})\) can be isomorphically mapped onto a Hilbert space.  
\end{assumption}

Although topologically \((\mathcal{B}, \|\cdot\|_{\mathcal{B}})\) is same as a Hilbert space, but note that this is a weaker notion than isometric isomorphism, which, if it exists, implies that the Banach space norm \(\|\cdot\|_{\mathcal{B}}\) is a Hilbert norm. Finite sequences with $\ell_p$ (where $p \in (1,\infty)$) norm are isomorphic to each other. On the other hand, square-summable infinite sequences $\ell^2(\nats)$ with $\ell_2$ and $\ell_2 + \ell_{\infty}$ norms are isomorphic (but not isometrically) to each other, where one is a Hilbert space and the other Banach space.

Now, we state another assumption that characterizes the existence of a smooth and strongly convex functional.\vspace{-1mm}
\begin{assumption}\label{ass: 2} 
A reflexive Banach space \(\mathcal{B}\) is termed smoothly-convex if \((\mathcal{B}, \|\cdot\|_{\mathcal{B}})\) admits a functional \(\mathsf{F}: \mathcal{B} \to \mathbb{R}\) which is both \(\mu\)-strongly convex and \(\gamma\)-smooth (with respect to \(\|\cdot\|_{\mathcal{B}}\)) for \(\mu > 0\) and \(\gamma < \infty\).
\end{assumption}

Thus, for a Banach space to satisfy Assumption \ref{ass: 2}, Lemma \ref{lemma: existence} states that it must satisfy Assumption \ref{ass: 1}.

Under the aforementioned assumptions on the underlying Banach space, we state and prove the linear rate of MDA as follows:


\begin{theorem}[unconstrained optimization]\label{thm: uncons}Consider the optimization problem:
\begin{align*}\label{eq:min_unconstrained}
\min_{f \in \cB} \msf L(f)
\end{align*}
where $\cB$ is a reflexive RKBS that satisfies \assref{ass: 1} and \assref{ass: 2}. Assume that the loss functional $\msf L$ is $\mu$-strongly convex and $\gamma$-smooth (w.r.t $||\cdot||_{\cB}$). Furthermore, assume that there exists a mirror map $\Phi$ that is $\nu$-strongly convex and $\rho$-smooth (w.r.t. $||\cdot||_{\cB}$). Let $f^* \in \cB$ be the unique global minima of the optimization objective, then the mirror descent algorithm of \eqref{eq: MD} converges to the optimal solution in $\cB$ with the learning rate $\eta = \min \curlybracket{\frac{\nu}{\gamma}, \frac{1}{2\mu\nu\kappa^2}}$. Moreover, the convergence rate is linear, \tt{i.e.},
\begin{align*}
   \msf L(f_k) - \msf L(f^*) \le (\msf L(f_{0}) - \msf L(f^*))\cdot e^{-k\cdot\frac{\mu\nu^2\kappa^2}{\gamma}} 
\end{align*}
where $\kappa$ depends on $\rho$.
\end{theorem}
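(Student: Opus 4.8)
The plan is to prove a one-step geometric contraction of the suboptimality gap $\msf L(f_t)-\msf L(f^*)$ and then iterate it over $k$ steps. I would first rewrite the update of \eqnref{eq: MD} in the equivalent gradient form $\partial_{f_t}\Phi = \partial_{f_{t-1}}\Phi - \eta\,\partial_{f_{t-1}}\msf L$; this is well posed because \lemref{lemma: inversemirror} shows that $\partial_{(\cdot)}\Phi:\cB\to\cB^*$ is a bijection, so $f_t=(\partial\Phi)^{-1}(g_t)$ is the unique preimage, while reflexivity of $\cB$ guarantees that the minimizer $f^*$ exists. The contraction will be assembled from three ingredients: (i) a functional Polyak--\L{}ojasiewicz inequality extracted from $\mu$-strong convexity of $\msf L$; (ii) a descent inequality built from $\gamma$-smoothness of $\msf L$ together with the strong monotonicity of $\partial\Phi$; and (iii) a lower bound on the step length $\norm{f_t-f_{t-1}}_\cB$ obtained from the smoothness of the mirror map $\Phi$.

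For ingredient (i) I would fix $f_{t-1}$, set $g=\partial_{f_{t-1}}\msf L$, write the $\mu$-strong convexity bound of \defref{def: strongconv} with increment $f-f_{t-1}$, and minimize its right-hand side over all $f\in\cB$ using the dual-norm identity $\inf_{h\in\cB}\bracket{\dist{h}{g}+\frac{\mu}{2}\norm{h}_\cB^2}=-\frac{1}{2\mu}\norm{g}_{\cB^*}^2$, which is immediate from $\norm{g}_{\cB^*}=\sup_{\norm{h}_\cB\le 1}\dist{h}{g}$. Since $\msf L$ dominates this quadratic lower bound pointwise, taking infima over $f$ on both sides yields the P\L{} inequality of \defref{def: PL}, namely $\frac12\norm{\partial_{f_{t-1}}\msf L}_{\cB^*}^2\ge\mu\paren{\msf L(f_{t-1})-\msf L(f^*)}$.

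For ingredient (ii) I would invoke $\gamma$-smoothness of $\msf L$ across the step and substitute $\partial_{f_{t-1}}\msf L=\frac{1}{\eta}\paren{\partial_{f_{t-1}}\Phi-\partial_{f_t}\Phi}$; summing the two $\nu$-strong convexity inequalities for $\Phi$ gives the monotonicity estimate $\dist{f_t-f_{t-1}}{\partial_{f_t}\Phi-\partial_{f_{t-1}}\Phi}\ge\nu\norm{f_t-f_{t-1}}_\cB^2$, which converts the cross term and yields $\msf L(f_t)\le\msf L(f_{t-1})-\paren{\frac{\nu}{\eta}-\frac{\gamma}{2}}\norm{f_t-f_{t-1}}_\cB^2$; the branch $\eta\le\nu/\gamma$ of the stated learning rate keeps this coefficient positive. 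For ingredient (iii) I would use smoothness of $\Phi$ in the form of a Lipschitz-gradient bound $\norm{\partial_{f}\Phi-\partial_{f'}\Phi}_{\cB^*}\le\kappa^{-1}\norm{f-f'}_\cB$, so that the update gives $\norm{f_t-f_{t-1}}_\cB\ge\eta\kappa\norm{\partial_{f_{t-1}}\msf L}_{\cB^*}$. Chaining (iii), the P\L{} inequality, and the descent inequality at $\eta=\nu/\gamma$ produces $\msf L(f_t)-\msf L(f^*)\le\paren{1-\frac{\mu\nu^2\kappa^2}{\gamma}}\paren{\msf L(f_{t-1})-\msf L(f^*)}$, and iterating with $1-c\le e^{-c}$ delivers the claimed rate.

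The main obstacle is ingredient (iii): the smoothness of \defref{def: smooth} is a quadratic upper bound, not a statement about the gradient map, and in a general non-Hilbertian Banach space these two are inequivalent. Passing from one to the other is exactly the content of the Baillon--Haddad theorem, which is a Hilbert-space fact; here \assref{ass: 1} is what makes it usable. I would transport $\Phi$ to the isomorphic Hilbert space, apply the Baillon--Haddad equivalence there (cf. \cite{wachsmuth2022simpleproofbaillonhaddadtheorem}), and pull the Lipschitz-gradient bound back to $\cB$, absorbing the norm-equivalence constants of the isomorphism into $\kappa$; this is the sense in which $\kappa$ depends on $\rho$. By \lemref{lemma: existence} this Hilbertizability is in any case forced by \assref{ass: 2}, so the hypothesis is not extraneous. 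Finally, the second branch $\eta\le\frac{1}{2\mu\nu\kappa^2}$ of the learning rate is the safeguard ensuring that the per-step factor $1-\frac{\mu\nu^2\kappa^2}{\gamma}$ (which can otherwise be pushed out of $[0,1]$ when the isomorphism constants in $\kappa$ are large) remains a genuine contraction, after which iterating over $k$ steps closes the argument.
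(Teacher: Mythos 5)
Your proposal is correct and follows the paper's proof essentially step for step: the paper likewise combines (a) the monotonicity bound obtained by summing two $\nu$-strong-convexity inequalities for $\Phi$ (its \eqnref{eq: upbound}), (b) the $\gamma$-smoothness descent inequality $\msf L(f_k)\le\msf L(f_{k-1})-\bigparen{\tfrac{\nu}{\eta}-\tfrac{\gamma}{2}}\norm{f_k-f_{k-1}}_\cB^2$, (c) the step-length lower bound $\norm{f_k-f_{k-1}}_\cB\ge\kappa\eta\norm{\partial_{f_{k-1}}\msf L}_{\cB^*}$, and (d) the P\L{}-type bound derived exactly as you do via the convex conjugate of $\tfrac12\norm{\cdot}_\cB^2$, before chaining into the geometric contraction and $1-c\le e^{-c}$. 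The only divergence is your justification of (c): you transport $\Phi$ to the isomorphic Hilbert space of \assref{ass: 1} and invoke Baillon--Haddad there, whereas the paper applies a Banach-space duality result of Z\u{a}linescu (Corollary 3.5.7, its \lemref{lemma: dual_convex}), which converts the $\rho$-smoothness quadratic upper bound directly into the co-coercivity inequality $\dist{f-f'}{\partial_f\Phi-\partial_{f'}\Phi}\ge\kappa\norm{\partial_f\Phi-\partial_{f'}\Phi}^2_{\cB^*}$ and then uses Cauchy--Schwarz; this works in an arbitrary Banach space and keeps $\kappa$ a function of $\rho$ alone, while your route additionally folds the norm-equivalence constants of the isomorphism into $\kappa$ --- harmless for the linear rate, but slightly at odds with the clause ``$\kappa$ depends on $\rho$.'' Your observation that \defref{def: smooth} is a priori only a quadratic upper bound and not a statement about the gradient map is precisely the gap the paper's cited proposition closes.
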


\begin{proof}{\bf{of} \thmref{thm: uncons}} In the remainder of the proof, let \( f_k \) denote the \( k \)-th update in the primal space \(\cB\), and let \(\bg{k} := \partial_{f_k} \Phi\). Since \(\msf L\) or \(\Phi\) may not be G\^{a}teaux differentiable, we use \(\partial_{f} \msf L\) or \(\partial_{f} \Phi\) to represent their subgradient sets for any \( f \in \cB \). In the mirror descent algorithm (MDA), we select one element randomly from these subgradient sets (to resolve ties). So, without loss of generality, we write these sets from the chosen elements. Note that the updates $\bg{k}$ and $\bg{k-1}$  in the dual space are related according to the \MDA~as follows:
\begin{align}
    \bg{k} := \bg{k-1} - \eta\cdot \partial_{f_{k-1}}\msf L
\end{align}

Consider the updates $f_k$ and $f_{k-1}$. Using the strong convexity of $\Phi$, we have
\begin{align*}
    &\Phi(f_k) \ge \Phi(f_{k-1}) + \left\langle f_k - f_{k-1},\partial_{f_{k-1}} \Phi\right\rangle_{\cB} + \frac{\nu}{2}||f_k - f_{k-1}||^2_{\cB}\\
    &\Phi(f_{k-1}) \ge \Phi(f_{k}) + \left\langle f_{k-1} - f_{k},\partial_{f_{k}} \Phi\right\rangle_{\cB} + \frac{\nu}{2}||f_k - f_{k-1}||^2_{\cB}
\end{align*}
Adding the equations above we get
\begin{align}
    0 &\ge \left\langle f_k - f_{k-1}, \partial_{f_{k-1}}\Phi - \partial_{f_{k}}\Phi \right\rangle_{\cB} + \nu\cdot ||f_k - f_{k-1}||_{\cB}^2 \notag\\
    &= \left\langle f_k - f_{k-1}, \bg{k-1} - \bg{k}\right\rangle_{\cB} + \nu\cdot ||f_k - f_{k-1}||_{\cB}^2 \notag\\
    &= \eta\cdot \left\langle f_k - f_{k-1}, \partial_{f_{k-1}} \msf L\right\rangle_{\cB} + \nu\cdot ||f_k - f_{k-1}||_{\cB}^2 \label{eq: upbound}
\end{align}
\eqnref{eq: upbound} is useful in the sense that it provides a way to bound $||f_k - f_{k-1}||_{\cB}^2$ in terms of $\partial_{f_{k-1}} \msf L$. We would show that the norm of $||\partial_{f_{k-1}} \msf L||_{\cB^*}$ could be bounded the other way round in terms of $||f_k - f_{k-1}||_{\cB}^2$. In order to show that we need the following lemma that establishes the connection of the mirror map $\Phi$ to its convex conjugate defined on the dual space.

\begin{proposition}[{\cite[Corollary 3.5.7]{Zlinescu2002ConvexAI}}]\label{lemma: dual_convex}
    Let $\msf F: \cB \to \reals$ be a continuous convex function and $p,q \in \reals$ be such that $1 \le p \le 2 \le q$ and $p^{-1} + q^{-1} = 1$. Then, the following statements are equivalent
    \begin{enumerate}
        \item $\exists L_2 > 0$, $\forall f,f' \in \cB$, $\forall\, g \in \partial_{f'} \msf F$:
        \begin{align*}
          \msf F(f) \le \msf F(f') + \left\langle f - f', g\right\rangle_{\cB} + \frac{L_2}{p}\cdot||f - f'||^p_{\cB};  
        \end{align*}
        \item $\exists L_5 > 0$, $f, f' \in \cB$,
        \begin{align*}
          \left\langle f - f', \partial_{f} \msf F - \partial_{f'} \msf F\right\rangle_{\cB} \ge \frac{2}{L_5 q}\cdot||\partial_{f} \msf F - \partial_{f'} \msf F||^q_{\cB^*}.  
        \end{align*}
    \end{enumerate}
\end{proposition}

We could apply \lemref{lemma: dual_convex} for the mirror map $\Phi$. Since it is convex and $\rho$-smooth the condition \tt{1.} holds and thus, there exists a scalar $\kappa > 0$ such that $\kappa := \frac{1}{L_3}$ for which condition \tt{2.} is satisfied for any functions $f,f' \in \cB$. 

Using \lemref{lemma: dual_convex}, for the iterates $f_k$ and $f_{k-1}$ we have 
\begin{align}
    \left\langle f_k - f_{k-1}, \partial_{f_k} \Phi - \partial_{f_{k-1}} \Phi \right\rangle_{\cB} &\ge 
    ||\partial_{f_k} \Phi - \partial_{f_{k-1}} \Phi||^2_{\cB^*} \notag\\
      \implies \eta\cdot\left\langle f_k - f_{k-1}, - \partial_{f_{k-1}} \msf L\right\rangle_{\cB} &\ge \kappa\eta^2\cdot||\partial_{f_{k-1}} \msf L||^2_{\cB^*} \label{eq: lowbound}
\end{align}
In the equation above we note that $\partial_{f_k} \Phi = \bg{k}$ for all $k = 0, 1,\ldots$. Thus, $\partial_{f_k} \Phi - \partial_{f_{k-1}} \Phi = - \eta\cdot \partial_{f_{k-1}} \msf L$. 

But using Cauchy-Schwartz inequality, we note that 
\begin{align*}
    \left\langle f_{k-1} - f_k, \partial_{f_{k-1}} \msf L\right\rangle_{\cB^*} \le ||f_{k-1} - f_k||_{\cB} \cdot ||\partial_{f_{k-1}} \msf L||_{\cB^*}
\end{align*}
Thus, we can write
\begin{align}
    ||f_k - f_{k-1}||_{\cB} \ge \kappa\eta\cdot||\partial_{f_{k-1}} \msf L||_{\cB} \label{eq: lowbound}
\end{align}
Now, using $\gamma$-smoothness of $\msf L$ we note that
\begin{align}
    \msf L(f_k) &\le \msf L(f_{k-1}) + \left\langle f_k - f_{k-1}, \partial_{f_{k-1}} \msf L\right\rangle_{\cB} + \frac{\gamma}{2}||f_k - f_{k-1}||_{\cB}^2 \notag\\
    &\le \msf L(f_{k-1}) + \paren{\frac{\gamma}{2} - \frac{\nu}{\eta}}\cdot||f_k - f_{k-1}||_{\cB}^2 \label{eq: up1}\\
    &\le \msf L(f_{k-1}) - \paren{\frac{\nu}{\eta} - \frac{\gamma}{2}}\cdot \kappa^2\eta^2 \cdot ||\partial_{f_{k-1}} \msf L||^2_{\cB^*} \label{eq: up2}\\
    &= \msf L(f_{k-1}) - \frac{(2\nu - \gamma\eta)\kappa^2\eta}{2} \cdot ||\partial_{f_{k-1}} \msf L||^2_{\cB^*} \label{eq: boundG}
\end{align}
In \eqnref{eq: up1}, we bound $\left\langle f_k - f_{k-1}, \partial_{f_{k-1}} \msf L\right\rangle_{\cB}$ in terms of $||f_k - f_{k-1}||_{\cB}^2$ using \eqnref{eq: upbound}. But then $||f_k - f_{k-1}||_{\cB}^2 $ can be bounded in terms of $||\partial_{f_{k-1}} \msf L||_{\cB^*}$ using \eqnref{eq: lowbound}.
Here, note that $\frac{2\nu}{\gamma} > \eta$.

Now, we would try to lower bound $\msf L(f^*)$ using the definition of the minimizer.
Consider the following:
\begin{align}
    \msf L(f^*) &:= \inf_{f \in \cB} \msf L(f) \notag\\
    & = \inf_{f \in \cB} \msf L(f_k + f) \notag\\
    & \ge \inf_{f \in \cB} \msf L(f_k) + \left\langle f, \partial_{f_k} \msf L\right\rangle_{\cB^*} + \frac{\mu}{2}\cdot||f||^2_{\cB} \notag\\
    & = \msf L(f_k) + \mu\cdot \paren{\inf_{f \in \cB} -\left\langle f, -\frac{1}{\mu}\cdot \partial_{f_k} \msf L\right\rangle_{\cB} + \frac{1}{2}||f||_{\cB}^2 } \notag\\
    & = \msf L(f_k) - \frac{1}{2\mu}\cdot ||\partial_{f_{k-1}} \msf L||^2_{\cB^*} \label{eq: boundmin}
\end{align}
In the equation above, we use the fact that the convex conjugate of  
$\frac{1}{2}||\cdot||^2_{\cB}$ is $\frac{1}{2}||\cdot||^2_{\cB^*}$.

Now, we could bound $\msf L(f_k) - \msf L(f^*)$ geometrically. Using \eqnref{eq: boundG} and \eqnref{eq: boundmin}
\begin{align*}
    \msf L(f_k) &\le \msf L(f_{k-1}) - \frac{(2\nu - \gamma\eta)\kappa^2\eta}{2}\cdot||\partial_{f_{k-1}} \msf L||_{\cB^*}\\
    &\le \msf L(f_{k-1}) + \mu(2\nu - \gamma\eta)\kappa^2\eta\cdot\paren{\msf L(f^*) - \msf L(f_{k-1})}
\end{align*}

Now, this could be simplified as follows:
\begin{align*}
  \msf L(f_k) - \msf L(f^*)  &\le -\paren{\msf L(f^*) - \msf L(f_{k-1})} + \mu(2\nu - L\eta)\kappa^2\eta\cdot\paren{\msf L(f^*) - \msf L(f_{k-1})}\\
  &\le \paren{1 - \mu(2\nu - \gamma\eta)\kappa^2\eta}(\msf L(f_{k-1}) - \msf L(f^*))\\
  &\vdots\\
  &= \paren{1 - \mu(2\nu - \gamma\eta)\kappa^2\eta}^{k}(\msf L(f_{0}) - \msf L(f^*))\\
  &\le \lambda \cdot \exp\left(-k\cdot\mu(2\nu - \gamma\eta)\kappa^2\eta\right)\\
  &= \lambda\cdot \exp\left(-k\cdot\frac{\mu\nu^2\kappa^2}{\gamma}\right)
\end{align*}
where we define constant $\lambda := (\msf L(f_{0}) - \msf L(f^*))$. With this, we have completed the proof.
\end{proof}

\paragraph{Extension to P\L{} condition on the loss functional} Now, we consider relaxing the strong convexity condition (see \defref{def: strongconv}) on the loss functional to establish \thmref{thm: PL}. In \defref{def: PL}, we introduced the Polyak-\L{}ojasiewicz (P\L{}) condition for the functional setup.

The P\L{} inequality~\citep{POLYAK1963864, lojasiewicz1963topological} has been extensively studied as a weaker condition on a loss function compared to strong convexity in minimization problems within the Euclidean setting (see \eqnref{eq: euclid}). This condition is commonly encountered in applications and often leads to fast convergence of numerical algorithms. Under the P\L{} condition, a linear rate of convergence for gradient descent in the classical setting above can be demonstrated~\citep{PL_linear}.

Thus, it is natural to explore this in the functional setup. Indeed, it is sufficient for the convergence of the MDA. The rate of convergence remains linear if we replace the $\mu$-strong convexity of the loss functional $\msf L$ with the $\mu$-P\L{} condition. We state the result as follows:


\begin{theorem}[Unconstrained Optimization under $\mu$-P\L{} Condition]\label{thm: PL}
Consider the optimization problem:
\begin{align}\label{eq:min_unconstrained1}
\min_{f \in \cB} \msf L(f),
\end{align}
where $\cB$ is a reflexive RKBS that satisfies \assref{ass: 1} and \assref{ass: 2}. Assume that the loss functional $\msf L$ is $\mu$-P\L{} and $\gamma$-smooth (with respect to \( ||\cdot||_{\cB} \)). Furthermore, assume that there exists a mirror map $\Phi$ which is $\nu$-strongly convex and $\rho$-smooth (with respect to \( ||\cdot||_{\cB} \)). Let \( f^* \in \cB \) be the unique global minimizer of the optimization objective. Then, the mirror descent algorithm described by \eqref{eq: MD} converges to the optimal solution in \( \cB \) with the learning rate \( \eta = \min \left\{\frac{\nu}{\gamma}, \frac{1}{2\mu\nu\kappa^2} \right\} \). Moreover, the convergence rate is linear, i.e.,
\begin{align*}
   \msf L(f_k) - \msf L(f^*) \le (\msf L(f_{0}) - \msf L(f^*)) \cdot e^{-k \cdot \frac{\mu\nu^2\kappa^2}{\gamma}},
\end{align*}
where \( \kappa \) depends on \( \rho \).
\end{theorem}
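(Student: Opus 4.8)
The plan is to reuse the argument of \thmref{thm: uncons} almost verbatim, after isolating the single point at which $\mu$-strong convexity of the loss functional $\msf L$ was actually invoked and checking that the $\mu$-P\L{} condition supplies exactly the same inequality. First I would retrace the proof of \thmref{thm: uncons} to see which steps depend on strong convexity of $\msf L$. The derivation of the descent inequality
\begin{align*}
   \msf L(f_k) \le \msf L(f_{k-1}) - \tfrac{(2\nu - \gamma\eta)\kappa^2\eta}{2}\cdot ||\partial_{f_{k-1}} \msf L||^2_{\cB^*}
\end{align*}
(i.e.\ \eqnref{eq: boundG}) uses only three ingredients: the $\nu$-strong convexity of the mirror map $\Phi$ (yielding \eqnref{eq: upbound}), the $\rho$-smoothness of $\Phi$ through \propref{lemma: dual_convex} (yielding the lower bound \eqnref{eq: lowbound} on $||f_k - f_{k-1}||_{\cB}$), and the $\gamma$-smoothness of $\msf L$. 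None of these involves strong convexity of $\msf L$, so this entire chain carries over unchanged.

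The only use of $\mu$-strong convexity of $\msf L$ is in \eqnref{eq: boundmin}, which lower-bounds $\msf L(f^*)$ via the convex conjugate of $\tfrac12||\cdot||^2_{\cB}$ and, after rearrangement, asserts
\begin{align*}
   ||\partial_{f_{k-1}} \msf L||^2_{\cB^*} \ge 2\mu\,\bigl(\msf L(f_{k-1}) - \msf L(f^*)\bigr).
\end{align*}
The key observation I would make is that this is \emph{precisely} the $\mu$-P\L{} inequality of \defref{def: PL} evaluated at $f_{k-1}$. Hence, under the hypotheses of \thmref{thm: PL}, I would simply replace the strong-convexity-derived \eqnref{eq: boundmin} by a direct appeal to \defref{def: PL}, obtaining the identical bound without any convex-conjugate computation.

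With the descent inequality and the P\L{} inequality in hand, the remainder is the same algebra as before: substituting the P\L{} bound into the descent inequality gives
\begin{align*}
   \msf L(f_k) - \msf L(f^*) \le \bigl(1 - \mu(2\nu - \gamma\eta)\kappa^2\eta\bigr)\bigl(\msf L(f_{k-1}) - \msf L(f^*)\bigr),
\end{align*}
and iterating, together with the choice $\eta = \min\{\tfrac{\nu}{\gamma}, \tfrac{1}{2\mu\nu\kappa^2}\}$, produces the geometric decay with exponent $\mu\nu^2\kappa^2/\gamma$, matching \thmref{thm: uncons}. The choice $\eta \le \nu/\gamma$ keeps $2\nu - \gamma\eta \ge \nu > 0$ so the contraction factor lies in $[0,1)$, and the binding constraint $\eta = \nu/\gamma$ yields the stated rate after simplification $\mu(2\nu - \gamma\eta)\kappa^2\eta = \mu\nu^2\kappa^2/\gamma$.

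I do not expect a genuine obstacle here; the substance of the result is the recognition that \eqnref{eq: boundmin} is nothing but a P\L{} inequality, so strong convexity of $\msf L$ was never needed beyond its P\L{} consequence. The only points warranting care are: (i) confirming that strong convexity of $\msf L$ is truly absent from the descent chain \eqnref{eq: up1}--\eqnref{eq: boundG} (it is, as those lines use only $\gamma$-smoothness of $\msf L$ and the mirror-map estimates); and (ii) the standing assumption that a unique global minimizer $f^*$ exists, which the theorem posits directly and which the P\L{} definition requires. Reflexivity continues to guarantee existence of a minimizer as in the surrounding discussion, so the argument goes through intact.
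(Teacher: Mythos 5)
Your proposal is correct and coincides with the paper's own argument: the paper likewise proves \thmref{thm: PL} by rerunning the proof of \thmref{thm: uncons} and observing that \eqnref{eq: boundmin} — the only step using $\mu$-strong convexity of $\msf L$ — is exactly the $\mu$-P\L{} inequality of \defref{def: PL}, so the descent chain and the geometric iteration carry over unchanged. Your identification of which steps use only $\gamma$-smoothness of $\msf L$ and the mirror-map estimates is accurate, and the rate computation matches the paper's.
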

The proof of this theorem follows as before, with the addition that \eqnref{eq: boundmin} holds due to the P\L{} condition of the loss functional $\msf L$.

\subsection{Constrained Optimization: Convergence of projected MDA}\label{sec: unconstrained}
In the previous sections, we study the conditions under which MDA can achieve linear rate of convergence for unconstrained optimization over a reproducing kernel Banach space. Having demonstrated this linear rate in the unconstrained setting, an important question is how MDA, as expressed in \eqnref{eq: MD}, behaves when used in a constrained setting. In this scenario, the algorithm is restricted to choosing functions solely from a constrained subset of the Banach space $\mathcal{B}$. 

We establish that under mild assumptions (not requiring \assref{ass: 1} and \assref{ass: 2}) on the mirror map and the loss functional, a rate of $\mathcal{O}\left(\frac{1}{\sqrt{t}}\right)$ can be attained. This paradigm has undergone extensive examination in the classical setting; \cite{bubeck2015convex} explored various algorithms employing mirror descent in Banach spaces defined over Euclidean spaces, and with due consideration, these results can be extended to the functional case as well.

\paragraph{Mirror Descent for constrained optimization} Let $\cB_0 \subseteq \cB$ be a compact convex set of functionals. Consider a loss functional $\msf F: \cB \to \reals$. We are interested in the following optimization problem:\vspace{-2mm}
\begin{align}
    \min_{f\in \cB_0} \msf L(f) \label{eq: consopt}
\end{align}
Now, note that if we try to run the mirror descent algorithm of \eqnref{eq: MD}, then the preimage of $g_t$, \tt{i.e.} $(\partial \Phi)^{-1}(g_t)$ may not be in $\cB_0$. A simple solution to this issue could be to project the preimage back to $\cB_0$. In order to do this, we study the notion of a Bregman divergence.

Consider a real-valued functional $\msf A : \cB \to \reals$. We define the \tt{Bregman divergence} $\mf D_{\sf A}$ \tt{wrt} $\msf A$ as follows:
\begin{align}
    f,f' \in \cB,\quad \mf D_{\sf A}(f,f') := \sf A(f) - \sf A(f') - \left\langle f-f', \partial_{f'}\sf A\right\rangle_{\cB} \label{eq: BD}
\end{align}
This gives a straight-forward identity--- for any $f,f',h \in \cB$
\begin{align}
    \dist{f - h}{\partial_{f}\sf A - \partial_{f'}\sf A} = \mf D_{\sf A}(f,f') + \mf D_{\sf A}(h,f) - \mf D_{\sf A}(h,f') \label{eq: bregid}
\end{align}
Consider a convex set of functionals $\cC \subseteq \cB$ such that closure of $\cC$ contains $\cB_0$, \tt{i.e.} $\cB_0 \subseteq \Bar{\cC}$. We consider mirror maps that satisfy Condition 2. in \defref{def: mp} for the set $\cC$. 
\begin{definition}[Projection Map]\label{def: pmap} For a mirror map $\Phi: \cC \to \reals$, we define a projection map $\Pi_{\cB_0}^{\Phi}$ for any $f' \in \cC$ as
\begin{align*}
    \Pi_{\cB_0}^{\Phi}(f') := \argmin_{f \in \cB_0 \cap \cC} \mf D_{\Phi}(f,f')
\end{align*}
\end{definition}
There are pathological cases where the rhs $\argmin$ is an empty set. To avoid such cases we further assume that $\Phi$ diverges on the boundary of $\cC$, i.e. $\lim_{f \to \partial \cC} \norm{\partial_f \Phi}_{\cB^*} = + \infty$, thus it is coersive.

With this definition, we propose \tt{projected} mirror descent algorithm to solve the optimization problem in \eqnref{eq: consopt} as follows. Assuming that the loss function $\msf L$ is squared error, \tt{i.e.} $\msf L(f) = \sum_{i=1}^n (f(x_i) - y_i)^2$, we provide a projected iterative algorithm using a reproducing kernel $K$ for $\cB$ and a projection map $\Pi_{\cB_0}^\Phi$ (see \defref{def: pmap}) as follows: 
\begin{subequations}\label{eq: consMDA}
\begin{align*}
    g_t &\gets g_{t-1} - 2\eta \sum_{i=1}^n (f_{t-1}(x_i) - y_i)\cdot K(\cdot,x_i)\\
    f_{t} &\gets \Pi_{\cB_0}^\Phi \paren{\paren{\partial \Phi}^{-1} (g_t)}
\end{align*}
\end{subequations}
In the iteration step, we haven't assumed G\^ateaux differentiability of $\Phi$, thus we treat $\partial \Phi$ as a subgradient set. Since a strictly convex functional has non-intersecting subgradient sets, the projection is well-defined. Furthermore, if $g_0$ is initialized as $\sum_{i=1}^n \bc^{(0)}_i\cdot K(\bx_i,\cdot)$ for some $\bc^{(0)} \in \reals^n$ then inductively every $g_t$ has a form similar to \eqnref{eq: instantMD}. Although the algorithm requires a reproducing kernel from computational point of view, we can show that the projected algorithm would converge even under mild assumptions on a general loss functional $\msf L$. For the rest of the section, we would consider the following iteration steps to avoid the explicit existence of a reproducing kernel for a reflexive Banach space $\cB$.
\begin{subequations}\label{eq: constMD}
\begin{align}
    g_t &\gets g_{t-1} - \eta\cdot \partial_{f_{t-1}} \msf L\\
    f_{t} &\gets \Pi_{\cB_0}^\Phi \paren{\paren{\partial \Phi}^{-1} (g_t)}
\end{align}
\end{subequations}
Rest of the section, we will show the convergence of the steps above. First, we show a useful result on the optimality of the projection in the projected mirror descent algorithm with the proof detailed in \appref{app: natdirection}. This essentially states that even after the projection the algorithm always points in the direction of descent. 

\begin{lemma}\label{lemma: natdirection}
 Let \( f \in \cB_0 \cap \cC \) and \( f' \in \cC \), then
 \begin{align*}
     \inner{\Pi_{\cB_0}^{\Phi}(f') - f,\hspace{1mm} \partial_{\Pi_{\cB_0}^{\Phi}(f')} \Phi - \partial_{f'} \Phi}_{\cB} \le 0
 \end{align*}
\end{lemma}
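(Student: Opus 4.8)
The plan is to recognize this as the standard first-order optimality condition for a Bregman projection onto a convex set, transcribed to the functional setting. Write $p := \Pi_{\cB_0}^{\Phi}(f')$ for brevity. By \defref{def: pmap}, $p$ minimizes the functional $f \mapsto \mf D_{\Phi}(f, f')$ over the convex set $\cB_0 \cap \cC$, and this $\argmin$ is attained (hence $p \in \cC$ with $\partial_p\Phi \in \cB^*$ well-defined) by the coercivity assumption that $\Phi$ diverges on $\partial\cC$. The first thing I would establish is that $f \mapsto \mf D_{\Phi}(f, f')$ is convex in its first argument: expanding the definition in \eqnref{eq: BD},
\begin{align*}
    \mf D_{\Phi}(f, f') = \Phi(f) - \Phi(f') - \inner{f - f', \partial_{f'}\Phi}_{\cB},
\end{align*}
the map equals $\Phi(f)$ plus terms affine in $f$ (the $\Phi(f')$ contribution is constant and the duality bracket is linear in $f$), so convexity follows directly from convexity of $\Phi$.

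Next I would compute the subgradient of $f \mapsto \mf D_{\Phi}(f, f')$ at the point $p$. Since the affine contribution $-\inner{f, \partial_{f'}\Phi}_{\cB}$ has constant derivative $-\partial_{f'}\Phi$, the subgradient at $p$ is exactly $\partial_p \Phi - \partial_{f'}\Phi$, where $\partial_p\Phi$ is the selected subgradient of $\Phi$ at $p$ (well-defined since, by \defref{def: mp}, $\Phi$ has non-intersecting subgradient sets).

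The core step is then the first-order optimality condition for constrained convex minimization. For any feasible $f \in \cB_0 \cap \cC$ and any $t \in [0,1]$, convexity of $\cB_0 \cap \cC$ gives $p + t(f - p) = (1-t)p + tf \in \cB_0 \cap \cC$, and minimality of $p$ yields $\mf D_{\Phi}(p + t(f - p), f') \ge \mf D_{\Phi}(p, f')$. Dividing by $t$ and letting $t \to 0^+$ produces the one-sided directional-derivative inequality
\begin{align*}
    \inner{f - p, \partial_p \Phi - \partial_{f'}\Phi}_{\cB} \ge 0,
\end{align*}
which upon negating is precisely the claimed bound $\inner{p - f, \partial_p \Phi - \partial_{f'}\Phi}_{\cB} \le 0$, i.e. $\inner{\Pi_{\cB_0}^{\Phi}(f') - f, \partial_{\Pi_{\cB_0}^{\Phi}(f')}\Phi - \partial_{f'}\Phi}_{\cB} \le 0$.

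I expect the main obstacle to be the careful handling of the possible non-G\^ateaux-differentiability of $\Phi$: the directional-derivative passage to the limit should be phrased via subgradients, or equivalently via the normal-cone optimality condition $0 \in \partial_p \mf D_{\Phi}(\cdot, f') + N_{\cB_0 \cap \cC}(p)$, rather than a literal difference quotient, and one must confirm that the selected subgradient $\partial_p\Phi$ used in the optimality condition is the same selection appearing in the statement. The coercivity hypothesis on $\Phi$ at $\partial\cC$ is what guarantees the minimizer lies where this subgradient is a genuine element of $\cB^*$. Everything beyond this is routine manipulation of the Bregman identity.
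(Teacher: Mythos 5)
Your proposal is correct and follows essentially the same route as the paper's proof in Appendix~\ref{app: natdirection}: convexity of $f \mapsto \mf D_{\Phi}(f,f')$ in its first argument, identification of its (sub)gradient with $\partial_{(\cdot)}\Phi - \partial_{f'}\Phi$, and the first-order optimality condition at the projection, followed by negation. Your added caveat about subgradient selection in the non-differentiable case is a fair point, but it is moot in the paper's application since Theorem~\ref{thm: constopt} assumes $\Phi$ is G\^ateaux differentiable.
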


We demonstrate convergence for a class of loss functionals that are \(L\)-Lipschitz (\defref{def: lips}) and convex. Using the computation of Frechet derivative in \eqnref{eq: sqdiff}, we can show that square and cross-entropy loss functionals satisfy this property if the kernel $K$ and functions $f \in \cB$ are bounded. Additionally, the mirror map is required to be \(\mu\)-strongly convex. A notable example of such mirror maps is the squared \(p\)-norms (\(1 < p < 2\)) on finite-dimensional Banach spaces. In Section \ref{sec: example}, we construct a \(p\)-norm RKBS and provide a concrete algorithm for mirror maps of the form \(\frac{1}{2}\|\cdot\|_p^2\) (see Algorithm \ref{alg: md}).

Now, we state the main result of this subsection on the convergence of projected MDA in \eqref{eq: constMD}, with the proof detailed below. In the discussion above, we did not assume \(\Phi\) to be Gâteaux differentiable; however, for ease of discussion and without loss of generality, we assume it in the following statement.
\begin{theorem}[constrained optimization]\label{thm: constopt}
Assume $\cB$ be a reflexive Banach space. Let $\cB_0 \subseteq \cB$ be a compact convex set of functionals. Consider the optimization problem:
\begin{align*}
    \min_{f \in \cB_0} \msf L(f)
\end{align*}
Assume that the loss functional $\msf L$ is convex and $L$-Lipshitz (wrt $\norm{\cdot}_\cB$). Furthermore, assume that there exists a proper mirror map $\Phi: \cC \to \reals$ that is G\^ateaux differentiable and $\mu$-strongly convex over $\cC \cap \cB_0$ (wrt $\norm{\cdot}_\cB$), where $\cC$ is the closure of $\cB_0$. Let $R^2 = \max_{f, f' \in \cB_0 \cap \cC} |\Phi(f) - \Phi(f')|$. Then, there is a choice of $\eta$ such that the projected mirror descent algorithm of \eqnref{eq: constMD} converges to the optimal solution in $\cB_0$ at the rate $\Tilde{O}(\frac{1}{\sqrt{t}})$.
\end{theorem}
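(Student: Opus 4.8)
The plan is to mirror the classical projected mirror descent analysis of \citet{bubeck2015convex} in the functional setting, tracking every quantity through the duality bracket and the Bregman divergence $\mf D_\Phi$ of \eqnref{eq: BD}. Write $f^*\in\cB_0$ for the minimizer and let $y_t:=(\partial\Phi)^{-1}(g_t)$ be the unprojected iterate, so that $f_t=\Pi_{\cB_0}^\Phi(y_t)$ and the dual step of \eqnref{eq: constMD} reads $\partial_{y_t}\Phi-\partial_{f_{t-1}}\Phi=-\eta\,\partial_{f_{t-1}}\msf L$. Convexity of $\msf L$ then gives, at each step,
\begin{align*}
\msf L(f_{t-1})-\msf L(f^*)\le \dist{f_{t-1}-f^*}{\partial_{f_{t-1}}\msf L}=\tfrac1\eta\dist{f_{t-1}-f^*}{\partial_{f_{t-1}}\Phi-\partial_{y_t}\Phi},
\end{align*}
converting the optimality gap into a quantity expressed purely through the mirror map.

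Next I would apply the three-point identity \eqnref{eq: bregid} (with $f=f_{t-1}$, $f'=y_t$, $h=f^*$) to rewrite the bracket as $\mf D_\Phi(f_{t-1},y_t)+\mf D_\Phi(f^*,f_{t-1})-\mf D_\Phi(f^*,y_t)$, and then invoke the projection optimality of \lemref{lemma: natdirection}, which through the same identity yields the generalized Pythagorean inequality $\mf D_\Phi(f^*,y_t)\ge \mf D_\Phi(f^*,f_t)+\mf D_\Phi(f_t,y_t)$. Substituting produces the per-step bound
\begin{align*}
\msf L(f_{t-1})-\msf L(f^*)\le \tfrac1\eta\Big[\mf D_\Phi(f^*,f_{t-1})-\mf D_\Phi(f^*,f_t)+\mf D_\Phi(f_{t-1},y_t)-\mf D_\Phi(f_t,y_t)\Big],
\end{align*}
in which the first difference telescopes across iterations and the second is a controllable residual.

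The residual is bounded by expanding $\mf D_\Phi(f_{t-1},y_t)-\mf D_\Phi(f_t,y_t)=-\mf D_\Phi(f_t,f_{t-1})+\eta\dist{f_{t-1}-f_t}{\partial_{f_{t-1}}\msf L}$, applying $\mu$-strong convexity of $\Phi$ (\defref{def: strongconv}) to get $\mf D_\Phi(f_t,f_{t-1})\ge\tfrac\mu2\norm{f_t-f_{t-1}}_\cB^2$, and bounding the remaining pairing by Cauchy--Schwarz together with the $L$-Lipschitz bound $\norm{\partial_{f_{t-1}}\msf L}_{\cB^*}\le L$ (\defref{def: lips}). Maximizing $\eta L\,\norm{f_t-f_{t-1}}_\cB-\tfrac\mu2\norm{f_t-f_{t-1}}_\cB^2$ over the norm gives a residual of at most $\eta^2L^2/(2\mu)$. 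Summing over $s=0,\dots,t-1$, telescoping the Bregman terms, bounding $\mf D_\Phi(f^*,f_0)\le R^2$ (taking $f_0$ to minimize $\Phi$ over $\cB_0\cap\cC$, so the linear term is nonnegative by first-order optimality), and passing to the averaged iterate $\bar f_t=\tfrac1t\sum_{s=0}^{t-1}f_s$ via Jensen yields
\begin{align*}
\msf L(\bar f_t)-\msf L(f^*)\le \frac{R^2}{\eta t}+\frac{\eta L^2}{2\mu}.
\end{align*}
Choosing $\eta=\tfrac{R}{L}\sqrt{2\mu/t}$ balances the two terms and gives the $O(1/\sqrt t)$ rate, with the $\tilde O$ absorbing the dependence on $R,L,\mu$ and a horizon-dependent (or doubling) step size.

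The main obstacle is not any single inequality---each is the Banach analogue of a Euclidean step---but rather the bookkeeping that makes them fit together: ensuring $(\partial\Phi)^{-1}$ and the projection $\Pi_{\cB_0}^\Phi$ are well defined and single valued (guaranteed by strict/strong convexity and the coercivity assumption on $\partial\cC$), applying \lemref{lemma: natdirection} with the correct arguments to pass from the unprojected $y_t$ to $f_t$, and reconciling the accumulated dual variable $g_t$ of \eqnref{eq: constMD} with the greedy update $\partial_{y_t}\Phi=\partial_{f_{t-1}}\Phi-\eta\,\partial_{f_{t-1}}\msf L$ used above. The non-Hilbertian geometry enters only through the consistent use of $\norm{\cdot}_\cB$ and its dual norm in Cauchy--Schwarz and in strong convexity, which is precisely why no second-order or Hilbertizability hypothesis (\assref{ass: 1}, \assref{ass: 2}) is needed here, in contrast to the linear-rate theorems.
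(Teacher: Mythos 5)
Your proposal is correct and follows essentially the same route as the paper's own proof: convexity of $\msf L$ plus the dual gradient step, the three-point identity \eqnref{eq: bregid}, the generalized Pythagorean inequality from \lemref{lemma: natdirection}, a telescoping Bregman term bounded by $R^2$ and a per-step residual bounded by $\eta^2L^2/(2\mu)$ via strong convexity, Cauchy--Schwarz and Lipschitzness, followed by Jensen on the averaged iterate and the choice $\eta=\tfrac{R}{L}\sqrt{2\mu/t}$. Your explicit flagging of the lazy-versus-greedy reconciliation of the dual variable $g_t$ with $\partial_{f_{t-1}}\Phi$ is a point the paper glosses over, but it does not change the argument.
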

\begin{proof} 
The proof idea depends on bounding $\msf L(f_t) - \msf L(f^*)$ in terms of Bregman divergences of iterates $f_i$'s \tt{wrt} the mirror map $\Phi$. Note that we don't assume anything on the differentiability of the loss functional $\msf L$, thus $\partial_{f}\msf L$ needn't have a \tt{unique} subdifferential. Thus, at each iteration step $t$ of projected MDA in \eqnref{eq: constMD}, we use a subdifferential $\hat{g}_t \in \partial_{f_t}\msf L$. In the proof below, we would use $\partial_{f_t}\msf L$ for the subdifferential without loss of generality. 

Also, since $\Phi$ is proper and strongly convex, using \lemref{lemma: inversemirror} we know that $\partial_{(\cdot)} \Phi$ is invertible. Since $\cB$ is reflexive, convex conjugate, denoted as $\Phi^*$, provides the inverse operator from $\cB^*$ to $\cB$. 

Using convexity of $\msf L$  and \lemref{lemma: natdirection} we can write
    \begin{align}
        \msf L(f_t) - \msf L(f^*) &\le  \inner{f_t - f^*,\, \partial_{f_t}\msf L}_{\cB}\nonumber\\
    &\le \frac{1}{\eta} \inner{f_t - f^*,\, \partial_{f_t} \Phi - g_{t+1}}_{\cB} \label{eq:pmd1}\\
    & = \frac{1}{\eta}\inner{f_t - f^*,\, \partial_{f_t}\Phi - \partial_{\partial^{}_{g_{t+1}} \Phi^*} \Phi}_{\cB}\label{eq:pmd2}
\end{align}
In \eqnref{eq:pmd1}, we have used the gradient step in the dual space (see \eqnref{eq: constMD}). Since $\partial_{(\cdot)} \Phi^*: \cB^* \to \cB$ is the inverse operator to $\partial_{(\cdot)} \Phi$, we can write $g_{t+1 = \partial_{\paren{\partial_{g_{t+1}} \Phi^*}}} \Phi$. 

Now, using the identity on the Bregman divergence $\mf D_{\Phi}$ in \eqnref{eq: bregid}, we can write \vspace{-4mm}

\begin{align*}
 &\inner{f_t - f^*,\, \partial_{f_t}\Phi - \partial_{\partial^{}_{g_{t+1}} \Phi^*} \Phi}_{\cB}\\
    & = \mf D_{\Phi}(f_t,\partial^{}_{g_{t+1}} \Phi^*) + \mf D_{\Phi}(f^*,f_t) - \mf D_{\Phi}(f^*,\partial^{}_{g_{t+1}} \Phi^*)\\
    & \le \mf D_{\Phi}(f_t,\partial^{}_{g_{t+1}} \Phi^*) + \mf D_{\Phi}(f^*,f_t) - \mf D_{\Phi}(f^*,f_{t+1}) - \mf D_{\Phi}(f_{t+1},\partial^{}_{g_{t+1}} \Phi^*)\\
    & = \underbrace{\mf D_{\Phi}(f^*,f_t) - \mf D_{\Phi}(f^*,f_{t+1})}_{I} +  \underbrace{\mf D_{\Phi}(f_t,\partial^{}_{g_{t+1}} \Phi^*)  - \mf D_{\Phi}(f_{t+1},\partial^{}_{g_{t+1}} \Phi^*)}_{II}
    \end{align*}\vspace{-5mm}

Now, rest of the proof follows similar steps as in \cite{bubeck2015convex} (Theorem 4.2). 
Summing over $i = 1$ to $i = t$ gives the following bound on (I):
    \begin{align*}
        (I) & = \sum_{i = 1}^{t} \mf D_{\Phi}(f^*,f_i) - \mf D_{\Phi}(f^*,f_{i+1})\\
        & =  \mf D_{\Phi}(f^*,f_1) - \mf D_{\Phi}(f^*,f_{t+1})\\
        & \le R^2 
    \end{align*}
    To bound (II), we note the following:
    \allowdisplaybreaks
    \begin{align}
        (II) & =  \mf D_{\Phi}(f_t,\partial^{}_{g_{t+1}} \Phi^*)  - \mf D_{\Phi}(f_{t+1},\partial_{g_{t+1}} \Phi^*) \nonumber\\
        & = \Phi(f_t) - \Phi(f_{t+1}) - \inner{f_t - f_{t+1}, \partial_{\partial_{g_{t+1}} \Phi^*} \Phi}_{\cB} \label{eq: 21}\\
        & \le \inner{f_t - f_{t+1}, \partial_{f_t} \Phi}_{\cB} - \frac{\mu}{2}||f_t - f_{t+1}||^2_{\cB} - \inner{f_t - f_{t+1}, \partial_{\partial_{g_{t+1}} \Phi^*} \Phi}_{\cB}\label{eq: 22}\\
        & = \inner{f_t - f_{t+1}, \partial_{f_t} \Phi - \partial_{\partial_{g_{t+1}} \Phi^*} \Phi}_{\cB} - \frac{\mu}{2}||f_t - f_{t+1}||^2_{\cB}\nonumber\\
        & = \inner{f_t - f_{t+1}, \eta\cdot \partial_{f_t} \msf L}_{\cB} - \frac{\mu}{2}||f_t - f_{t+1}||^2_{\cB}\label{eq: 24}\\
        & \le \eta\cdot||f_t - f_{t+1}||_{\cB}\cdot ||\partial_{f_t} \msf L||_{B^*} - \frac{\mu}{2}||f_t - f_{t+1}||^2_{\cB}\label{eq: 25}\\
        & \le \eta L\cdot ||f_t - f_{t+1}||_{\cB} - \frac{\mu}{2}||f_t - f_{t+1}||^2_{\cB}\label{eq: 26}\\
        & \le \frac{(\eta L)^2}{2\mu} \label{eq: 27}
    \end{align}
In \eqnref{eq: 21}, we have used the definition of the Bregman divergence as shown in \eqnref{eq: BD}. We bound the difference $\Phi(f_t) - \Phi(f_{t+1})$ using the $\mu$-strong conexity of the mirror map $\Phi$ in \eqnref{eq: 22}. In \eqnref{eq: 24}, we use the mirror descent update of \eqnref{eq: constMD}. Now, it remains to bound $\inner{f_t - f_{t+1}, \eta\cdot \partial_{f_t} \msf L}_{\cB}$ in terms of $|||f_t - f_{t+1}|_\cB$ for which we use the Cauchy-Schwarz inequality, which is what we achieve in \eqnref{eq: 25}. In \eqnref{eq: 26}, we used the $L$-Lipshitzness of the loss functional $\msf L$. Finally, we use the following inequality $aq - bq^2 \le \frac{a^2}{4b}$ for any $q \in \reals$ to achieve the final bound in \eqnref{eq: 27}.

Now, adding for $t$ iterations we get
\begin{align*}
    \sum_{i=1}^t (\msf L(f_t) - \msf L(f^*)) & = \frac{1}{\eta}\paren{\mf D_{\Phi}(f^*,f_1) - \mf D_{\Phi}(f^*,f_{t+1}) + \frac{t(\eta L)^2}{2\mu}} \le \frac{R^2}{\eta} + \frac{\eta t L^2}{2\mu}
\end{align*}
 But using convexity 
 \begin{align*}
     \frac{1}{t} \sum_{i=1}^t \msf L(f_t) - \msf L(f^*) \le \msf L \paren{\frac{1}{t}\sum_{i=1}^t f_t} - \msf L(f^*) \le \frac{R^2}{t\eta} + \frac{\eta L^2}{2\mu}
 \end{align*}
 If apriori we set $\eta = \frac{R}{L}\sqrt{\frac{2\mu}{t}}$, then $\frac{R^2}{t\eta} + \frac{\eta L^2}{2\mu} = \frac{\sqrt{2}RL}{\sqrt{\mu t} }$, which gives the stated claim on the rate of convergence.
\end{proof}

\begin{remark} \thmref{thm: constopt} is stated without any assumption on the smoothness and differentiability of the loss functional $\msf L$. But with $\beta$-smoothness, one can hope to achieve a rate of $O(\frac{1}{t})$ with a slight variant of MDA by extending the idea of mirror prox as shown in \cite{bubeck2015convex} to the functional setting.

\end{remark}

\section{Example of MD on an RKBS}\label{sec: example}

In this section, we show a construction of a non-Hilbertian Banach space with a reproducing property. This space is spanned by eigenfunctions centered at chosen datapoints. Using these eigenfunctions, that are written in the form of kernel evaluations, we construct the unique kernel of the Banach space. Furthermore, with a standard $\ell_p$ type mirror map, we provide the explicit form of the mirror descent in \algoref{alg: md}.

Consider a locally compact Hausdorff set $\cX \subseteq \reals^d$. Consider a bivariate function $H: \cX \times \cX \to \reals\setminus\curlybracket{- \infty, + \infty}$. Fix a set of centers $C := \curlybracket{\bc_1,\bc_2,\ldots,\bc_k} \subset \cX$ such that $\curlybracket{H(\cdot,\bc_i)}_{i=1}^k$ 
is a set of linearly independent functions over the field $\reals$.

Now, for $p \neq 2$ and $k > 0$, we define the space of functions $\ell_p^k(C)$ as follows:
\begin{align}
  \ell_p^k(C) := \condcurlybracket{\sum_{i=1}^k \alpha_i\cdot H(\cdot,\bc_i)}{ \alpha \in \reals^k,\, \paren{\sum_{i=1}^k |\alpha_i|^p}^{\frac{1}{p}} < \infty}  \label{eq: RKBS}
\end{align}
The underlying norm is induced by the $\ell_p$ norm, i.e. for any $f_{\alpha} \in \ell_p^k$, $\norm{f_{\alpha}}_{\ell_p^k} = \paren{\sum_{i=1}^k |\alpha_i|^p}^{\frac{1}{p}}$. Note that due to the linear independence assumption, each function $f\in\ell^k_p$ has a unique representation in terms of the basis functions, whereby the norm is uniquely defined. \cite{nemirovski1983problem} studied a similar Banach space with finite length sequences of size $n$ with $p$-norm. Here, we differ by using the dot product of the vectors $\alpha$ with the evaluations of the bivariate map $H$.  

In this section, we study the real vector space $\paren{\ell_p^k(C), ||\cdot||_{\ell_p^k}}$. For ease of notation we would only write $\ell_p^k$ unless stated otherwise to signify a different choice of centers or bivariate function $H$. We will show that $\paren{\ell_p^k(C), ||\cdot||_{\ell_p^k}}$ is a reproducing kernel Banach  space.

First, we will show that the norm $||\cdot||_{\ell_p^k}$ is not induced by an inner product if $p \neq 2$. The key is to observe that any inner product that induces the norm $\norm{\cdot}_{\ell_p^k}$ would violate the parallelogram law. 

Let $\alpha = (1,1,0,\ldots,0)$ and $\alpha' = (1,-1,0,\ldots,0)$. Note that by parallelogram law,
\begin{align*}
  2||f_{\alpha}||_{\ell_p^k}^2 + 2||f_{\alpha}||_{\ell_p^k}^2 = ||f_{\alpha} + f_{\alpha'}||_{\ell_p^k}^2 + ||f_{\alpha} - f_{\alpha'}||_{\ell_p^k}^2  
  \implies 4\cdot 2^{2/p} = 8 \implies p = 2.
\end{align*}
Note that this space is isometrically isomorphic to the standard $\ell_p$ space of finite sequences with the $\norm{\cdot}_{\ell_p}$ norm. Thus, $\paren{\ell_p^k(C), ||\cdot||_{\ell_p^k}}$ is a complete normed vector space, aka a Banach space. 

Similarly, for $q = 1 + \frac{q}{p}$, we define the Banach space $\ell_q^k$ for a set linearly independent functions $\curlybracket{H(\bc_i,\cdot)}_{i=1}^k$ (over the field $\reals$) with the corresponding norm $\norm{\cdot}_{\ell_q^k}$ as follows:
\begin{align}
  \ell_q^k(C) := \condcurlybracket{\sum_{i=1}^k \alpha_i\cdot H(\bc_i,\cdot)}{ \alpha \in \reals^k,\, \paren{\sum_{i=1}^k |\alpha_i|^q}^{\frac{1}{q}} < \infty}  \label{eq: RKBSdual}
\end{align}
We would use $(\ell_q^k,\norm{\cdot}_{\ell_q^k})$ with the implicit understanding that space in \eqnref{eq: RKBSdual} is assumed.

\begin{remark}
    We assume that the span $\left\langle H(\bc_i,\cdot): i \in \bracket{k}\right\rangle$ to be dense in $\ell_q^k$ and span $\left\langle H(\cdot, \bc_i): i \in \bracket{k}\right\rangle$ to be dense in $\ell_p^k$. 
\end{remark}

In the following, we would show that $\paren{\ell_p^k, ||\cdot||_{\ell_p^k}}$ is endowed with the reproducing property (see \defref{def: RK}).

\begin{lemma}\label{lemma: showRKBS} For $1<p<\infty$, $\paren{\ell_p^k, ||\cdot||_{\ell_p^k}}$ is a reproducing kernel Banach space. 
\end{lemma}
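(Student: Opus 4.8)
The plan is to verify \defref{def: RKBS} directly. The Banach-space structure is already in hand: the discussion preceding the lemma shows that $\paren{\ell_p^k(C), \norm{\cdot}_{\ell_p^k}}$ is isometrically isomorphic to the standard $k$-dimensional $\ell_p$ space, hence complete, so it is a Banach space of functions on $\cX$. What remains is to show that for every $\x \in \cX$ the point evaluation functional $\delta_{\x}$ is a continuous (bounded) linear functional on $\ell_p^k$, i.e.\ to exhibit a finite constant $C_{\x} \ge 0$ with $\abs{f(\x)} \le C_{\x}\cdot\norm{f}_{\ell_p^k}$ for all $f \in \ell_p^k$.

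First I would fix $\x \in \cX$ and an arbitrary $f \in \ell_p^k$. By the linear independence of $\curlybracket{H(\cdot,\bc_i)}_{i=1}^k$, $f$ admits a \emph{unique} representation $f = \sum_{i=1}^k \alpha_i H(\cdot,\bc_i)$, so that $\delta_{\x}(f) = f(\x) = \sum_{i=1}^k \alpha_i H(\x,\bc_i)$ is well defined as a function of $f$ alone. Applying H\"older's inequality with the conjugate exponents $p$ and $q = \tfrac{p}{p-1}$ yields
\begin{align*}
    \abs{f(\x)} = \abs{\sum_{i=1}^k \alpha_i H(\x,\bc_i)} \le \paren{\sum_{i=1}^k \abs{\alpha_i}^p}^{1/p}\paren{\sum_{i=1}^k \abs{H(\x,\bc_i)}^q}^{1/q} = \norm{f}_{\ell_p^k}\cdot C_{\x},
\end{align*}
with $C_{\x} := \paren{\sum_{i=1}^k \abs{H(\x,\bc_i)}^q}^{1/q}$. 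Because $H$ takes values in $\reals\setminus\curlybracket{-\infty,+\infty}$ and $k$ is finite, $C_{\x}$ is a finite nonnegative constant; since evaluation is clearly linear, this gives $\delta_{\x} \in \paren{\ell_p^k}^*$ and \defref{def: RKBS} is satisfied.

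Having established the RKBS property, I would then make the reproducing kernel explicit, as anticipated by the surrounding discussion. Identifying $\paren{\ell_p^k}^*$ with $\ell_q^k$ (the span of $\curlybracket{H(\bc_i,\cdot)}$) via the canonical coefficientwise pairing $\dist{\sum_i \alpha_i H(\cdot,\bc_i)}{\sum_j \beta_j H(\bc_j,\cdot)} = \sum_i \alpha_i \beta_i$, which is continuous by the very same H\"older bound, one reads off from $f(\x) = \sum_i \alpha_i H(\x,\bc_i)$ that the reproducing element at $\x$ must carry coefficients $\beta_i = H(\x,\bc_i)$. This produces the candidate kernel $K(\x,\by) = \sum_{i=1}^k H(\x,\bc_i)\,H(\bc_i,\by)$, after which verifying $\dist{f}{K(\x,\cdot)} = f(\x)$ against \defref{def: RK} is a one-line check, while $K(\x,\cdot) \in \ell_q^k$ and $K(\cdot,\by) \in \ell_p^k$ hold by construction.

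The analytic content is routine; the two places that need genuine care are (i) invoking linear independence so that $f(\x)$ depends only on $f$ and not on a choice of coefficients, which is what turns $\delta_{\x}$ into a bona fide functional, and (ii) in the kernel step, confirming that the coefficientwise pairing coincides with the duality bracket under the identification $\paren{\ell_p^k}^* \simeq \ell_q^k$. I expect the heaviest bookkeeping to lie in cleanly setting up this dual identification rather than in any estimate, the latter being a single application of H\"older.
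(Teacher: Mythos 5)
Your proposal is correct and follows essentially the same route as the paper: both verify Definition~\ref{def: RKBS} by exhibiting a finite constant bounding the point evaluation functional, the only difference being that you obtain the sharper constant $C_{\x} = \bigl(\sum_i |H(\x,\bc_i)|^q\bigr)^{1/q}$ via H\"older while the paper uses the cruder bound $k\cdot\max_i |H(\x,\bc_i)|$. Your additional remarks on uniqueness of the representation and the explicit kernel are sound but go beyond what the lemma's proof requires (the paper treats the kernel construction separately, after Lemma~\ref{thm: dual}).
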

\begin{proof} We need to show that the evaluation functional on the space $\ell_p^k$ is continuous, or there exists a positive constant $C_{\x}$ for any given ${\x} \in X$ such that 
\begin{align*}
  |\delta_{\x}(f_{\alpha})| = |f_{\alpha}({\x})| \le C_{\x}\cdot||f_{\alpha}||_{\ell_p^k}  
\end{align*}
for all $f_{\alpha} \in \ell_p^k$. It is easy to note for $C_{\x} = k\cdot \max_{i\in \bracket{k}} |H(\bc_i, {\x})|$ suffices for this as
\begin{align*}
  |f_{\alpha}(x)| = \bigg\lvert\sum_{i=1}^k \alpha_i\cdot H({\x},\bc_i)\bigg\rvert \le \max_{i\in \bracket{k}} |H({\x},\bc_i)| \cdot \sum_{i=1}^k |\alpha_i| \le k \cdot\max_{i\in \bracket{k}} |H({\x},\bc_i)| \cdot \paren{\sum_{i=1}^k |\alpha_i|^p}^{\frac{1}{p}}  
\end{align*}
\end{proof}

\paragraph{Dual space:} Now, we consider the dual space of $\ell_p^k$, denoted as ${\ell_p^k}^*$. We would show some interesting properties of this dual space. First, note that by definition, the dual norm is
\begin{equation}
  ||\msf F||_{{\ell_p^k}^*} = \sup_{||f||_{\ell_p^k} = 1} |\msf F(f)|  \label{eq: dualnorm}
\end{equation}

We would show that ${\ell_p^k}^*$ is isometrically isomorphic to $\ell_q^k$ where $\frac{1}{p} + \frac{1}{q} = 1$.

\begin{lemma}\label{thm: dual} Any bounded linear functional $\msf F \in \ell_p^{k*}$ can be uniquely represented as
$$\msf F(f_{\alpha}) := \sum_{i=1}^k \alpha_i \beta_i$$
for all $f_{\alpha} \in \ell_p^k$, where $\sum_{i=1}^k\beta_i\cdot H(\cdot,\bc_i) \in \ell_q^k$. Moreover, any function $g_\beta \in \ell_q^k$ defines a linear functional $\msf F$ in ${\ell_p^k}^*$ with dual norm
$$||F||_{{\ell_p^k}^*} = ||g_\beta||_{\ell_q^k} = \paren{\sum_{i=1}^k |\beta_i|^q}^{\frac{1}{q}}$$
\end{lemma}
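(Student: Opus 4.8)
The plan is to reduce everything to the classical finite-dimensional $\ell_p$--$\ell_q$ duality, exploiting the fact (already observed above) that $f_\alpha \mapsto \alpha$ is an isometric isomorphism from $\ell_p^k$ onto $(\reals^k, \norm{\cdot}_p)$. Since $\ell_p^k$ is $k$-dimensional, every linear functional on it is automatically continuous, so the entire content of the lemma lies in identifying the representing coefficients and computing the dual norm exactly.

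First I would establish the representation together with its uniqueness. Given $\msf F \in {\ell_p^k}^*$, set $\beta_i := \msf F(H(\cdot, \bc_i))$ for $i \in \bracket{k}$. By linearity, for any $f_\alpha = \sum_{i=1}^k \alpha_i H(\cdot, \bc_i)$ we obtain $\msf F(f_\alpha) = \sum_{i=1}^k \alpha_i \beta_i$. Because $\curlybracket{H(\cdot, \bc_i)}_{i=1}^k$ are linearly independent, the coordinates $\alpha$ of $f_\alpha$ are uniquely determined, and hence so are the $\beta_i$; this yields both the claimed form and its uniqueness. I then identify $\beta = (\beta_1, \dots, \beta_k)$ with the element $g_\beta = \sum_i \beta_i H(\bc_i, \cdot) \in \ell_q^k$.

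The main step is the exact computation of the dual norm. The upper bound $|\msf F(f_\alpha)| = |\sum_i \alpha_i \beta_i| \le \norm{\alpha}_p \norm{\beta}_q$ is immediate from H\"older's inequality, so $\norm{\msf F}_{{\ell_p^k}^*} \le \norm{\beta}_q = \norm{g_\beta}_{\ell_q^k}$. For the matching lower bound, which is the crux of the argument, I would exhibit the H\"older extremizer: assuming $\beta \neq 0$, take $\alpha_i := \sgn{\beta_i}\cdot|\beta_i|^{q-1}/\norm{\beta}_q^{q-1}$. Using $q - 1 = q/p$ (which is exactly the conjugacy relation $p^{-1} + q^{-1} = 1$) one checks that $\norm{\alpha}_p = 1$ while $\sum_i \alpha_i \beta_i = \norm{\beta}_q$, so the supremum in the dual-norm definition is attained and equals $\norm{\beta}_q$; the case $\beta = 0$ is trivial.

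Finally, for the converse direction I would verify that any $g_\beta \in \ell_q^k$ induces a bounded functional via $\msf F(f_\alpha) := \sum_i \alpha_i \beta_i$, whose norm is again $\norm{\beta}_q$ by the same two inequalities. Together these establish the isometric isomorphism ${\ell_p^k}^* \cong \ell_q^k$, realized through the bilinear pairing $\inner{f_\alpha, g_\beta} = \sum_i \alpha_i \beta_i$. I expect the only nonroutine part to be the bookkeeping of exponents in the extremizer (verifying $q - 1 = q/p$ so that $\norm{\alpha}_p = 1$ and the pairing collapses to $\norm{\beta}_q$), together with separating out the degenerate $\beta = 0$ case; everything else is a direct transcription of standard $\ell_p$ duality into the $H$-basis notation.
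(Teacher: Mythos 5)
Your proposal is correct and follows essentially the same route as the paper's proof: define $\beta_i := \msf F(H(\cdot,\bc_i))$, get the upper bound on the dual norm from H\"older, and get the matching lower bound from the H\"older extremizer (your normalized $\alpha_i = \sgn{\beta_i}|\beta_i|^{q-1}/\norm{\beta}_q^{q-1}$ is just the paper's test vector $\alpha_i' = \sgn{\beta_i}|\beta_i|^{q/p}$ rescaled). Your added remarks on automatic continuity in finite dimensions, uniqueness via linear independence, and the degenerate $\beta=0$ case are minor tidy-ups rather than a different argument.
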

\begin{proof}
    For this result, we assume\footnote{Case where $p = 1$ can be similarly handled.} that $p > 1$. Consider an arbitrary function $f_{\alpha} \in \ell_p^k$. Note that we can write
    \begin{align*}
        \msf F(f_{\alpha}) = \sum_{i=1}^k \alpha_i\cdot \msf F(H(\cdot,\bc_i))\hspace{10mm} (\textit{by linearity})
    \end{align*}
    Define $\beta_i$ as $\msf F(H(\cdot,\bc_i))$ for all $i \in \bracket{k}$. Now, we wish to show that $g_\beta := \sum_{i=1}^k \beta_i\cdot H(\bc_i,\cdot) \in \ell_q^k$, i.e. $\norm{g_\beta}_{\ell_q^k}$ is bounded. \\
    
    Consider a choice of $\alpha'$ such that
    $$ \alpha_i' = |\beta_i|^{\frac{q}{p}} \sgn{\beta_i}$$
    Thus, the norm of $f_{\alpha'}$ is 
    $$||f_{\alpha'}||_{\ell_p^k} = \paren{\sum_{i=1}^k |\beta_i|^{q}}^{\frac{1}{p}}.$$
    On the other hand,
    $$|\msf F(f_{\alpha'})| = \sum_{i=1}^k |\beta_i|^{\frac{q}{p}+1} $$
    But by definition
    \begin{align}
        &|\msf F(f_{\alpha'})| \le ||\msf F||_{{\ell_p^k}^*}\cdot ||f_{\alpha'}||_{\ell_p^k} \nonumber\\
        \implies & \sum_{i=1}^k |\beta_i|^{\frac{q}{p}+1} \le ||\msf F||_{{\ell_p^k}^*} \cdot \paren{\sum_{i=1}^k |\beta_i|^{q}}^{\frac{1}{p}}\nonumber\\
        \implies & \sum_{i=1}^k |\beta_i|^{q} \le ||\msf F||_{{\ell_p^k}^*} \cdot \paren{\sum_{i=1}^k |\beta_i|^{q}}^{\frac{1}{p}}\hspace{5mm} \paren{\textit{since } \frac{q}{p} + 1 = q}\nonumber\\
        \implies & \paren{\sum_{i=1}^k |\beta_i|^q}^{\frac{1}{q}} \le ||\msf F||_{{\ell_p^k}^*} \label{eq: uppbound}
    \end{align}
    Since $\msf F$ is bounded the inequality above implies that $g_\beta \in \ell_q^k$. Now, assuming $f_{\alpha''} \in \ell_p^k$ and $g_\beta \in \ell_q^k$ such that $\norm{f_{\alpha''}}_{\ell_p^k} = 1$, using H\"{o}lder's inequality we note that
    \begin{align}
        |\msf F(f_{\alpha''})| = \bigg\lvert \sum_{i=1}^k \alpha''_i\beta_i \bigg\rvert \le ||f_{\alpha''}||_{\ell_p^k}\cdot ||g_\beta||_{\ell_q^k} \label{eq: boundF}
    \end{align}
    Plugging the bound of \eqnref{eq: boundF} in \eqnref{eq: dualnorm}, we get
    \begin{align*}
        ||\msf F||_{{\ell_p^k}^*} \le  ||g_\beta||_{\ell_q^k} 
    \end{align*}
    But \eqnref{eq: uppbound} implies that $||\msf F||_{{\ell_p^k}^*} =  ||g_\beta||_{\ell_q^k}$. 
    
    Showing that any element in $\ell_q^k$ defines a linear functional is straightforward where we consider a linear functional $\msf F$ that maps $H(\bc_i,\cdot)$ for all $i \in \bracket{k}$ to the corresponding parameters. This completes the proof.
\end{proof}

Thus, we can treat $\paren{\ell_q^k,\norm{\cdot}_{\ell_q^k}}$ as the dual of $\paren{\ell_q^k,\norm{\cdot}_{\ell_q^k}}$ where any element in $\ell_q^k$ is \tt{identified} as a linear transformation in ${\ell_p^k}^*$.

\paragraph{Reproducing kernel of $\ell_{p}^k$:} In \lemref{lemma: showRKBS}, we showed that $\paren{\ell_p^k, \norm{\cdot}_{\ell_p^k}}$ has the reproducing property. A question remains if we could find an explicit form for a kernel $K$ for the constructed RKBS.
In the following discussion, we show explicit kernels for the primal and dual spaces $\paren{\ell_q^k,\norm{\cdot}_{\ell_q^k}}$, and $\paren{\ell_q^k,\norm{\cdot}_{\ell_q^k}}$ respectively, which are both RKBSs. Depending on how we treat $\ell_q^k$ (dual or adjoint) we can show different kernels for the pair. As shown in Theorem 1~\citep{zhang09b}, if $\ell_q^k$ is treated as a dual then there is a unique kernel, otherwise as  an adjoint one can show multiple kernels. In the following, we will show the explicit constructions.

\textit{Kernel for adjoint Banach spaces}: Here, we study $\ell_p^k$ and $\ell_q^k$ as  Banach spaces \textit{adjoint} to each other as per \defref{def: RK}. We study a choice of bilinear map and show that how $H$ turns out to be the kernel in this setting. Consider the following bilinear form $\inner{\cdot,\cdot}_{\cB}$ on $\ell_p^k \times \ell_q^k$ defined as  
\begin{align}
  \inner{f_{\alpha}, \bg{\beta}}_{\cB} := \sum_{i,j} \alpha_i\beta_j K(\bc_j,\bc_i)  
\end{align}
It is easy to check its continuity. 
Furthermore,
\begin{gather*}
    \inner{f_{\alpha}, H({\x},\cdot)}_{\cB} = \sum_{i = 1}^k \alpha_iH({\x},\bc_i) = f_{\alpha}({\x}),\quad {\x} \in \cX\\
    \inner{H(\cdot,\by), \bg{\beta}}_{\cB} = \sum_{i = 1}^k \beta_iH(\bc_i,\by) = \sum_{i = 1}^k \beta_iH(\bc_i,\by) = \bg{\beta}(\by), \quad \by \in \cX
\end{gather*}
Thus, $H$ forms a reproducing kernel for $\ell_p^k$ and $\ell_q^k$ where we consider them as adjoint under a specific choice of bilinear form.

In the following, we study the case when a canonical bilinear form is induced by the dual space.

\textit{Canonical bilinear form and its unique kernel}: Here, we consider the bilinear form induced by action of the dual space ${\ell_p^k}^*$ on the primal space $\ell_p^k$. We would use this canonical action to define a bilinear map on $\ell_p^k \times \ell_q^k$. Any function $g_\beta \in \ell_q^k$ is identified as an element in $\msf F_\beta \in {\ell_p^k}^*$ as shown in \thmref{thm: dual}. Thus, we write
\begin{align*}
    \inner{f_{\alpha},\bg{\beta}}_{\ell_p^k \times \ell_q^k} = \inner{f_{\alpha}, \msf F_{\beta}}_{\ell_p^k \times {\ell_p^k}^*} = \msf F_{\beta}(f_{\alpha}) = \sum_{i = 1}^k \alpha_i\beta_i
\end{align*}
Note, that $H$ can't be the underlying kernel as 
\begin{equation*}
    \inner{f_{\alpha}, H(\bc_j,\cdot)}_{\ell_p^k \times \ell_q^k} = \alpha_j,
\end{equation*}
whereas the evaluation should be $\sum_{i=1}^k \alpha_iH(\bc_j,\bc_i)$. Now, consider a bivariate function $K: \cX \times \cX \to \reals$ that is defined for ${\x} \in \cX$ as 
\begin{align*}
    K(\cdot,{\x}) := \sum_{i =1}^k H(\bc_i,{\x})H(\cdot,\bc_i)\\
    K({\x},\cdot) := \sum_{i =1}^k H({\x},\bc_i)H(\bc_i, \cdot) 
\end{align*}

It is easy to check that for all ${\x} \in \cX$ we have $K(\cdot,{\x}) \in \ell_p^k$ and $K({\x},\cdot) \in \ell_q^k$. Furthermore, we note that
\begin{align*}
    \inner{f_{\alpha}, K({\x},\cdot)}_{\ell_p^k \times \ell_q^k} = \sum_{i=1}^k \alpha_iH({\x}, \bc_i) = f_{\alpha}({\x})\\
    \inner{K(\cdot, {\x}), \bg{\beta}}_{\ell_p^k \times \ell_q^k} =  \sum_{i=1}^k H(\bc_i,{\x})\beta_i = \bg{\beta}({\x})
\end{align*}


\begin{remark}
The construction of the RKBS as shown in \eqnref{eq: RKBS} can be extended to an arbitrary countable set of centers. This has been studied in \cite{xu2019generalized}. 
They consider a locally compact Hausdorff space $\Omega$ with measure $\mu$  in $\reals^d$ and eigenfunctions $\phi_n \in L_0(\Omega)$ for all $n \in \mathbb{N}$ and show that the following space
\begin{equation}
    B^p_{K}(\Omega) := \condcurlybracket{f := \sum_{i \in \nats} a_i\phi_i}{(a_i \in \nats) \in \ell_p}
\end{equation}
is an RKBS with the norm $||\cdot||_{B^p_{K}} = ||\cdot||_{\ell_p}$,
where $K \in L_0(\Omega \times \Omega)$ is a generalized Mercer kernel as defined in \cite{xu2019generalized}. Note that we don't assume any condition on the bivariate function $H$ other than the linear independence conditions over the fixed centers.   
\end{remark}

\paragraph{Mirror map} To instantiate the MDA, we consider a natural choice of mirror map. Consider the functional $\Phi_p: \ell_p^k \to \reals$ defined as follows:
\begin{align*}
    \Phi_p(f_{\alpha}) := \frac{1}{2}\cdot\norm{f_\alpha}_{\ell_p^k}^2, \quad \forall\, f_\alpha \in \ell_p^k 
\end{align*}
Note that the Fr\'{e}chet differential of $\Phi_p$ at $f_\alpha$ with increment $f_{\alpha'}$ is 
\begin{align*}
    \partial_{f_\alpha} \Phi_p(f_{\alpha'}) = \sum_{i = 1}^k \alpha_i'\cdot \frac{\sgn {\alpha_i}|\alpha_i|^{p-1}}{||\alpha||_p^{p-2}}
\end{align*}
Since ${\ell_p^k}^*$ is isometrically isomorphic to $\ell_q^k$, using \thmref{thm: dual} we have the following correspondence
\begin{align}
    \partial_{f_\alpha} \Phi_p \equiv g_{\beta} = \sum_{i=1}^k \beta_i\cdot H(\bc_i,\cdot),\,\text{ where } \beta = \frac{\sgn {\alpha}|\alpha|^{p-1}}{||\alpha||_p^{p-2}} \label{map: duallp}
\end{align}
Now, refer to the map
\begin{align}
    \partial_{(\cdot)} \Phi_p: \ell_p^k \to \ell_p^k,\, f\mapsto \partial_f \Phi_p
\end{align}
As a direct consequence of \lemref{lemma: inversemirror}, it is straight-forward to show that $ \partial_{(\cdot)} \Phi_p$ is invertible. Furthermore, the inverse has some desired properties as well.
\begin{corollary}\label{cor: lp} Fix any $p \in [1,\infty)$ and consider the Banach space $\ell_p^k$ as shown in \eqnref{eq: RKBS}. Then, the inverse of the G\^{a}teaux derivative of the map $\frac{1}{2}||\cdot||_p^2$ on $\ell_p^k$ is the G\^{a}teaux derivative of $\frac{1}{2}||\cdot||_q^2$ on $\ell_q^k$, where $\frac{1}{p} + \frac{1}{q} = 1$.
\begin{proof}
    Note that $\frac{1}{2}||\cdot||_p^2$ is a proper strictly convex, G\^{a}teaux differentiable map on $\ell_p^k$.
    Since the Fenchel conjugate of $\frac{1}{2}||\cdot||_p^2$ is $\frac{1}{2}||\cdot||_q^2$ (cf Example 3.27~\cite{Boyd2004}), the claim of the corollary follows immediately using \lemref{lemma: inversemirror}.
\end{proof} 
\end{corollary}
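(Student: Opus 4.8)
The plan is to prove the claim through convex duality, leaning on the bijectivity of the gradient map already furnished by \lemref{lemma: inversemirror} together with the conjugate-of-squared-norm identity. First I would record that, for $1 < p < \infty$, the functional $\Phi_p = \frac{1}{2}\|\cdot\|_{\ell_p^k}^2$ is proper, strictly convex, and G\^{a}teaux differentiable, its derivative being the one computed explicitly in \eqnref{map: duallp}; the endpoint $p=1$ (hence $q=\infty$) must be flagged separately, since there $\Phi_1$ is neither strictly convex nor differentiable and the statement is to be read in the appropriate limiting sense. Granting strict convexity and differentiability, \lemref{lemma: inversemirror} applies verbatim to $\Phi_p$ and guarantees that $\partial_{(\cdot)}\Phi_p : \ell_p^k \to (\ell_p^k)^* \cong \ell_q^k$ is a bijection, so its inverse is a well-defined operator from $\ell_q^k$ back to $\ell_p^k$; it then remains only to identify that inverse.

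The core step is the standard convex-analytic identity $(\partial \Phi_p)^{-1} = \partial \Phi_p^*$, where $\Phi_p^*$ denotes the Fenchel conjugate. Concretely, for $g \in (\ell_p^k)^*$ one has $f = (\partial \Phi_p)^{-1}(g)$ iff $g = \partial_f \Phi_p$ iff, by the Fenchel--Young equality, $f = \partial_g \Phi_p^*$; the last equivalence uses reflexivity of $\ell_p^k$ (so $(\ell_p^k)^{**} \simeq \ell_p^k$) together with the differentiability of $\Phi_p^*$, which holds precisely because $\Phi_p$ is strictly convex. I would then invoke the conjugate-of-squared-norm fact (cf. Example 3.27 of \cite{Boyd2004}): in any normed space the conjugate of $\frac{1}{2}\|\cdot\|^2$ is $\frac{1}{2}\|\cdot\|_*^2$ in the dual norm. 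Combining this with \thmref{thm: dual}, which identifies the dual norm on $(\ell_p^k)^*$ with the $\ell_q^k$ norm isometrically, yields $\Phi_p^* = \frac{1}{2}\|\cdot\|_{\ell_q^k}^2 = \Phi_q$, and hence $(\partial_{(\cdot)}\Phi_p)^{-1} = \partial_{(\cdot)}\Phi_q$, as claimed.

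I expect the main obstacle to be threading the dual identification of \thmref{thm: dual} consistently through the conjugation, so that ``$\partial \Phi_q$ on $\ell_q^k$'' is literally the same operator as ``$\partial \Phi_p^*$ on $(\ell_p^k)^*$,'' and in certifying that $\Phi_p^*$ is genuinely G\^{a}teaux differentiable rather than merely subdifferentiable. To sidestep the abstract conjugate machinery, an equally valid and more elementary route is to verify the inverse by direct composition: writing the explicit map $\alpha \mapsto \beta$ of \eqnref{map: duallp} and the candidate inverse $\beta \mapsto \operatorname{sgn}(\beta)|\beta|^{q-1}/\|\beta\|_q^{q-2}$, one checks using the relations $(p-1)(q-1)=1$ and $pq = p+q$ that $\|\beta\|_q = \|\alpha\|_p$ and that the composition returns $\alpha$. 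This computational check confirms the conclusion without appealing to reflexivity, at the cost of being specific to the $\ell_p^k$ construction.
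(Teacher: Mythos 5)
Your proposal takes essentially the same route as the paper's proof: strict convexity and G\^{a}teaux differentiability of $\frac{1}{2}\|\cdot\|_p^2$, the identification of its Fenchel conjugate with $\frac{1}{2}\|\cdot\|_q^2$, and \lemref{lemma: inversemirror} to invert the gradient map, with your write-up merely making explicit the step $(\partial \Phi_p)^{-1} = \partial \Phi_p^*$ that the paper leaves implicit. Your flag on the endpoint $p=1$ is a correct catch --- the corollary is stated for $p \in [1,\infty)$ but the argument genuinely requires $p>1$ for strict convexity and differentiability --- and the direct computational verification you sketch is a sound, more elementary alternative, though not the one the paper uses.
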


In \algoref{alg: md}, we provide explicit updates for MDA of \eqnref{eq: MD} for the optimization problem of \eqnref{eq: optPS} over $\ell_p^k$ for squared-error loss. The algorithm maps functions in the primal space $\ell_p^k$ to the dual space $\ell_q^k$ via a mirror map as shown in \eqnref{map: duallp}.
Below, we provide the justification for the update steps in \algoref{alg: md}. 

First, we note that the linear combination of the inverse of a function $g_{\beta}$ wrt $(\partial_{(\cdot)} \Phi_p)^{-1}$, say $\alpha$, can be computed in terms of $\beta$.
\begin{algorithm}[t]
\caption{Mirror descent on $\ell_p^k$}
\label{alg: md}
\KwData{Given a training set $\curlybracket{(\x_i,y_i)}_{i=1}^n$, model class $\ell_p^k$, similarity kernel $\widehat{H}$, initial parameters $\alpha_0$, $\beta_0$}
\KwResult{Parameters $\widehat{\alpha}$}
\nl Initialize $t = 0$\;
\While{$t \le T$}{
 \nl   $\beta^{(t)} \gets \beta^{(t-1)} - \eta\cdot(\widehat{H}^{\top}(\widehat{H}\alpha^{(t-1)} - Y))$\label{step1}\;
 \nl   $\alpha^{(t)} \gets \frac{\sgn {\beta^{(t)}} |\beta^{(t)}|^{q-1}}{\norm{\beta^{(t)}}_q^{q-1}}$\label{step2}\;
  \nl  t++\;
}
\nl return $\widehat{\alpha} = \alpha_T$\;
\end{algorithm}
\begin{lemma}\label{lemma: explicit}
    We have $\round{\partial_{(\cdot)}\Phi_p}^{-1} g_\beta = f_\alpha$
    where $\alpha_i = \sgn{\beta_i}\frac{\abs{\beta_i}^{q-1}}{\norm{\beta}_q^{q-1}}$ for all $i \in \bracket{k}$ and $\frac{1}{p} + \frac{1}{q} = 1$.
\end{lemma}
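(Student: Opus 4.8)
The plan is to deduce the lemma from \corref{cor: lp} together with one explicit differentiation, rather than by solving the defining inversion equation from scratch. By \corref{cor: lp}, the operator $\paren{\partial_{(\cdot)}\Phi_p}^{-1}$ coincides with $\partial_{(\cdot)}\Phi_q$, the G\^ateaux derivative of $\Phi_q = \frac{1}{2}\norm{\cdot}_{\ell_q^k}^2$ acting on the dual space $\ell_q^k$, where I use the isometric identification $(\ell_p^k)^{*} \cong \ell_q^k$ supplied by \lemref{thm: dual}. Thus it suffices to compute $\partial_{g_\beta}\Phi_q$ in closed form and then read off its coefficients as an element of $\ell_p^k$.

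First I would repeat the chain-rule computation that produced \eqnref{map: duallp}, but with the roles of $p$ and $q$ interchanged. Writing $\Phi_q(g_\beta) = \frac{1}{2}\paren{\sum_i \abs{\beta_i}^q}^{2/q}$ and differentiating in the $\beta$-coordinates yields a dual element whose $i$-th coefficient is proportional to $\sgn{\beta_i}\abs{\beta_i}^{q-1}$, with a normalizing power of $\norm{\beta}_q$ arising from the outer $(\cdot)^{2/q}$. Re-identifying this functional as an element of $\ell_p^k$ through \lemref{thm: dual} then gives $f_\alpha$ with $\alpha_i = \sgn{\beta_i}\abs{\beta_i}^{q-1}/\norm{\beta}_q^{\,c}$ for the relevant exponent $c$, which is exactly the claimed form.

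To pin down the exponent unambiguously, I would verify the identity by composing the forward and inverse maps: substitute the candidate $\alpha$ into the forward rule $\alpha_i \mapsto \sgn{\alpha_i}\abs{\alpha_i}^{p-1}/\norm{\alpha}_p^{p-2}$ of \eqnref{map: duallp} and check that $\beta$ is recovered. This reduces entirely to exponent arithmetic governed by the conjugacy relation $\frac{1}{p}+\frac{1}{q}=1$; the two identities that do all the work are $(p-1)(q-1)=1$ and $p+q=pq$, from which one also obtains the norm relation $\norm{\alpha}_p = \norm{\beta}_q$. The main obstacle is therefore purely bookkeeping: correctly tracking the power of $\norm{\beta}_q$ through both the differentiation and the primal--dual identification, since the precise value of that normalizing exponent is what distinguishes a map that rescales onto the unit sphere of $\ell_p^k$ from the exact inverse of $\partial_{(\cdot)}\Phi_p$.
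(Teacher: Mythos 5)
Your route is the same as the paper's: invoke \corref{cor: lp} to identify $\paren{\partial_{(\cdot)}\Phi_p}^{-1}$ with $\partial_{(\cdot)}\Phi_q$ and then read the coefficients off the $p\leftrightarrow q$ analogue of \eqnref{map: duallp}; the paper's proof is exactly these two sentences. The composition check you add is a genuinely good idea --- but it is also where the argument breaks if you actually run it. \eqnref{map: duallp} has denominator exponent $p-2$ (this is what the chain rule on $\frac{1}{2}\paren{\sum_i\abs{\alpha_i}^p}^{2/p}$ gives), so the swapped formula is $\alpha_i = \sgn{\beta_i}\abs{\beta_i}^{q-1}/\norm{\beta}_q^{q-2}$, not $/\norm{\beta}_q^{q-1}$. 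Your own bookkeeping identities settle the exponent $c$ you left open: with $\beta_i = \sgn{\alpha_i}\abs{\alpha_i}^{p-1}/\norm{\alpha}_p^{p-2}$ one gets $\norm{\beta}_q = \norm{\alpha}_p$, $(p-1)(q-1)=1$, and $(p-2)(q-1)=2-q$, hence
\begin{align*}
\frac{\sgn{\beta_i}\,\abs{\beta_i}^{q-1}}{\norm{\beta}_q^{q-1}} \;=\; \frac{\alpha_i}{\norm{\alpha}_p^{(2-q)+(q-1)}} \;=\; \frac{\alpha_i}{\norm{\alpha}_p},
\end{align*}
whereas the exponent $q-2$ composes to the identity. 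In other words, the formula in the statement is precisely the ``rescaling onto the unit sphere of $\ell_p^k$'' that you explicitly warned against, so the sentence ``which is exactly the claimed form'' is the step that fails. This is not a defect of your method: the paper's proof makes the same unexamined leap, asserting the $q-1$ exponent is ``immediate using \eqnref{map: duallp}'' when that equation yields $q-2$; the statement of the lemma (and \lineref{step2} of \algoref{alg: md}) should have $\norm{\beta}_q^{q-2}$ in the denominator.
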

\begin{proof}
    Using \corref{cor: lp}, it is clear that $\round{\partial_{(\cdot)}\Phi_p}^{-1} = \partial_{(\cdot)}\Phi_q$. Now, since $\Phi_q: \ell_q^k \to \reals$, thus we note that 
    \begin{gather}
     \partial_{(\cdot)} \Phi_q: \ell_q^k \to \ell_p^k \nonumber\\
     g_\beta \mapsto \sum_{i=1}^k \sgn{\beta_i}\frac{\abs{\beta_i}^{q-1}}{\norm{\beta}_q^{q-1}}\cdot H(\cdot,\bc_i) \nonumber
    \end{gather}
which is immediate using \eqnref{map: duallp}.
\end{proof} 
We consider the following notations to set up the algorithm.

\tt{Notations}: Coefficients of a mirror descent update at iteration step $t$ in MDA (see \eqnref{eq: MD}) are denoted as $\beta^{(t)}$ for the function $g_{\beta^{(t)}}$ in $\ell_q^k$ (similarly $\alpha^{(t)}$ for $f_{\alpha^{(t)}} \in \ell_p^k$). $\alpha$, $\beta$ and $Y := (y_1, y_2,\ldots,y_n)^T$ are treated as column vectors in $\reals^k$. 
We define a similarity matrix $\widehat{H}$ of dimension $(n \times k)$ where $\widehat{H}_{ij} := H(\x_i,\bc_j)$. For the set of centers $C = \curlybracket{\bc_1,\bc_2,\ldots,\bc_n}$, we also use $\bar{\bc}$ to denote a vector, e.g. $H(\cdot,\bar{\bc})$ denotes a row vector $(H(\cdot,\bc_1),H(\cdot,\bc_2),\ldots,H(\cdot,\bc_k))$. Similarly, for a training set $\curlybracket{\x_1,\x_2,\ldots,\x_n}$ we use $\bar{\x}$ to denote row vector $H(\bar{\x},\cdot) := (H(\x_1,\cdot),H(\x_2,\cdot),\ldots,H(\x_n,\cdot))$.

Using the notations above, for clarity we rewrite the update steps of \eqnref{eq: MD} here for squared error loss:
\begin{subequations}\label{eq: pnorm}
\begin{align}
    g_{\beta^{(t)}} &\gets g_{\beta^{(t-1)}} - 2\eta \sum_{i=1}^n (f_{\alpha^{(t-1)}}(\x_i) - y_i)\cdot K(\x_i,\cdot)  \\
    f_{\alpha^{(t)}} &\gets \paren{\partial \Phi_p}^{-1} (g_{\beta^{(t)}})
\end{align}
\end{subequations}
Explicit updates for the first equation~\lineref{step1} of \algoref{alg: md} can be derived as follows:
\allowdisplaybreaks
\begin{align*}
    \sum_{i=1}^k \beta^{(t)}_i\cdot H(\bc_i,\cdot) &= \sum_{i=1}^k \beta^{(t-1)}_i\cdot H(\bc_i,\cdot) -2\eta \sum_{i=1}^n  \bigparen{\sum_{j =1}^k (f_{\alpha^{(t-1)}}(\x_i) - y_i 
    )H({\x_i},\bc_j)H(\bc_j, \cdot)} \\
    &= \sum_{i=1}^k \beta^{(t-1)}_i\cdot H(\bc_i,\cdot) -2\eta \sum_{i=1}^n \sum_{j =1}^k \bigparen{\sum_{m=1}^k \alpha^{(t-1)}_m\cdot H(\x_i,\bc_m) - y_i}\cdot H({\x_i},\bc_j)H(\bc_j, \cdot) \\
    &= \sum_{i=1}^k \beta^{(t-1)}_i\cdot H(\bc_i,\cdot) -2\eta \sum_{j =1}^k \sum_{i=1}^n \bigparen{\sum_{m=1}^k \alpha^{(t-1)}_m\cdot H(\x_i,\bc_m) - y_i}\cdot H({\x_i},\bc_j)H(\bc_j, \cdot)\\
    &= \sum_{i=1}^k \beta^{(t-1)}_i\cdot H(\bc_i,\cdot) -2\eta \sum_{j =1}^k\paren{ \sum_{i=1}^n \bigparen{\sum_{m=1}^k \alpha^{(t-1)}_m\cdot H(\x_i,\bc_m) - y_i}\cdot H({\x_i},\bc_j)}H(\bc_j, \cdot)\\
    &= \sum_{i=1}^k \beta^{(t-1)}_i\cdot H(\bc_i,\cdot) -2\eta \sum_{j =1}^k\paren{ \sum_{i=1}^n \bigparen{H(\x_i,\bar{\bc})\alpha^{(t-1)} - y_i}\cdot H({\x_i},\bc_j)}H(\bc_j, \cdot)\\
    &= \sum_{i=1}^k \beta^{(t-1)}_i\cdot H(\bc_i,\cdot) -2\eta \sum_{j =1}^k\paren{ \sum_{i=1}^n H(\x_i,\bar{\bc})\alpha^{(t-1)}H({\x_i},\bc_j) - H({\bar{\x}},\bc_j)^T Y}H(\bc_j, \cdot)\\
    &= \sum_{i=1}^k \beta^{(t-1)}_i\cdot H(\bc_i,\cdot) -2\eta \sum_{j =1}^k\paren{ H({\bar{\x}},\bc_j)^T H(\bar{\x},\bar{\bc})\alpha^{(t-1)} - H({\bar{\x}},\bc_j)^T Y}H(\bc_j, \cdot)\\
    &= \sum_{i=1}^k \beta^{(t-1)}_i\cdot H(\bc_i,\cdot) -2\eta \sum_{j =1}^k H({\bar{\x}},\bc_j)^T\paren{ H(\bar{\x},\bar{\bc})\alpha^{(t-1)} - Y}H(\bc_j, \cdot)\\
    &= \sum_{i=1}^k \paren{\beta^{(t-1)}_i -2\eta H({\bar{\x}},\bc_i)^T\paren{ H(\bar{\x},\bar{\bc})\alpha^{(t-1)} - Y}}H(\bc_i, \cdot)\\
\end{align*}
This implies that $\beta^{(t)} = \beta^{(t-1)} - 2\eta \widehat{H}^T\paren{\widehat{H}\alpha^{(t-1)} - Y}$.
Using \lemref{lemma: explicit}, \lineref{step2} follows:

\begin{align*}
    \alpha^{(t)} \gets \frac{\sgn {\beta^{(t)}} |\beta^{(t)}|^{q-1}}{\norm{\beta^{(t)}}_q^{q-1}}
\end{align*}
Thus, the full procedure in \algoref{alg: md} can be written down without retaining evaluations $H(\bc_i,\cdot)$'s for linear transformations in the dual space $\ell_q^k$ (similarly for functions in the primal space $\ell_p^k$).

In \algoref{alg: md}, the updates are based on a canonical view of kernel. In the case of adjoint based kernel, the updates for $\beta^{(t)}$ would be sligtly different:
\begin{align*}
    \beta^{(t)} \gets \beta^{(t-1)} - \eta\cdot(\widehat{H}\alpha^{(t-1)} - Y)
\end{align*}

\paragraph{Convergence of \algoref{alg: md}:}
We utilize the square loss functional, which is \(L\)-Lipschitz with respect to the \(p\)-norm of the RKBS, assuming that \(\alpha\) has a bounded norm. According to \cite{JMLR:v13:kakade12a} (see Lemma 9), the strong convexity parameter of the mirror map \(\frac{1}{2}\|\cdot\|_p^2\) is \((p-1)\) for \(p \in (1,2)\). For \(p > 2\), the corresponding squared \(p\)-norm is not strongly convex. Consequently, the convergence of Algorithm \ref{alg: md} is theoretically guaranteed at the rate shown in Theorem \ref{thm: constopt} under \(p\)-norm RKBSs for \(p \in (1,2)\). We validate this convergence for various \(p\) values within this range for step functions, as demonstrated in Figure \ref{fig:example_image}.

\begin{figure}[t]
    \centering
    \includegraphics[width=\linewidth]{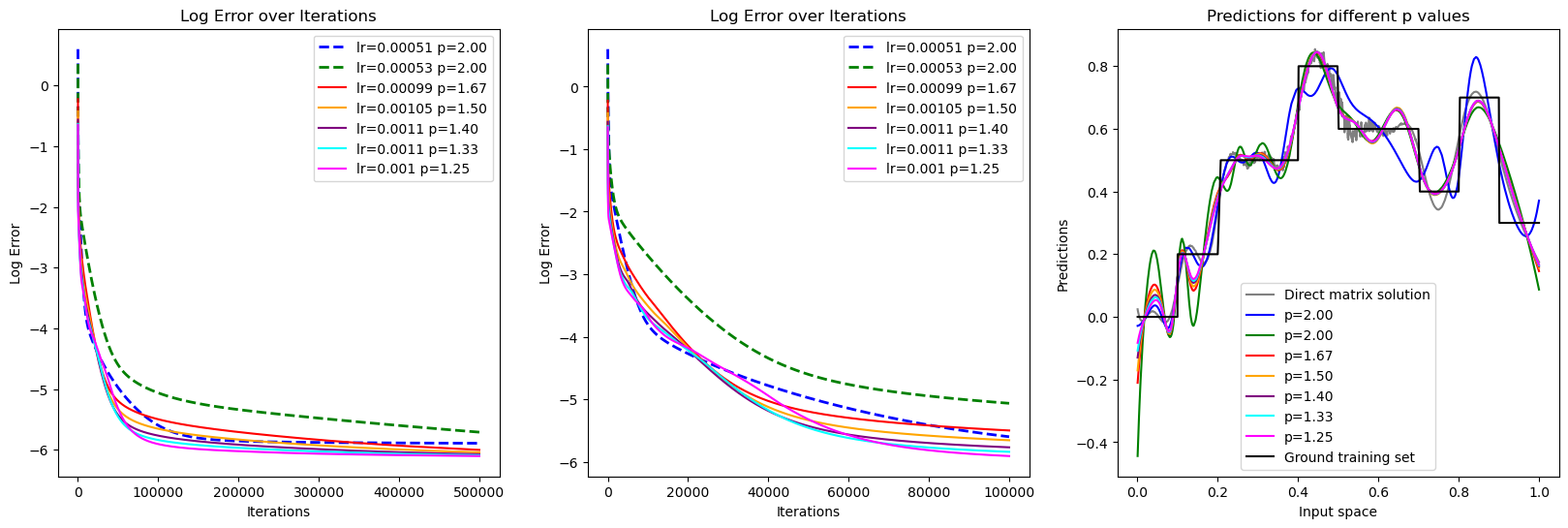}
    \caption{Results from numerical experiments using the mirror descent algorithm (\algoref{alg: md}) for varying \( p \) values in a one-dimensional space. We apply squared error loss on 800 training points with 25 centers using the Locally Adaptive-Bandwidths (LAB) RBF kernel. The bivariate function \( H \) is defined by \( \exp \left( -\frac{\| \theta_i \odot (x - \bc_i) \|_2^2}{2} \right) \), with \(\theta_i\) optimized via gradient descent. (\textbf{Leftmost Plot}): Logarithm of training error versus iterations. 
(\textbf{Middle Plot}): Zoomed-in log error versus iterations up to 100,000 steps.
(\textbf{Rightmost Plot}): Predictions of learned kernel classifiers compared with the direct matrix solution (gray curve) on the training set.}
    \label{fig:example_image}
\end{figure}
In the following, we discuss our numerical experiments. 

\paragraph{Numerical Experiments:} We present experimental results that validate our theoretical findings. The dynamics of convergence of Algorithm \ref{alg: md} for different values of \( p \) are plotted, for an optimization problem in a one-dimensional input space. Specifically, we visualize the impact of different \( p \) values on the approximation performance and training error.

To construct the space \( \ell_p^k \), we employ the Locally Adaptive-Bandwidths (LAB) RBF kernels as introduced in \cite{he2024learninganalysiskernelridgeless}. Using this kernel, the \( p \)-norm RKBS \( \ell_p^k \) is constructed using a bivariate function \( H \), defined as
$$
H(x, \bc_i) = \exp \left( -\frac{\| \theta_i \odot (x - \bc_i) \|_2^2}{2} \right), \quad \forall \bc_i \in C,
$$
where \( \odot \) denotes the Hadamard (element-wise) product, \( \| \cdot \|_2 \) is the \( L^2 \)-norm, and \( \theta_i \in \mathbb{R} \) is a center-dependent bandwidth vector. Note that \( H \) is asymmetric, which implies that the kernel \( K \) for the space \( \ell_p^k \) is also asymmetric. The bandwidths \( \theta_i \) are computed by performing 10 gradient steps on the loss function \( \mathsf{L}(\alpha, \boldsymbol{\theta}) \), where \( \boldsymbol{\theta} = [\theta_1, \theta_2, \ldots, \theta_k]^\top \), according to
$$
\boldsymbol{\theta}_t \leftarrow \boldsymbol{\theta}_{t-1} - \eta \partial_{\boldsymbol{\theta}_{t-1}} \mathsf{L}(\alpha, \boldsymbol{\theta}),
$$
where the learning rate \( \eta \) differs from the learning rate used in Algorithm \ref{alg: md}.

\tt{Experimental Setup:} The input space is fixed as \( \mathcal{X} = \mathbb{R} \). Using the computed \( \boldsymbol{\theta} \) from the gradient update, we run Algorithm \ref{alg: md} to learn a step function in one dimension over the interval \([0,1]\), as depicted in Figure \ref{fig:example_image} (black curve in the rightmost plot). The loss functional used is the squared error. Our training set consists of 800 points, with 25 centers randomly selected from this set, yielding \( k = 25 \) for constructing the space \( \ell_p^k \). We explore different values of \( p \) in the range \( (1,2] \), specifically \( \{2, 1.67, 1.5, 1.4, 1.33, 1.25\} \).

We present two types of plots: (1) the logarithm of the training error versus the number of iterations (see the leftmost and middle plots in Figure \ref{fig:example_image}), and (2) the predictions of the learned kernel classifiers \( \sum_{i=1}^{25} \alpha_i H(\cdot, \bc_i) \) on the training set (see the rightmost plot in Figure \ref{fig:example_image}).

The first plot illustrates the logarithm of the training error over 500,000 iterations for various \( p \) values and learning rates. To provide a clearer view of convergence trends, a zoomed-in plot of the log error versus iterations for up to 100,000 steps is also included. A general trend observed across the plots indicates that as \( p \) decreases, the approximation quality improves and the training error decreases. This effect is also visible in the predictions on the training points in the rightmost plot of Figure \ref{fig:example_image}. This improvement can be attributed to the expansion of the space as \( p \) decreases, as \( \|\alpha\|_p \) is an increasing function of \( p \). For a fixed norm \( D > 0 \), the space \( \ell_p^k \) grows larger with decreasing \( p \), allowing for richer approximations within the space.

\tt{Comparison with Direct Matrix Solution:} We also compare the predictions obtained using the solution from NumPy (depicted by the gray curve in the rightmost plot of Figure \ref{fig:example_image}). In this comparison, the NumPy solution solves the system of linear equations \( K_g \cdot \alpha = Y \), where \( K_g \) is the similarity matrix computed over the (training, centers) pairs using a Gaussian kernel, specifically
$$
K_g(x_i, c_j) = \exp\left(-\frac{\|x_i - c_j\|_2^2}{\sigma^2}\right),
$$
for training point and center pair \( (x_i, c_j) \). Here, \( K_g \) is a similarity matrix of dimension \( 800 \times 25 \). The prediction plot uses the best bandwidth \( \sigma^2 \) for the Gaussian kernel, chosen to yield optimal performance across different trials.




\acks{
\noindent We acknowledge support from the National Science Foundation (NSF) and the Simons Foundation for the Collaboration on the Theoretical Foundations of Deep Learning through awards DMS-2031883 and \#814639,  the  TILOS institute (NSF CCF-2112665), and the Office of Naval Research (N8644-NV-ONR). This work was started when P.P. was a postdoctoral fellow at the Halıcıoğlu Data Science Insititute at UC San Diego.
}
\newpage

\appendix
\section{Technical Proofs}\label{app: technical}

\subsection{Mirror maps: Proof of \lemref{lemma: inversemirror}}
First, we state a useful lemma on the first-order optimality condition for optimization in functional analysis.
\begin{lemma}[{\cite[Chapter 8, Lemma 1]{luen}}]\label{lemma: firstorder}
    Let $\msf F$ be a Fr\'{e}chet differentiable convex functional on a real
normed space $\cB$. Let $\cC'$ be a convex cone in $\cB$. A necessary and sufficient condition that $f_0 \in \cP$  minimizes $\msf F$ over $\cC'$ is that
\begin{align*}
    \partial_{f_0}\msf F (f) &\ge 0\qquad \forall\, f \in \cC',\\
    \partial_{f_0}\msf F (f_0) &= 0.
\end{align*}
\end{lemma}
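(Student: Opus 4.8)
The plan is to prove the two implications separately, since the statement is an ``if and only if''. The only structural facts I need are that $\cC'$, being a convex cone containing $f_0$, is closed under non-negative scalar multiples and under addition, so that $f_0 + \alpha f \in \cC'$ for every $f \in \cC'$ and $\alpha \ge 0$, and also $(1 - \alpha) f_0 \in \cC'$ for $\alpha \in [0,1]$; and that Fr\'echet differentiability makes $\partial_{f_0}\msf F$ a bounded linear functional whose value on a direction $h$ equals the one-sided directional limit $\lim_{\alpha \to 0^+}\alpha^{-1}\bracket{\msf F(f_0 + \alpha h) - \msf F(f_0)}$.

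For necessity, I would assume $f_0$ minimizes $\msf F$ over $\cC'$. Fixing $f \in \cC'$ and considering $\phi(\alpha) := \msf F(f_0 + \alpha f)$ for $\alpha \ge 0$, the curve stays in $\cC'$ by the cone property, so $\phi(\alpha) \ge \phi(0)$; dividing by $\alpha$ and letting $\alpha \to 0^+$ yields $\partial_{f_0}\msf F(f) \ge 0$, which is the first condition. To get $\partial_{f_0}\msf F(f_0) = 0$, I would specialize this to $f = f_0$ to obtain $\partial_{f_0}\msf F(f_0) \ge 0$, and then move in the admissible direction $-f_0$: since $(1 - \alpha) f_0 \in \cC'$ for small $\alpha > 0$, minimality together with linearity gives $\partial_{f_0}\msf F(-f_0) = -\partial_{f_0}\msf F(f_0) \ge 0$, hence $\partial_{f_0}\msf F(f_0) \le 0$. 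The two inequalities combine to the claimed equality.

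For sufficiency, I would assume both conditions hold and invoke the first-order characterization of convexity: for a convex Fr\'echet-differentiable $\msf F$ one has $\msf F(f) \ge \msf F(f_0) + \partial_{f_0}\msf F(f - f_0)$ for all $f \in \cB$ (which itself follows by taking $\alpha \to 0^+$ in the convexity inequality $\msf F(f_0 + \alpha(f - f_0)) \le (1-\alpha)\msf F(f_0) + \alpha \msf F(f)$). For $f \in \cC'$, linearity of the differential gives $\partial_{f_0}\msf F(f - f_0) = \partial_{f_0}\msf F(f) - \partial_{f_0}\msf F(f_0) = \partial_{f_0}\msf F(f) \ge 0$, using the first condition and the vanishing of $\partial_{f_0}\msf F(f_0)$. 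Therefore $\msf F(f) \ge \msf F(f_0)$ for every $f \in \cC'$, i.e.\ $f_0$ is a minimizer.

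I expect the only delicate point to be bookkeeping with the cone structure rather than any analytic difficulty: one must verify that the perturbed points genuinely remain in $\cC'$ (both the ``outward'' direction $f_0 + \alpha f$ and the ``inward'' direction $(1-\alpha)f_0$ needed to upgrade $\partial_{f_0}\msf F(f_0) \ge 0$ to an equality), and that the one-sided directional derivatives may be identified with the Fr\'echet differential applied to the direction. Both are immediate from the definitions, so no substantive obstacle remains; convexity is used only for sufficiency and the cone property only for necessity.
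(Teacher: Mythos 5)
Your proof is correct: the cone structure supplies exactly the admissible perturbations needed for necessity (the outward directions $f_0+\alpha f$ and the inward direction $(1-\alpha)f_0$, which together force $\partial_{f_0}\msf F(f_0)=0$), and the first-order convexity inequality $\msf F(f)\ge \msf F(f_0)+\partial_{f_0}\msf F(f-f_0)$ gives sufficiency. The paper itself offers no proof of this lemma---it is imported directly from Luenberger (Chapter 8, Lemma 1)---and your argument is the standard textbook one, so there is nothing to flag.
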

We are interested in the case where the convex cone of interest is the space $\cB$ itself. The lemma above is stated for any real normed space $\cB$. Given that we are interested in Banach spaces, we could state the following more precise statement on the minimum of the functional $\msf F$.
\begin{proposition}[{\cite[Section 7.4, Theorem 1]{luen}}]
Let the real-valued functional \( \msf F \) have a G\^ateaux differential on a vector space \( \cB \). A necessary condition for \( \msf F \) to have an extremum at \( f_0 \in \cB \) is that \( \partial_{f_0} \msf F(h) = 0 \) for all \( h \in \cB \).
\end{proposition}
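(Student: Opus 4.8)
The plan is to reduce the infinite-dimensional extremum condition to an elementary single-variable statement via the Gâteaux differential. First I would fix an arbitrary direction $h \in \cB$ and define the scalar function $\phi : \reals \to \reals$ by $\phi(\gamma) := \msf F(f_0 + \gamma h)$. Since $\cB$ is a vector space, $f_0 + \gamma h$ lies in $\cB$ for every $\gamma \in \reals$, so $\phi$ is well-defined on an open interval around $\gamma = 0$ (indeed on all of $\reals$), and $\gamma = 0$ is an interior point of its domain.

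Next I would record two facts about $\phi$ at $\gamma = 0$. On the one hand, because $\msf F$ attains an extremum at $f_0$, the restriction $\phi$ attains an extremum of the same type at $\gamma = 0$; this is immediate from the definition of $\phi$. On the other hand, the existence of the Gâteaux differential $\partial_{f_0}\msf F(h)$ is exactly the assertion that $\lim_{\gamma \to 0}\frac{1}{\gamma}\paren{\msf F(f_0 + \gamma h) - \msf F(f_0)}$ exists, which says precisely that $\phi$ is differentiable at $0$ with $\phi'(0) = \partial_{f_0}\msf F(h)$.

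Then I would invoke Fermat's theorem from single-variable calculus: a function differentiable at an interior extremum point must have vanishing derivative there. Applying this to $\phi$ gives $\phi'(0) = 0$, hence $\partial_{f_0}\msf F(h) = 0$. Since $h$ was arbitrary, this holds for every $h \in \cB$, which is the desired conclusion.

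The argument involves no genuine obstacle; the only points needing care are the legitimacy of the one-dimensional reduction, namely that $0$ is an interior point of the line $\{f_0 + \gamma h : \gamma \in \reals\}$, guaranteed by $\cB$ being a full vector space rather than a constrained subset, and the identification of the Gâteaux differential with the ordinary derivative of $\phi$ at $0$, which is a matter of unwinding definitions. It is worth noting the contrast with the constrained first-order condition of \lemref{lemma: firstorder}: there the admissible directions form only a convex cone, so the differential is forced merely to be sign-definite; here both $h$ and $-h$ are admissible, and it is this symmetry that upgrades the conclusion to the vanishing of the differential.
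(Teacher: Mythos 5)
Your proposal is correct. The paper does not prove this proposition—it is quoted verbatim as a cited result from Luenberger—and your argument (restrict \(\msf F\) to the line \(\gamma \mapsto f_0 + \gamma h\), observe that the Gâteaux differential is exactly \(\phi'(0)\), and apply Fermat's interior-extremum theorem) is precisely the classical proof given in that reference, including the correct observation that the full vector-space structure is what makes \(0\) an interior point in every direction and upgrades the cone inequality of the constrained case to an equality.
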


\begin{proof}{ of \lemref{lemma: inversemirror}:} It is straightforward to see that the strict convexity and G\^{a}teaux differentiability of \(\msf F\) imply its injectivity. However, we present an alternative proof that introduces notations used to demonstrate surjectivity. To establish the injectivity of the operator \(\partial_{(\cdot)} \msf F\), we consider the convex conjugate of \(\msf F\), denoted \(\msf F^*\). Note that \(\msf F^*: \cB^* \to \mathbb{R}\), and
    \begin{align}
        \forall \bg{}\in\cB^*,\, \msf F^*(\bg{}) := - \inf_{f \in \cB} (-\dist{f}{\bg{}} + \msf F(f)) \label{eq: conjugate}
    \end{align}
    We denote $\hat{\msf F}_{\bg{}}(f) := -\dist{f}{\bg{}} + \msf F(f)$.
    Note that since $\msf F$ is strictly convex, thus $\hat{\msf F}_{\bg{}}$ is strictly convex in $f$. Thus, the objective $\hat{\msf F}_{\bg{}}(f)$ can only have a \tt{unique} minimizer.
    
    Assume that there exists $\hat{f} \in \cB$ such that $\bg{} := \partial_{\hat{f}} \msf F$. Using the first-order optimality condition in \lemref{lemma: firstorder}, $\hat{f}$ is the minimizer of $\hat{\msf F}_{\bg{}}$, but then it is the unique minimizer. This implies that there doesn't exist $f' \neq \hat{f}$ such that $\partial_{f'} \msf F = \partial_{\hat{f}} \msf F$. This completes the proof of the injectivity of the operator $\partial_{(\cdot)} \msf F$.
    
    Now, we would establish the surjectivity of the operator $\partial_{(\cdot)} \msf F$ over $\cB^*$. Using Theorem 2.3.3~\cite{Zlinescu2002ConvexAI}, $\msf F^{**} = \msf F$, i.e.
    \begin{align}
        \forall f' \in \cB,\,\msf F(f') := - \inf_{\hat{\bg{}} \in \cB^*} (-\dist{f'}{\hat{\bg{}}} + \msf F^*(\hat{\bg{}})) \label{eq: dualconjugate}
    \end{align}
    Consider a linear transformation $\bg{}^* \in \cB^*$. Assume that $\hat{\msf F}_{\bg{}^*}$ is minimized at $f_0 \in \cB$. Thus,
    \begin{align}
        \msf F^*(\bg{}^*) = \dist{f_0}{\bg{}^*} - \msf F(f_0) \label{eq: l1}
    \end{align}
    Using \textbf{Young-Fenchel inequality} (Theorem 2.3.1 \cite{Zlinescu2002ConvexAI}), we note that for all $f \in \cB$,
    \begin{align}
        \msf F(f) + \msf F^*(\bg{}^*) \ge \dist{f}{\bg{}^*} \label{eq: l2}
    \end{align}
    Subtracting \eqnref{eq: l1} and \eqnref{eq: l2}, we observe that for all $f \in \cB$,
    \begin{equation*}
        \msf F(f) - \msf F(f_0) \ge \dist{f - f_0}{\bg{}^*}
    \end{equation*}
    This implies that $\bg{}^* \in \partial_{f_0} \msf F$, i.e $\bg{}^*$ is in the subdifferential set of $\partial_{f_0} \msf F$. Since $\msf F$ is G\^{a}teaux differentiable at $f_0$ it must be the case that $\bg{}^* = \partial_{f_0} \msf F$.

    Given that $\bg{}^*$ was picked arbitrarily, we have shown that for any $\bg{}^* \in \cB^*$ there exists $f_0 \in \cB$ such that $\bg{}^* = \partial_{f_0} \msf F$. This completes the proof of the surjectivity of the operator $\partial_{(\cdot)} \msf F$ over $\cB^*$.
\end{proof}
Using \eqnref{eq: l1} and \eqnref{eq: l2}, we note that the subjectivity of the mirror map $\Phi$ does not necessarily require it is G\^ateaux  differentiable.
\vspace{-2mm}

\subsection{Existence of a smooth and strongly convex functional: Proof of \lemref{lemma: existence}}\label{app: smooth-convex}

In this Appendix, we show that for a functional $\msf F: \cC \to \real$ over a Banach space to be both $\mu$-strongly convex and $\gamma$-smooth for $\mu >0 $ and $\gamma < \infty$, it must be isomorphic to a Hilbert space.

The key idea of the proof is to consider the points of second differentiability of a functional $\msf F$. Essentially, we wish to look at functions $f \in \cB$ where a Taylor approximation is possible. If we show that there exists such a function then one can achieve equivalence of the Banach space $||\cdot||_{\cB}$ to a Hilbert space norm. 

We show that indeed such a point exists where a Taylor expansion to the second order is possible if the functional $\msf F$ is continuous and convex. Consider the following lemma: 
\begin{lemma}\label{lemma: taylor}
    Consider \( \cB \) be a separable Banach space, and let \( \msf F \) be a continuous convex function on \( \cB \). Then, there exists an $f \in \cB$, $g^* \in \partial_f \msf F$, and a bounded symmetric linear operator \( \msf M: \cB \to \cB^* \) such that for all $f' \in \cB$ and small scalar $\lambda > 0$, we can expand \(\msf F\) as follows
    \begin{equation}
\msf F(f + \lambda f') = \msf F(f) + \lambda \langle f',g^* \rangle_\cB + \frac{\lambda^2}{2} \langle f',\msf M(f') \rangle_\cB + o(\lambda^2) \quad (\lambda \to 0). \label{eq: taylor}
\end{equation}
\end{lemma}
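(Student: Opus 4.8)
The plan is to reduce the statement to the second-order differentiability theory for continuous convex functions on separable Banach spaces developed in \cite{Borwein1994SecondOD}, and then repackage its conclusion into the quadratic Taylor form of \eqnref{eq: taylor}. First I would record the first-order structure. Since $\cB$ is separable and $\msf F$ is continuous and convex, $\msf F$ is locally Lipschitz, and by Mazur's theorem its subdifferential $\partial_{(\cdot)}\msf F$ is single-valued (i.e.\ $\msf F$ is G\^ateaux differentiable) on a dense $G_\delta$ set $D \subseteq \cB$. On $D$ the gradient operator $f \mapsto \partial_f \msf F$ is a single-valued, locally bounded, monotone operator from $\cB$ into $\cB^*$, the local boundedness being inherited from the local Lipschitz property of $\msf F$.

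The core step is to extract a point of second-order differentiability. The theory of \cite{Borwein1994SecondOD} guarantees, for a continuous convex function on a separable Banach space, the existence of points $f$ at which the gradient map is itself G\^ateaux differentiable, i.e.\ there is a bounded linear operator $\msf M : \cB \to \cB^*$ with
\[
\partial_{f+\lambda f'} \msf F = \partial_f \msf F + \lambda\, \msf M(f') + o(\lambda), \qquad \lambda \to 0,
\]
for every direction $f' \in \cB$. Fixing such an $f$ and setting $g^* := \partial_f \msf F$, I would integrate this expansion of the gradient along the segment $t \mapsto f + t f'$: the scalar function $t \mapsto \msf F(f + t f')$ is convex and differentiable with derivative $\langle f', \partial_{f+tf'}\msf F \rangle_\cB$, so applying the fundamental theorem of calculus to $\int_0^\lambda \langle f', \partial_{f+t f'}\msf F\rangle_\cB\, dt$ and substituting the first-order gradient expansion yields exactly \eqnref{eq: taylor}.

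It then remains to verify the structural properties of $\msf M$. Boundedness is the local Lipschitz continuity of the gradient near $f$; positive semidefiniteness, $\langle f', \msf M(f')\rangle_\cB \ge 0$, is immediate from the monotonicity of $\partial_{(\cdot)}\msf F$ (which in turn will be what supplies the lower/upper bounds once strong convexity and $\gamma$-smoothness are invoked in \lemref{lemma: existence}); and symmetry, $\langle u, \msf M(v)\rangle_\cB = \langle v, \msf M(u)\rangle_\cB$, follows from the equality of mixed second derivatives of the two-variable convex function $(s,t) \mapsto \msf F(f + s u + t v)$ at a point of second-order differentiability, which is a Schwarz-type argument in two real variables.

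The hard part is the existence of even a single second-order differentiability point. In finite dimensions this is Alexandrov's theorem, obtained from Lebesgue-almost-everywhere twice differentiability; but a separable infinite-dimensional space carries no translation-invariant measure, so the almost-everywhere route is unavailable. The substitute is precisely the Baire-category and maximal-monotone-operator machinery of \cite{Borwein1994SecondOD}, which exploits separability to produce a (dense) set of such points. Everything downstream---boundedness, symmetry, semidefiniteness of $\msf M$, and the passage from the gradient expansion to the functional expansion---is routine convex analysis.
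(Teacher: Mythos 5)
Your proposal follows essentially the same route as the paper: the paper's entire proof is a one-line appeal to Theorem 4.1 of \cite{Borwein1994SecondOD}, asserting that the set of points admitting the expansion \eqnref{eq: taylor} is nonempty, and your argument rests on exactly the same second-order differentiability theory from that reference. The extra steps you supply (Mazur's theorem, integrating the gradient expansion along the segment, and the boundedness/symmetry checks on \( \msf M \)) are details the paper leaves implicit inside the citation rather than a genuinely different method.
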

\begin{proof}
    Using Theorem 4.1~\cite{Borwein1994SecondOD}, it is straight-forward that the set of functions $f \in \cB$ where \eqnref{eq: taylor} holds is non-empty implying the statement of the lemma.
\end{proof}

Now, we state a useful result on the generalization of Parallelogram law, which provides a characterization for showing a Banch space is isomorphic to a Hilbert space. Let $\sum_{\epsilon(n)}$ denote all possible sequences $(\epsilon_1,\ldots,\epsilon_n)$ of $\curlybracket{\pm 1}$'s.
\begin{proposition}[Proposition 3.1,~\cite{Kwapień1972}]\label{prop: kwapien}
A real or complex Banach space $\cB$ is isomorphic to a Hilbert space if and only if
there exists a constant $C>0$ such that
\[
    C^{-1} \sum_{i=1}^{n} \| f_i \|_{\cB}^2 \leq \frac{1}{2^n} \sum_{\epsilon(n)} \left( \left\| \sum_{i=1}^{n} \epsilon_i f_i \right\|^2 \right) \leq C \sum_{i=1}^{n} \| f_i \|_{\cB}^2,
    \]
for any positive integer $n > 1$ and any $f_1, f_2, \ldots, f_n$ in $\cB$.

\end{proposition}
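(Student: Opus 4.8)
The plan is to prove the two implications of the characterization separately, treating the ``only if'' direction as a direct computation and isolating the ``if'' direction as the substantive part. Throughout I write $\cH$ for a generic Hilbert space.

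For the \emph{forward} (``only if'') direction, suppose $\cB$ is isomorphic to a Hilbert space $\cH$ through a linear bijection $T:\cB\to\cH$ with $a\norm{f}_{\cB}\le\norm{Tf}_{\cH}\le b\norm{f}_{\cB}$ for all $f$. The first step is to record the exact averaging identity valid in any inner-product space: since the cross terms satisfy $\frac{1}{2^n}\sum_{\epsilon(n)}\epsilon_i\epsilon_j=\delta_{ij}$, expanding $\norm{\sum_i\epsilon_i h_i}_{\cH}^2=\sum_{i,j}\epsilon_i\epsilon_j\langle h_i,h_j\rangle_{\cH}$ and averaging gives $\frac{1}{2^n}\sum_{\epsilon(n)}\norm{\sum_i\epsilon_i h_i}_{\cH}^2=\sum_i\norm{h_i}_{\cH}^2$. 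Applying this with $h_i=Tf_i$ and sandwiching each side using the norm equivalence yields the two-sided inequality with constant $C=(b/a)^2$. This direction is pure bookkeeping.

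For the \emph{reverse} (``if'') direction, the first move is to recognize the two inequalities as standard geometric invariants: the right inequality says $\cB$ has (Rademacher) type $2$ and the left inequality says $\cB$ has cotype $2$, both with constant $\sqrt{C}$. Thus the task reduces to the genuine content of the proposition, namely that \emph{a Banach space with type $2$ and cotype $2$ is isomorphic to a Hilbert space}. My plan here is: (i) pass from Rademacher to Gaussian averages, which are two-sided comparable once the space has finite cotype, so that the hypotheses become the uniform equivalence $\mathbb{E}\norm{\sum_i g_i f_i}_{\cB}^2\asymp\sum_i\norm{f_i}_{\cB}^2$ for i.i.d.\ standard Gaussians $g_i$; (ii) use these uniform Gaussian bounds to control the $\gamma_2$ (Hilbert-factorization) norm of the identity operator on $\cB$, invoking the factorization machinery for $2$-summing operators (Pietsch factorization) together with the Gaussian comparison; and (iii) conclude that $\mathrm{id}_{\cB}$ factors as $\cB\xrightarrow{u}\cH\xrightarrow{v}\cB$ through a Hilbert space with $\norm{u}\,\norm{v}$ bounded in terms of $C$, so that $u$ is an isomorphism of $\cB$ onto a (necessarily Hilbertian) closed subspace and $\norm{u(\cdot)}_{\cH}$ is an equivalent Hilbert norm on $\cB$.

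The main obstacle is step (iii) of the reverse direction: manufacturing a genuine inner-product norm out of nothing but the averaging inequality. The forward direction and the reformulation into type/cotype are essentially formal, but the factorization of the identity through a Hilbert space is exactly the nontrivial heart of Kwapień's theorem and rests on the Gaussian/$\gamma_2$ machinery and Maurey--Pietsch factorization rather than on any elementary manipulation of the stated inequality. I would therefore expect to either invoke this factorization theory as a black box, or, if a self-contained argument is desired, to build the equivalent quadratic form as a weak-$*$ cluster point of averaged second-order forms over finite-dimensional subspaces and then verify the parallelogram law on it using the two-sided bound with its \emph{uniform} constant $C$ (the uniformity in $n$ and in the $f_i$ being precisely what makes the limiting form equivalent to $\norm{\cdot}_{\cB}$).
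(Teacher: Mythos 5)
The paper offers no proof of this proposition at all: it is imported verbatim as Proposition 3.1 of Kwapie\'n (1972) and used as a black box in the proof of Lemma~\ref{lemma: existence}. Your proposal is therefore necessarily a different route, since you attempt to reconstruct the result itself. Your forward direction is complete and correct: the averaging identity $\frac{1}{2^n}\sum_{\epsilon(n)}\bigl\|\sum_i\epsilon_i h_i\bigr\|_{\cH}^2=\sum_i\|h_i\|_{\cH}^2$ follows from $\frac{1}{2^n}\sum_{\epsilon(n)}\epsilon_i\epsilon_j=\delta_{ij}$, and sandwiching through the isomorphism $T$ gives the two-sided bound with $C=(b/a)^2$. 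Your reverse direction correctly identifies the two inequalities as the type-$2$ and cotype-$2$ conditions and correctly locates the substance of the theorem in the factorization of the identity through a Hilbert space; the chain you describe (Rademacher-to-Gaussian comparison under finite cotype, then Maurey--Pietsch/$\gamma_2$ factorization of $\mathrm{id}_{\cB}$ as $\cB\to\cH\to\cB$) is the standard modern proof of Kwapie\'n's theorem. What you have is an accurate and honest outline rather than a self-contained argument --- the factorization step is invoked, not proved --- but since the paper's own treatment is a bare citation, your account is strictly more informative than what it replaces. The one thing to watch if you were to flesh this out: the hypothesis as stated uses second moments of Rademacher averages, whereas type and cotype are often stated with first moments or with Gaussians, so the Kahane--Khintchine equivalences you allude to in step (i) are genuinely needed and should be cited explicitly rather than treated as cosmetic.
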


With this we proof the claim of \lemref{lemma: existence} in the following:

\begin{proof}[of \lemref{lemma: existence}]
    First, assume that $\cB$ is separable. The proof of the non-separable case extends from the separable case. 
    
    Assume for the sake of contradiction that there exists $\msf F: \cB \to \reals$ that is both $\mu$-strongly convex and $\gamma$-smooth for some $\mu > 0$ and $\gamma < \infty$. Using \lemref{lemma: taylor}, there exists $f$ and $g^* \in \partial_{f} \msf F$ such that for all $f' \in \cB$,
    \begin{equation}
\msf F(f + \lambda f') = \msf F(f) + \lambda \langle f',g^* \rangle + \frac{\lambda^2}{2} \langle f',\msf M(f') \rangle + o(\lambda^2) \quad (\lambda \to 0) \label{eq1}
\end{equation}
for a symmetric operator $\msf M$. Since $\msf F$ is $\gamma$-smooth, we have
\begin{align}
    \msf F(f' + \lambda f') \le \msf F(f') + \langle \lambda f', g^* \rangle + \frac{\gamma \lambda^2}{2}||f'||_\cB^2 \label{eq2}
\end{align}
Using $\mu$-strong convexity of $\msf F$ we get
\begin{align}
    \msf F(f' + \lambda f') \ge \msf F(f') + \langle \lambda f', g^* \rangle + \frac{\mu \lambda^2}{2}||f'||_\cB^2 \label{eq3}
\end{align}
Combining \eqnref{eq1}-\eqref{eq3}, we get
\begin{align}
   C_1||f'||_\cB^2  \le \langle f', \msf M(f') \rangle_{\cB} \le C_2||f'||_\cB^2 \label{eq: bound}
\end{align}
for some $0 < C_1 \le C_2$. Now, consider the following inner product $\langle\cdot,\cdot\rangle_{H}: \cB \times \cB \to \reals$ on the space $\cB$
\begin{align*}
    \forall h_1,h_2 \in \cB,\quad \langle h_1,h_2\rangle_{H} = \langle h_1, \msf M(h_2) \rangle_\cB
\end{align*}
Symmetry of $\langle\cdot,\cdot\rangle_{H}$ follows by the symmetry of the operator $\msf M$.
Using \eqnref{eq: bound} we have 
\begin{align*}
    C_1||f'||_\cB^2  \le \langle f', f' \rangle_{H} \le C_2||f'||_\cB^2 
\end{align*}
But alternately we can also write
\begin{align}
   \frac{1}{C_2} \langle f', f' \rangle_{H} \le ||f'||_\cB^2 \le  \frac{1}{C_1} \langle f', f' \rangle_{H}
\end{align}
Now, we show the condition for \propref{prop: kwapien} is true for the norm $||\cdot||_\cB$, that yields the isomorphism of $(\cB, ||\cdot||_{\cB})$ to $(\cB, ||\cdot||_{H})$.

Note that for any $n$ choice of functions $f_1,f_2,\ldots,f_n \in \cB$
\begin{align*}
 \sum_{\epsilon(n)} \left( \left\| \sum_{i=1}^{n} \epsilon_i f_i \right\|_H^2 \right)   = 2^{n} \sum_{i} \left\| f_i\right\|^2_H 
\end{align*}
But we have
\begin{align*}
 &C_2 \sum_{\epsilon(n)} \left( \left\| \sum_{i=1}^{n} \epsilon_i f_i \right\|_\cB^2 \right) \ge \sum_{\epsilon(n)} \left( \left\| \sum_{i=1}^{n} \epsilon_i f_i \right\|_H^2 \right)\\
 \implies  &C_2 \sum_{\epsilon(n)} \left( \left\| \sum_{i=1}^{n} \epsilon_i f_i \right\|_\cB^2 \right) \ge C_1 2^{n} \sum_{i} \left\| f_i\right\|^2_\cB  
\end{align*}
Similarly,
\begin{align*}
 &C_1 \sum_{\epsilon(n)} \left( \left\| \sum_{i=1}^{n} \epsilon_i f_i \right\|_\cB^2 \right) \le \sum_{\epsilon(n)} \left( \left\| \sum_{i=1}^{n} \epsilon_i f_i \right\|_H^2 \right)\\
 \implies  &C_1 \sum_{\epsilon(n)} \left( \left\| \sum_{i=1}^{n} \epsilon_i f_i \right\|_\cB^2 \right) \le C_2 2^{n} \sum_{i} \left\| f_i\right\|^2_\cB  
\end{align*}
Combining the equations, we have 
\begin{align*}
    C^{-1} \sum_{i} \left\| f_i\right\|^2_\cB   \le \frac{1}{2^n} \sum_{\epsilon(n)} \left( \left\| \sum_{i=1}^{n} \epsilon_i f_i \right\|_\cB^2 \right) \le C \sum_{i} \left\| f_i\right\|^2_\cB 
\end{align*}
where $C = \frac{C_2}{C_1}$. Thus, we have shown that $(\cB, ||\cdot||_\cB)$ is isomorphic to $(\cB, ||\cdot||_H)$ using \propref{prop: kwapien}.

Now, consider the non-separable case. Using the proof above every separable (open or closed) subspace $\cS \subset$ $\cB$ is isomorphic to a Hilbert space, which is sufficient to show that the space $(\cB, ||\cdot||_\cB)$ is isomorphic to a Hilbert space using \propref{prop: kwapien}.

For a separable subspace \(\cS \subset \cB\), denote by \(C_{\cS}\) the infimum of the constants that satisfy \propref{prop: kwapien}. Now, define:
\[
    C_{\sf{sup}} := \sup \left\{ C_{\cS} \,\middle|\, \cS \subset \cB, \text{ closed, separable} \right\}
\]

Note that if \(C_{\sf{sup}} < +\infty\), then it clearly satisfies the inequalities above for the whole space \(\cB\). Assume on the contrary that \(C_{\sf{sup}} = +\infty\), so for all \(n \in \mathbb{N}\) there exists a separable subspace \(\cS_n\) such that \(C_{\cS_n} \geq n\). 
Now, define by $\cS$ the closure of the span of countably many subsets $\cS_n$, i.e., $\cS = \overline{\operatorname{span} \left( \bigcup_{n} \cS_n \right)}.$

Since closure of countable union separable space is separable we note that \(\cS\) is separable and \(\cS_n \subset \cS\) for all \(n\). Thus, \(C_{\cS_n} \leq C_{\cS}\), so \(C_{\cS} \geq n\) for all \(n \in \mathbb{N}\). But this gives a contradiction because \(C_{\cS}\) is finite due to the construction. Thus, \(C_{\sf{sup}} < +\infty\), and hence the conditions for \propref{prop: kwapien} are satisfied, which implies that \((\cB, \|\cdot\|_{\cB})\) is isomorphic to a Hilbert space.
\end{proof}

\subsection{Proof of \lemref{lemma: natdirection}}\label{app: natdirection}

\begin{proof}
    First, note that for any \( f' \in \cC \), the map \( h \mapsto \mf D_{\Phi}(h, f') \) is convex. Secondly, note that for any \( f, f' \) we can write
    \begin{align}
      \partial_f \mf D_{\Phi}(\cdot, f') \in \partial_f \Phi - \partial_{f'} \Phi \label{eq: proj}
    \end{align}

    By definition, \(\Pi_{\cB_0}^{\Phi}(f')\) is the minimizer of \(\mf D_{\Phi}(f, f')\) in the first component over \(\cB_0 \cap \cC\). This means that for any \( g \in \cB_0 \cap \cC \),
    \begin{align*}
        \mf D_{\Phi}(\Pi_{\cB_0}^{\Phi}(f'), f') \le \mf D_{\Phi}(g, f')
    \end{align*}
    
    Using the first-order condition of convexity, we get
    \begin{align*}
        \inner{g - \Pi_{\cB_0}^{\Phi}(f'), \partial_{\Pi_{\cB_0}^{\Phi}(f')} \mf D_{\Phi}(\cdot, f')}_\cB \ge 0, \quad \forall\, g \in \cB_0 \cap \cC
    \end{align*}
    
    Substituting \eqref{eq: proj} into the above inequality, we have
    \begin{align*}
        \inner{g - \Pi_{\cB_0}^{\Phi}(f'), \partial_{\Pi_{\cB_0}^{\Phi}(f')} \Phi - \partial_{f'} \Phi}_\cB \ge 0
    \end{align*}
    
    Setting \( g = f \) in the above inequality, we get
    \begin{align*}
        \inner{f - \Pi_{\cB_0}^{\Phi}(f'), \partial_{\Pi_{\cB_0}^{\Phi}(f')} \Phi - \partial_{f'} \Phi}_\cB \ge 0
    \end{align*}
    
    This completes the proof.
\end{proof}

\newpage


\vskip 0.2in
\bibliography{ref}

\end{document}